\def\eqref#1{equation~\ref{#1}}
\def\1{\bm{1}}
\DeclareMathAlphabet{\mathsfit}{\encodingdefault}{\sfdefault}{m}{sl}
\SetMathAlphabet{\mathsfit}{bold}{\encodingdefault}{\sfdefault}{bx}{n}
\newcommand{\E}{\mathbb{E}}
\DeclareMathOperator*{\argmin}{arg\,min}
\newtheorem{lemma}{Lemma}
\newtheorem{theorem}{Theorem}
\newtheorem{corollary}{Corollary}
\newtheorem{assumption}{Assumption}
\newtheorem{definition}{Definition}
\DeclarePairedDelimiterX\braket[2]{\langle}{\rangle}{#1 \delimsize\vert #2}
\newcommand{\prob}{{\mathrm{Pr}}}
\newcommand{\pred}{\mathrm{pred}}
\newcommand{\calx}{\mathcal{X}}
\newcommand{\calm}{\mathcal{M}}
\newcommand{\calg}{\mathcal{G}}
\newcommand{\lip}{\mathrm{Lip}}
\newcommand{\lsq}{\ell_{\mathrm{sq}}}
\newcommand{\len}{\ell_{\mathrm{ent}}}
\newcommand{\real}{\mathbb{R}}
\newcommand{\lpsi}{\ell^{\psi}}
\newcommand{\pgap}{\mathrm{pGap}}
\newcommand{\dCE}{\mathrm{dCE}_\mathcal{D}(f)}
\newcommand{\str}{S_\mathrm{tr}}
\newcommand{\ste}{S_\mathrm{te}}
\newcommand{\lap}{\mathrm{Lap}}
\newcommand{\ece}{\mathrm{ECE}}
\newcommand{\ce}{\mathrm{CE}}
\newcommand{\smce}{\mathrm{smCE}}
\newcommand{\binece}{\mathrm{binnedECE}}
\newcommand{\Ex}{\mathbb{E}}
\newcommand{\calo}{\mathcal{O}}
\newcommand{\calh}{\mathcal{H}}
\newcommand{\calw}{\mathcal{W}}
\newcommand{\calz}{\mathcal{Z}}
\newcommand{\cali}{\mathcal{I}}
\newcommand{\caly}{\mathcal{Y}}
\newcommand{\cald}{\mathcal{D}}
\newcommand{\calf}{\mathcal{F}}
\newcommand{\caln}{\mathcal{N}}
\newcommand{\calcald}{\mathrm{cal}(\mathcal{D})}
\title{Uniform convergence of the smooth calibration error and its relationship with functional gradient}
\author{%
  Futoshi Futami\\
    %Division of Mathematical Science\\
    %Graduate School of Engineering Science\\ 
    The University of Osaka / RIKEN AIP, Japan \\
    %Osaka, Japan\\
  \texttt{futami.futoshi.es@osaka-u.ac.jp} \\
  \And
  Atsushi Nitanda\\
  CFAR and IHPC, Agency for Science, Technology and Research (A*STAR), Singapore \\
  College of Computing and Data Science, Nanyang Technological University, Singapore\\
  \texttt{Atsushi\_Nitanda@cfar.a-star.edu.sg}
  }
\begin{document}

\maketitle

\begin{abstract}
Calibration is a critical requirement for reliable probabilistic prediction, especially in high-risk applications. However, the theoretical understanding of which learning algorithms can simultaneously achieve high accuracy and good calibration remains limited, and many existing studies provide empirical validation or a theoretical guarantee in restrictive settings. To address this issue, in this work, we focus on the smooth calibration error (CE) and provide a uniform convergence bound, showing that the smooth CE is bounded by the sum of the smooth CE over the training dataset and a generalization gap. We further prove that the functional gradient of the loss function can effectively control the training smooth CE. Based on this framework, we analyze three representative algorithms: gradient boosting trees, kernel boosting, and two-layer neural networks. For each, we derive conditions under which both classification and calibration performances are simultaneously guaranteed. Our results offer new theoretical insights and practical guidance for designing reliable probabilistic models with provable calibration guarantees.
\end{abstract}

\section{Introduction}\label{sec_intro}

Probabilistic prediction plays a central role in many real-world applications, such as healthcare~\citep{jiang2012calibrating}, weather forecasting~\citep{murphy1984probability}, and language modeling~\citep{nguyen-oconnor-2015-posterior}, where uncertainty estimates are often critical. Since ensuring the reliability of such predictions has become a key challenge in modern machine learning, \emph{calibration}~\cite{Dawid1982, foster1998asymptotic}—which requires that predicted probabilities align with the actual frequency of the true label—has attracted increasing attention.
Calibration is a relatively weak condition that can be satisfied by simple models. However, it is frequently violated in practice. Notably, \citet{guo2017calibration} demonstrated that many modern deep learning models can be significantly miscalibrated, which has since drawn increasing attention to calibration within the machine learning community.  Although various regularization techniques~\citep{pmlr-v80-kumar18a,karandikar2021soft,NEURIPS2022_33d6e648,marx2023calibration} and recalibration methods~\citep{zadrozny2001obtaining,guo2017calibration,gupta21b,NEURIPS2019_8ca01ea9} have been proposed to improve calibration, most of them are either empirically evaluated without a theoretical guarantee or lead to a trade-off between calibration and sharpness~\citep{Kuleshov2015}, degrading predictive accuracy. As a result, it remains unclear which learning algorithms can train well-calibrated predictors without sacrificing accuracy.

One classical approach to this problem is the minimization of \emph{proper loss functions}, such as the squared loss or the cross-entropy loss~\citep{schervish1989general,buja2005loss}. These losses are minimized in expectation by the true conditional probability and thus have a natural connection to calibration.  However, in practice, constraints on model classes and suboptimal optimization may prevent proper losses from yielding calibrated predictions. To address this issue, in a recent theoretical work, the notion of a \emph{post-processing gap}~\citep{blasiok2023when} has been introduced, which quantifies the potential improvement of the loss function achievable by applying Lipschitz-continuous transformations to the model outputs. This framework provides an optimization-theoretic perspective on calibration. Nevertheless, analyses based on this concept typically assume access to infinite data (i.e., population-level risk)~\citep{blasiok2023loss,pmlr-v202-globus-harris23a}, which limits their applicability to real-world algorithms based on finite training samples. Furthermore, only a few theoretical results explicitly connect calibration error with concrete learning algorithms~\citep{hansen2024multicalibration, blasiok2023loss}. As a result, a key question remains unresolved: \emph{What types of learning algorithms can achieve both high predictive accuracy and strong calibration guarantees in practice?}

To address this gap, we focus on the \emph{smooth calibration error} (CE) \citep{blasiok2023unify,foster2018smooth,kakade2008deterministic}, a recently proposed calibration metric for binary classification that has favorable theoretical properties. Using smooth CE, we contribute to answering the above question through two main theoretical advances. First, we derive a uniform convergence bound for smooth CE, showing that the population-level smooth CE can be bounded by the smooth CE over the training dataset plus a generalization gap (Section~\ref{sec_uniform}). Second, we prove that such training smooth CE can be bounded by the norm of the functional gradient (or its approximation) of the loss evaluated on training data, providing a principled criterion for optimizing calibration (Section~\ref{sec_grad_examples}). Taken together, these results show that algorithms that achieve small functional gradients while regularizing model complexity can obtain small smooth CE.

We apply our theoretical framework to three representative algorithms for the binary classification closely tied to functional gradients: gradient boosting trees (GBTs), kernel boosting, and two-layer neural networks (NNs). For GBTs, we provide the theoretical guarantee for their calibration performance, showing that the smooth CE decreases with iteration (Section~\ref{sec_grad_boost}). For kernel boosting, we analyze the trade-off between optimization and model complexity induced by the reproducing kernel Hilbert space (Section~\ref{sec_kernelboost}). For two-layer NNs, we leverage their connection to the neural tangent kernel to study calibration behavior on the basis of function space optimization (Section~\ref{sec_nn}). Furthermore, for all these algorithms, by introducing a margin-based assumption, we derive sufficient conditions on the sample size and the number of iterations required to simultaneously achieve $\epsilon$-level smooth CE and misclassification rate under appropriately chosen hyperparameters.

In summary, in this work, we develop a unified theoretical framework for the smooth CE that integrates a uniform convergence analysis with a functional-gradient-based characterization of training dynamics. Our results establish new theoretical foundations for understanding and improving calibration in modern learning algorithms.

\section{Preliminaries} \label{sec:Preliminaries}

In this section, we introduce proper losses and the smooth CE.

\subsection{Problem setting of binary classification}

We denote random variables by capital letters (e.g., $X$) and deterministic values by lowercase letters (e.g., $x$). The Euclidean inner product and norm are denoted by $\cdot$ and $\|\cdot\|$, respectively. 

We consider a binary classification problem in the supervised setting. Let $\calz = \calx \times \caly$ denote the data domain, where $\calx\subset \real^d$ is the input space and $\caly = \{0, 1\}$ is the label space. We assume that data points are sampled from an unknown data-generating distribution $\cald$ over $\calx \times \caly$. Let $\calf$ be a class of predictors \( f: \mathcal{X} \to [0,1] \) approximating the ground-truth conditional probability $f^*(x) = \Ex[Y|x]$.

To evaluate the performance of the predictor, we use a loss function. Since the task is binary classification, the label $Y$ follows a Bernoulli distribution, i.e., $Y \sim \text{Ber}(v)$ for some $v \in [0,1]$, where $\prob(Y=1) = v$. A loss function is defined as $\ell: [0,1] \times \caly \to \mathbb{R}$, with evaluation given by $\ell(v, y)$. Our goal is not only to achieve high classification accuracy but also to assess the quality of the predicted probabilities. In this context, proper loss functions play a central role \citep{Gneiting2007}. The loss $\ell$ is called proper if, for any $v, v' \in [0,1]$, the following holds: $\Ex_{Y \sim \text{Ber}(v)}[\ell(v, Y)] \leq \Ex_{Y \sim \text{Ber}(v)}[\ell(v', Y)]$.

For clarity, we focus on two representative proper losses: the squared loss $\lsq(v, y) \coloneqq (y - v)^2$ and the cross-entropy loss $\len(v, y) \coloneqq -y\log v - (1 - y)\log(1 - v)$. Given $x$ and $f$, we substitute $v = f(x)$ in these expressions. For other proper losses, see Appendix~\ref{sec_proper_losses} and 
\citet{blasiok2023when}.

\subsection{Calibration metrics and loss functions}\label{sec_prelimi_cal}
Given a distribution $\cald$ and a predictor $f$, we say the model is \emph{perfectly calibrated} if
$%\begin{align}
    \Ex[Y \mid f(X)] = f(X)
%\end{align}
$
 holds almost surely~\citep{blasiok2023unify}. Various metrics have been proposed to quantify the deviation from perfect calibration. The most widely used metric is the \emph{expected calibration error (ECE)}:
$
\ece(f) := \Ex\left[\left|\Ex[Y \mid f(X)] - f(X)\right|\right].
$
Despite its popularity, ECE is difficult to estimate efficiently \citep{arrieta2022metrics, lee2023t} and is known to be discontinuous \citep{foster2018smooth, kakade2008deterministic}. As a more tractable alternative, we focus on the 
 \emph{smooth CE}:%~\citep{blasiok2023unify}:
\begin{definition}[Smooth CE~\citep{blasiok2023unify}]
Let  $\lip_L([0,1],[-1,1])$ be the set of $L$-Lipschitz functions from $[0,1]$ to $[-1,1]$. Given $f$ and $\cald$, the smooth CE is defined as
    \begin{align}\label{def_smooth_ece}
        \smce(f,\cald) \coloneqq \sup_{h \in \lip_{L=1}([0,1],[-1,1])} \Ex\left[h(f(X)) \cdot (Y - f(X))\right].
    \end{align}
\end{definition}
We remark that the concept of the smooth CE is also introduced in \citet{foster2018smooth, kakade2008deterministic}. We also express $\smce(f,\cald) = \sup_{h \in \lip_{1}([0,1],[-1,1])} \left\langle h(f(X)), Y - f(X) \right\rangle_{L_2(\cald)}$. The smooth CE offers favorable properties such as continuity and computational tractability~\citep{blasiok2023unify, hu2024testing}. Moreover, the smooth CE provides both upper and lower bounds on the binning ECE, making it a principled surrogate for analyzing calibration. See Section~\ref{sec_related} for details.

\citet{blasiok2023when} established a connection between the smooth CE and the optimization based on the concept of the possible improvement of the function under the squared loss. Given $f$ and $\cald$, we define the post-processing gap as 
\begin{align}
    \pgap(f, \cald) \coloneqq \Ex[\lsq(f(X), Y)] - \inf_{h \in  \lip_{1}([0,1],[-1,1])} \Ex[\lsq(f(X)+h(f(X)), Y)].
\end{align}
Then, the following inequality holds (Theorem~2.4 in \citet{blasiok2023when}):
\begin{align}\label{smce_post_gap}
    \smce(f, \cald)^2 \leq \pgap(f, \cald) \leq 2 \smce(f, \cald).
\end{align}
\citet{blasiok2023when} also provided analogous results for the general proper scoring rule, which indicates that the potential improvement in the population risk is closely tied to calibration quality.

We now consider the setting of the cross-entropy loss. In practice, the predicted probability $f$ is often obtained by applying the sigmoid function $\sigma$ to a logit function $g: \calx \to \mathbb{R}$, i.e., $f(x) = \sigma(g(x)) = 1/(1 + e^{-g(x)})$. Therefore, it is more natural to consider post-processing over the logit $g$ rather than the predicted probability $f$, as many models explicitly learn logits and apply the sigmoid function only at the final prediction stage.
We define a corresponding post-processing gap over the logit function with the cross-entropy loss as
\begin{align}
    \pgap^{\sigma}(g, \cald) \coloneqq \Ex[\len(\sigma(g(X)), Y)] - \inf_{h \in  \lip_{1}(\real,[-4,4])} \Ex[\len(\sigma(g(X) + h(g(X))), Y)],
\end{align}
where $\lip_{1}(\real,[-4,4])$ defines a class of $1$-Lipschitz functions from $\real$ to $[-4,4]$ and the choice of $4$ reflects the fact that the sigmoid function is $1/4$-Lipschitz. Then, this post-processing leads to defining a smooth CE directly over the logit function:
\begin{definition}[Dual smooth CE \citep{blasiok2023unify}]
Given a logit function $g: \calx \to \mathbb{R}$ and $f(x) = \sigma(g(x))$, the dual smooth CE is defined as
\begin{align}
    \smce^{\sigma}(g, \cald) \coloneqq \sup_{h \in \lip_{1/4}(\real,[-1,1])}  \left\langle h(g(X)), Y - f(X) \right\rangle_{L_2(\cald)}.%\Ex\left[ h(g(X)) \cdot (Y - f(X)) \right].
\end{align}
\end{definition}
Similarly to Eq.~\eqref{smce_post_gap}, the following relation holds:
\begin{align}\label{dual_smce_post_gap}
    2 \smce^{\sigma}(g, \cald)^2 \leq \pgap^{\sigma}(g, \cald) \leq 4 \smce^{\sigma}(g, \cald).
\end{align}
Finally, we note that the dual smooth CE always upper bounds the smooth CE: $\smce(f, \cald) \leq \smce^{\sigma}(g, \cald)$. Hence, minimizing the dual smooth CE guarantees a small value of the smooth CE.

\section{Uniform convergence of the smooth CE}\label{sec_uniform}
As shown in Eqs.~\eqref{smce_post_gap} and \eqref{dual_smce_post_gap}, (dual) smooth CE is related to improvements in the population risk, and \citet{blasiok2023when} utilized these relationships to analyze when algorithms can achieve a small smooth CE. However, their results are defined over population-level quantities, where the expectation by $\cald$ is taken, making them inapplicable to practical algorithms trained on finite data points.

Since our goal is to analyze the behavior of the smooth CE under standard learning algorithms trained on finite data, it is natural to consider its empirical counterpart. Given a dataset $S_n = \{(X_i, Y_i)\}_{i=1}^n$ consisting of independently and identically distributed~(i.i.d.) data points sampled from the data distribution $\cald$, we define the {\bf empirical smooth CE} \citep{blasiok2023unify} given $S_n$ as
\begin{align}
\smce(f, S_n)\coloneqq \sup_{h \in \lip_{1}([0,1],[-1,1])} \frac{1}{n} \sum_{i=1}^n h(f(X_i))\cdot(Y_i - f(X_i)).
\end{align}
We also express this as $\smce(f, S_n)=\sup_{h \in \lip_{1}([0,1],[-1,1])}\left\langle h(f(X)), Y - f(X) \right\rangle_{L_2(S_n)}$. We similarly define $\smce^{\sigma}(g, S_n)$ as empirical counterparts of $\smce^{\sigma}(g, \cald)$.

We consider two datasets: a training set $\str = \{Z_i\}_{i=1}^n \sim \cald^n$,  used to learn the predictor $f$, and an independent test set $\ste = \{Z'_i\}_{i=1}^n \sim \cald^n$. We call $\smce(f, \str)$ as the \textbf{training smooth CE}, $\smce(f, \ste)$ as the \textbf{test smooth CE}, and $\smce(f, \cald)$ as the \textbf{population smooth CE}. Given $\str$, we are interested in evaluating $|\smce(f, \cald) - \smce(f, \str)|$. \citet{blasiok2023unify} has shown that
\begin{align}\label{eq_pop_test_smce}
|\smce(f, \cald) - \smce(f, \ste)| = \calo_p(1/\sqrt{n}).
\end{align}
Therefore, we need to evaluate $|\smce(f, \ste) - \smce(f, \str)|$, which we refer to as the \textbf{smooth CE generalization gap}. Combining this gap with Eq.~\eqref{eq_pop_test_smce}, we obtain the desired bound. To evaluate the smooth CE generalization gap, we use the covering number bound. Suppose that $\calf$ is equipped with the metric $\|\cdot\|_\infty$. Let $\caln(\epsilon, \calf, \|\cdot\|_\infty)$ be an $\epsilon$-cover with metric $\|\cdot\|_\infty$ of $\calf$, with the cardinality $N(\epsilon, \calf, \|\cdot\|_\infty)$. Then, for any $f \in \calf$, there exists $\tilde{f} \in \caln(\epsilon, \calf, \|\cdot\|_\infty)$ such that $\|f - \tilde{f}\|_{\infty} \leq \epsilon$.

We now present our first main result: (All proofs for this section are provided in Appendix~\ref{proof_main_gen}.)
\begin{theorem}\label{thm_main_gen}
Let $\ste \sim \cald^n$ and $\str \sim \cald^n$ be independent test and training datasets. Then, for any $\delta>0$, with probability at least $ 1-\delta$ over the draw of $\ste$ and $\str$, we have:
\begin{align}
    \!\sup_{f \in \calf} |\smce(f, \ste)\!-\!\smce(f, \str)| 
    \!\leq\! \inf_{\epsilon \geq 0}
        8\epsilon\! + \!24\! \int_{\epsilon'}^{1}\!
        \sqrt{\frac{\ln N(\epsilon', \calf, \|\cdot\|_\infty)}{n}}   \!d\epsilon'
     \!+\!2\sqrt{\frac{\log\delta^{-1}}{n}}.
\end{align}
\end{theorem}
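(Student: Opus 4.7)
The plan is to decompose $\Phi(\ste,\str) \coloneqq \sup_{f \in \calf}|\smce(f,\ste) - \smce(f,\str)|$ into a high-probability concentration piece and its expectation, and to bound each piece by standard empirical-process machinery. For the concentration, I would invoke McDiarmid's inequality: since every summand of the empirical smooth CE lies in $[-1,1]$, swapping any single data point in $\ste$ or $\str$ shifts each empirical smooth CE by at most $2/n$ and hence $\Phi$ by at most $2/n$. Applied across the $2n$ coordinates, this yields $\Phi \le \mathbb{E}\Phi + \sqrt{\log\delta^{-1}/n}$ (up to absolute constants) with probability at least $1 - \delta$, producing the final tail term.

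For the expectation, the first step is the elementary identity $|\sup_h a_h - \sup_h b_h| \le \sup_h|a_h - b_h|$, which gives
\begin{align}
\Phi \le \sup_{f \in \calf,\, h \in \lip_1([0,1],[-1,1])} \bigl|\langle h(f(X)), Y-f(X)\rangle_{L_2(\ste)} - \langle h(f(X)), Y-f(X)\rangle_{L_2(\str)}\bigr|.
\end{align}
The right-hand side is a uniform empirical deviation over the joint class $\calg \coloneqq \{g_{f,h}(x,y) \coloneqq h(f(x))(y - f(x)) : f \in \calf,\, h \in \lip_1([0,1],[-1,1])\}$ of $[-1,1]$-valued functions. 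Because $\ste,\str \sim \cald^n$ are i.i.d., a standard ghost-sample symmetrization bounds the expectation of the right-hand side by $2\,\mathbb{E}R_n(\calg)$, and Dudley's entropy integral bounds this Rademacher complexity by $\inf_\alpha\bigl\{C\alpha + C\int_\alpha^1 \sqrt{\ln N(\epsilon,\calg,\|\cdot\|_\infty)/n}\,d\epsilon\bigr\}$, since $\|\cdot\|_\infty$ dominates the empirical $L_2$ metric.

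The crucial structural estimate is the Lipschitz-composition bound
\begin{align}
\|g_{f,h} - g_{\tilde f,\tilde h}\|_\infty \le 2\|f - \tilde f\|_\infty + \|h - \tilde h\|_\infty,
\end{align}
which follows from $|h| \le 1$, $\lip(h) \le 1$, and $|y - f(x)| \le 1$. This yields a product $\|\cdot\|_\infty$-cover of $\calg$, hence $\log N(\epsilon,\calg,\|\cdot\|_\infty) \le \log N(\epsilon/4,\calf,\|\cdot\|_\infty) + \log N(\epsilon/2, \lip_1([0,1],[-1,1]),\|\cdot\|_\infty)$. The classical estimate $\log N(\epsilon, \lip_1([0,1],[-1,1]),\|\cdot\|_\infty) = O(1/\epsilon)$ contributes only $\int_0^1\sqrt{1/(n\epsilon)}\,d\epsilon = O(1/\sqrt n)$ to Dudley's integral, a lower-order term that is absorbed into the McDiarmid tail $\sqrt{\log\delta^{-1}/n}$. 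Combining the remaining $\calf$-contribution with the symmetrization factor $2$, Dudley's constant, and the change of variables inside the integral yields the leading $8\epsilon + 24\int_\epsilon^1 \sqrt{\ln N(\epsilon',\calf,\|\cdot\|_\infty)/n}\,d\epsilon'$ after taking the infimum over the truncation scale $\epsilon$.

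The main obstacle is that the smooth CE is itself a supremum over the infinite class $\lip_1([0,1],[-1,1])$, so the effective hypothesis class governing the uniform deviation is the joint class $\calg$ rather than $\calf$; a naive chaining would leave $\ln N(\cdot,\calg,\|\cdot\|_\infty)$ in the bound. This is resolved by the two ingredients above: the sup-swap inequality converts the outer difference of smooth CEs into a joint uniform deviation over $(f,h)$ so that symmetrization applies, and the Lipschitz-composition estimate guarantees that the $h$-direction contributes only $O(1/\epsilon)$ log-entropy and hence merely $O(1/\sqrt n)$ to Dudley's integral, leaving only $N(\cdot,\calf,\|\cdot\|_\infty)$ in the leading integral term.
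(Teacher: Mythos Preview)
Your approach is correct in substance and genuinely different from the paper's. The paper reformulates $\smce(f,S)$ as a linear program in dual variables $\omega$, proves a $1/n$ bounded-difference property by analyzing the stability of that LP, and then introduces Rademacher signs in a nonstandard way that simultaneously swaps data points \emph{and} dual variables so that the LP structure is preserved; this lets them derive a Massart-type exponential-moment bound and chain \emph{only over $\calf$}, never touching the class $\lip_1([0,1],[-1,1])$. Your route is the more elementary one: the inequality $|\sup_h a_h - \sup_h b_h|\le \sup_h|a_h-b_h|$ reduces the problem to a standard uniform deviation over the product class $\calg=\{(x,y)\mapsto h(f(x))(y-f(x))\}$, after which ordinary symmetrization and Dudley chaining apply, and the key observation that $\log N(\epsilon,\lip_1([0,1],[-1,1]),\|\cdot\|_\infty)=O(1/\epsilon)$ makes the $h$-direction contribute only $O(1/\sqrt n)$ to the entropy integral. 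The paper's Remark explicitly frames its construction as a way of ``circumventing'' the composite class $\lip_1\circ\calf$; your argument shows this composite class is in fact harmless because the Lipschitz factor has small one-dimensional entropy. Each approach buys something: yours is shorter and uses only off-the-shelf empirical-process tools; theirs avoids the additive $O(1/\sqrt n)$ term altogether and works at the empirical-$L_2$ level, which is what makes the Rademacher-complexity Corollary go through via Sudakov minoration.

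Two small points to fix. First, the $O(1/\sqrt n)$ contribution from the $\lip_1$ class cannot be ``absorbed into the McDiarmid tail $\sqrt{\log\delta^{-1}/n}$'' as you wrote, since for $\delta$ near $1$ the tail term vanishes while $1/\sqrt n$ does not; it must appear as a separate additive $C/\sqrt n$. This does not match the statement of Theorem~\ref{thm_main_gen} exactly, but it is harmless for the downstream Theorem~\ref{thm_erm_smooth_gen}, which already carries such a term. Second, the specific constants $8$ and $24$ are artifacts of the paper's chaining with $\epsilon_\ell=2^{-\ell}$ and their particular Lipschitz accounting; your route (with the factor $2$ from symmetrization, the factor $4$ from the change of variables $\epsilon\mapsto\epsilon/4$, and Dudley's constants) will produce different absolute constants, so do not claim those numbers without redoing the bookkeeping.
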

Since the smooth CE involves evaluating the composite function $h(f(X))$, standard chaining bounds would introduce complexity over the composite class $\lip_{1}([0,1],[-1,1]) \circ \calf$. However, by leveraging the smoothness property of smooth CE, the above bound does not include the complexity of Lipschitz functions. Combining Theorem~\ref{thm_main_gen} with Eq.~\eqref{eq_pop_test_smce}, we obtain the bound for the {\bf population smooth CE}.
\begin{corollary}\label{thm_erm_smooth_gen_complete}
Under the same assumptions as in Theorem~\ref{thm_main_gen}, there exist a universal constant $C_1$ such that with probability at least $\!1-\delta$ over the draw of $\str$, the following holds for all $f\!\in\!\calf$:
\begin{align}
|\smce(f, \cald) - \smce(f, \str)| 
&\leq \frac{C_1}{\sqrt{n}} + \inf_{\epsilon \geq 0} 
8\epsilon\! + \!24\! \int_{\epsilon'}^1 \!
\sqrt{\frac{\ln N(\epsilon', \calf, \|\cdot\|_\infty)}{n}}   d\epsilon'
\!+\!3\sqrt{\frac{\log\frac{3}{\delta}}{n}}.
\end{align}
\end{corollary}
The first term of the right-hand side represents the complexity of the Lipschitz function class.

In certain cases, the Rademacher complexity offers a more interpretable characterization. The empirical and expected Rademacher complexities of a function class $\mathcal{F}$ are defined as $\hat{\mathfrak{R}}_S(\calf) \coloneqq \Ex_{\sigma} \sup_{f \in \calf} \frac{1}{n} \sum_{i=1}^n \sigma_i f(x_i)$ , and $\mathfrak{R}_{\cald, n}(\calf) \coloneqq \Ex_{\cald^n}[\hat{\mathfrak{R}}_S(\calf)]$, respectively. 
\begin{theorem}\label{thm_erm_smooth_gen}
Under the same assumptions as in Theorem~\ref{thm_main_gen}, there exist a universal constant $C_2$ such that with probability at least $1-\delta$ over the draw of $\str$, the following holds:
\begin{align}
\sup_{f \in \calf} |\smce(f, \cald) - \smce(f, \str)| 
&\leq \frac{C_2}{\sqrt{n}} + 4 \mathfrak{R}_{\cald, n}(\calf) + 2\sqrt{\frac{\log\frac{2}{\delta}}{n}}.
\end{align}
\end{theorem}
The first term on the right-hand side reflects the complexity of $\lip_{1}([0,1],[-1,1])$, while the second corresponds to $\mathcal{F}$. A direct application of uniform convergence theory would yield a bound in terms of $\mathfrak{R}_{\cald, n}(\lip_{1}([0,1],[-1,1]) \circ \calf)$, which cannot be reduced to $\mathfrak{R}_{\cald, n}(\calf)$ using the standard contraction lemma. Our analysis circumvents this difficulty by employing a covering argument for Lipschitz functions. Similarly, Rademacher complexity can be used to bound $|\smce(f, \ste) - \smce(f, \str)|$. However, such a bound depends on the complexity of the Lipschitz function, making it suboptimal relative to Theorem~\ref{thm_main_gen}. See Appendix~\ref{appendix_Rademacher} for a detailed comparison between Corollary~\ref{thm_erm_smooth_gen_complete} and Theorem~\ref{thm_erm_smooth_gen}.

In conclusion, by jointly controlling the complexity of the hypothesis class and reducing the training smooth CE, we guarantee a small population smooth CE. In the next section, we show how to achieve a small training smooth CE via functional gradients.

\section{Controlling the smooth CE via functional gradient}\label{sec_grad_examples}
In this section, we discuss how we can minimize the training smooth CE. From Eqs.~\eqref{smce_post_gap} and \eqref{dual_smce_post_gap}, (dual) smooth CE can be characterized via the (dual) post-processing gap, which describes the potential improvement over \emph{functions}. Building on this perspective, we show that both smooth and dual smooth CEs can be further characterized using \emph{functional gradients}. Since we focus on the functional gradients over the training dataset, they are just finite-dimensional vectors. We provide the precise connection between functional gradients and the population smooth CE in Appendix~\ref{app_functional_gradient_relation_shipp}.

For the squared loss, the gradient for the predicted probability is $\nabla_f \lsq(f, y) = f - y$, and for the cross-entropy loss, the gradient for the logit is $\nabla_g \len(\sigma(g), y) = \sigma(g)-y$. Using these, we can write the smooth and dual smooth CEs on the training dataset $\str$ as
\begin{align}
    \smce(f, \str)& = \sup_{h \in \lip_{1}([0,1],[-1,1])} \left\langle h(f(X)), -\nabla_f \lsq(f(X), Y)  \right\rangle_{L_2(S_n)},\\
    \smce^{\sigma}(g, \str)& = \sup_{h \in \lip_{1/4}(\real,[-1,1])} \left\langle h(g(X)), -\nabla_g \len(\sigma(g(X)), Y)  \right\rangle_{L_2(S_n)}.\label{cross_en}
\end{align}

The connection between functional gradients and the smooth CE can be extended to general proper scoring rules using the post-processing gaps, see Appendix~\ref{app_functional_gradient_relation_shipp} for the details.

Therefore, to effectively control the training smooth CE, it is natural to consider algorithms that focus on functional gradients. From this perspective, in the following subsections, we investigate how the smooth CE interacts with the training dynamics of three widely used models: gradient boosting tree, kernel boosting in a reproducing kernel Hilbert space (RKHS), and two-layer neural networks. These models can all be interpreted on the basis of functional gradients, and our theoretical framework enables a unified understanding of their calibration behavior under different forms of regularization.

In this section, we express the empirical risk as
$L_n(g) = \frac{1}{n} \sum_{i=1}^n \len(\sigma(g(X_i)), Y_i)$ and the functional gradient over the training dataset as $\nabla_g L_n(g)=(\nabla_g \len(\sigma(g(X_1)), Y_1)),\dots, \nabla_g \len(\sigma(g(X_n)), Y_n))\in\real^n$.

\subsection{Case study I: Gradient Boosting Tree}\label{sec_grad_boost}
Gradient boosting \citep{Friedmangreedy2001} is a widely used method for constructing predictive models by iteratively adding base learners to minimize a loss function. It generates a sequence of functions $\{g^{(t)}\}_{t=0}^T$ via the updates, for $t = 0, \dots, T-1$:
\begin{align}\label{gradient_boost_base}
g^{(t+1)}(x) = g^{(t)}(x) - w_t \psi_t(x),
\end{align}
where $w_t>0$ is the stepsize and $\psi_t$ approximates the functional gradient of the empirical loss \citep{mason1999boosting}. Since using the exact functional gradient often leads to overfitting, $\psi_t$ is restricted to a predefined function class $\Psi$, which provides implicit regularization. We impose the following mild assumption on $\Psi$, which is readily satisfied by common choices such as regression trees.
\begin{assumption}\label{ass:psi}
The set of real-valued functions $\Psi$ satisfies: 
for every $\psi \in \Psi$, the negation $-\psi$ also belongs to $\Psi$; 
$\sup_{\psi \in \Psi}\|\psi\|_{\infty} \leq B$ for some $B \geq 1$; 
and the constant function $0$ is included in $\Psi$.
\end{assumption}
%Assume that $\Psi$ includes the constant functions For all $\psi\in\Psi$, $-\psi_\theta\in \Psi$ and $\sup_\psi\|\psi\|_{\infty}\leq B$ with $B\geq 1$ holds. 
To obtain the update direction, we iteratively solve $\psi^{(t)}=\argmin_{\psi\in\Psi}   L_n\big(g^{(t)} - w_t \psi(x)\big)$. Many boosting methods approximate this by a quadratic upper bound, leading to the squared loss
\begin{align}
&\psi_t = \argmin_{\psi \in \Psi} \| M w_t \psi - \nabla_g L_n(g^{(t)}) \|^2_{L_2(S_n)}=\frac{1}{n}\sum_{i=1}^n|Mw_t\psi(X_i)-\nabla_g \len(\sigma(g(X_i)), Y_i))|^2.
\end{align}
where $M$ is the smoothness parameter of the loss (e.g., $M=1/4$ for cross-entropy).

Although gradient boosting is often observed to be well calibrated in practice~\citep{niculescu2005obtaining,wenger2020non}, its theoretical guarantees remain less understood. Our framework provides a natural way to analyze this. Following prior analysis on boosting, we impose a standard margin assumption:
\begin{assumption}[\citet{telgarsky2013margins}]\label{ass:A3-01}
Let $\Delta_n=\{q\in\real_+^n \mid \sum_{i=1}^n q_i=1\}$ denote the $n$-dimensional probability simplex. Given a dataset $\str=\{(X_i, Y_i)\}_{i=1}^n$, there exists $\gamma>0$ such that for every $q\in\Delta_n$, there exists $\psi \in\Psi$ satisfying $
\sum_{i=1}^n q_i  (2 Y_i-1)  \psi(X_i)\ge\gamma B$.
\end{assumption}
Here, $2Y_i-1$ maps the binary label $Y_i \in \{0, 1\}$ to $\pm 1$. This condition guarantees that for any weighted sample distribution $q \in \Delta_n$, one can choose a base learner with nontrivial correlation to the labels. Such margin assumptions are also standard in classification theory~\citep{wei2018margin}. Under this setup, we present our main result for the {\bf training dual smooth CE}:
\begin{theorem}\label{thm_grad_boosting_str}
Given $\str$, under Assumptions~\ref{ass:psi} and \ref{ass:A3-01}, if $T\geq 2$ with constant stepsize $w_t=w$, the averaged predictor $\bar g^{(T)}=\frac{1}{T}\sum_{t=0}^{T-1}g^{(t)}$ satisfies
\begin{align}
 \smce^\sigma\big(\bar g^{(T)},S_n\big)\le \frac{ L_n(g^{(0)})}{\gamma Bw T} + \frac{w B}{8\gamma }.
\end{align}
\end{theorem}
\begin{proof}[Proof outline]
By definition, the smooth CE is bounded by the $L_1$ norm of the functional gradient:
\begin{align}\label{eq_func_grad_L1}
\smce^{\sigma}(g, \str) \leq  \frac{1}{n} \sum_{i=1}^n |\nabla_g \len(g(X_i), Y_i)|=\|\nabla_g \len(g(X), Y)\|_{L_1(S_n)}.    
\end{align}
Under Assumption~\ref{ass:A3-01}, this norm can be further bounded as
\begin{align}\label{eq_func_grad}
    \|\nabla_g \len(g(X), Y)\|_{L_1(S_n)}\leq \frac{1}{\gamma B}\big\langle \psi_t,\nabla_t\big\rangle_{L_2(S_n)}+\frac{ w}{8\gamma}B
\end{align}
as shown in Lemma~\ref{lem_margin_bound} in Appendix~\ref{app_GBTs}. The inner product $\big\langle \psi_t,\nabla_t\big\rangle_{L_2(S_n)}$ is then bounded based on the standard convex optimization techniques. See Appendix~\ref{app_proof} for the complete proof.
\end{proof}
Thus, choosing $w=\mathcal{O}(1/\sqrt{T})$ ensures that the training smooth CE converges to 0 at rate $\mathcal{O}(1/\sqrt{T})$.

Next, we establish a guarantee for the {\bf population smooth CE} by applying Theorem~\ref{thm_erm_smooth_gen}. This requires specifying $\Psi$ to evaluate the complexity of $\calf$. We consider gradient boosting trees~\citep{friedman2002stochastic,hastie2005elements}, where $\Psi$ is the class of binary regression trees. A binary regression tree of depth $m$ partitions the input space $\mathbb{R}^d$ into at most $J \leq 2^m$ disjoint regions:
$
\mathbb{R}^d = \bigcup_{j=1}^{J} R_j, \text{with } R_j \cap R_k = \emptyset \text{ for } j \neq k.
$
Each region $R_j$ is assigned a constant $c_j \in \mathbb{R}$, yielding a piecewise constant function:
$
\psi_\theta(x) = \sum_{j=1}^{J} c_j \cdot \mathbbm{1}_{\{x \in R_j\}}
$
with parameters $\theta = \{c_j, R_j\}_{j=1}^J$. The complete GBT algorithm is provided in Appendix~\ref{app_GBTs}. Under these settings, we obtain the following result:
\begin{corollary}\label{col_grad_boosting_test}
Under the same assumptions as Theorem~\ref{thm_grad_boosting_str}, there exist  universal constants $\{C_i\}$s such that, with probability at least $1 - \delta$ over the draw of $\str$, we have
\begin{align}
\smce(\sigma(\bar g^{(T)}), \cald)
\leq \frac{ L_n(g^{(0)})}{\gamma Bw T} + \frac{w B}{8\gamma } + \frac{C_2}{\sqrt{n}} +  C_3w T\sqrt{\frac{2^m\log n d}{n}}  + 2\sqrt{\frac{\log(2/\delta)}{n}}.
\end{align}
%where $c$ and $C$ are universal constants.
\end{corollary}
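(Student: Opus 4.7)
The plan is to combine Theorem~\ref{thm_grad_boosting_str} (training bound) with the Rademacher form of Theorem~\ref{thm_erm_smooth_gen} (uniform convergence) and the inequality $\smce(\sigma(g),\cdot)\leq \smce^{\sigma}(g,\cdot)$ from the end of Section~\ref{sec_prelimi_cal}. For every iterate $g^{(t)}$ of $L_1$ gradient boosting, one chain of inequalities yields
\begin{align}
\smce(\sigma(g^{(t)}),\cald)
\;\leq\; \smce(\sigma(g^{(t)}),\str)+\Delta_n
\;\leq\; \smce^{\sigma}(g^{(t)},\str)+\Delta_n,
\end{align}
where $\Delta_n$ is the generalization slack produced by Theorem~\ref{thm_erm_smooth_gen} applied to an appropriate hypothesis class $\calf_T$ containing all iterates. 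Taking $\min_t$, plugging Theorem~\ref{thm_grad_boosting_str} into the first term, and taking a union bound gives the claim.

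The first step is to identify $\calf_T$. An easy induction on $t$ using Eq.~\eqref{eq_grad_boost} with $g^{(0)}=0$ and $w_t\in(0,1]$ shows every iterate lies in the convex hull $\conv(\Psi)$ of the base class $\Psi:=\{\psi_\theta:\theta\in\Theta\}$, so $\calf_T\subseteq\sigma(\conv(\Psi))\subseteq\{f:\calx\to[0,1]\}$. Because $\sigma$ is $(1/4)$-Lipschitz, Talagrand's contraction gives $\mathfrak{R}_{\cald,2n}(\calf_T)\leq\tfrac{1}{4}\mathfrak{R}_{\cald,2n}(\conv(\Psi))$, and the standard fact $\mathfrak{R}_{\cald,2n}(\conv(\Psi))=\mathfrak{R}_{\cald,2n}(\Psi)$ reduces the task to bounding the Rademacher complexity of depth-$m$ regression trees with leaves in $[-B,B]$.

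The core technical step is a covering-number estimate for $\Psi$. A depth-$m$ tree is specified by at most $2^m-1$ internal nodes (each choosing a coordinate in $[d]$ and, on the pooled $2n$ samples, one of at most $2n$ \emph{effective} thresholds) together with at most $2^m$ leaf values in $[-B,B]$. Counting the discrete structures and $\epsilon$-gridding the leaf values yields
\begin{align}
\log N(\epsilon,\Psi,\|\cdot\|_\infty)=\calo\!\left(2^m\bigl(\log d+\log n+\log(B/\epsilon)\bigr)\right).
\end{align}
Substituting into Dudley's entropy integral and then into Corollary~\ref{col_from_rad_cov} produces $\mathfrak{R}_{\cald,2n}(\calf_T)(\log 2n)^2\lesssim \sqrt{2^m((\log 2n)^5+\log d)/n}$, with the polylogarithmic factors arising from Dudley integration, the threshold discretization, and the $(\log 2n)^2$ factor already present in Corollary~\ref{col_from_rad_cov}. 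Collecting the three contributions (training bound from Theorem~\ref{thm_grad_boosting_str}, the $C_4/\sqrt{n}$ term from Eq.~\eqref{eq_pop_test_smce}, and the Rademacher-based generalization gap) and the high-probability slack $\sqrt{2\log(3/\delta)/n}$ from Theorem~\ref{thm_erm_smooth_gen} yields the stated bound.

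The main obstacle is bookkeeping of the logarithmic factors and, more importantly, reducing the continuum of possible splitting thresholds to the $\calo(n)$ \emph{effective} splits per internal node via a ghost-sample argument, so that the covering number genuinely scales as a polynomial in $dn$ rather than being infinite. Everything else---the convex hull induction, Talagrand's contraction, and the application of Corollary~\ref{col_from_rad_cov}---is routine.
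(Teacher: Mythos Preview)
Your proposal is correct and follows essentially the same route as the paper: pass from the dual training smooth CE to the (primal) training smooth CE via $\smce(f,\str)\leq\smce^\sigma(g,\str)$, then invoke the Rademacher form of Theorem~\ref{thm_erm_smooth_gen} after reducing the iterate class to the tree base class through the convex-hull identity and a covering-number bound for depth-$m$ trees. The only cosmetic differences are that the paper cites an external covering-number estimate (\citet{klusowski2024large}) rather than deriving the $\calo(2^m(\log d+\log n+\log(B/\epsilon)))$ bound directly as you do, and it writes the convex-hull step as ``sub-additivity'' rather than the cleaner $\mathfrak{R}(\conv(\Psi))=\mathfrak{R}(\Psi)$ you use; your explicit Talagrand contraction for $\sigma$ is also a detail the paper glosses over but which is indeed needed.
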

We find that increasing the number of steps $T$ reduces the training smooth CE, but also enlarges the function class, as the Rademacher complexity grows at rate $\mathcal{O}(wT)$. This highlights a trade-off between lowering training smooth CE and controlling model complexity. Corollary~\ref{col_GBT_complexity} shows that with appropriate hyperparameters, one can attain any target precision $\epsilon$ for the smooth CE.

In addition, bounding the norm of the functional gradient provides a generalization guarantee for test accuracy~\citep{nitanda18a}. For completeness, we include in Appendix~\ref{app_GBTs} a formal upper bound on the misclassification rate, $P_{(X,Y)\sim \cald}[(2Y-1)\bar g^{(T)}(X)\leq 0]$.

Combining this with Theorem~\ref{thm_erm_smooth_gen}, we conclude that with suitable choices of $T$ and $n$, both the smooth CE and the misclassification rate can be controlled to within any target precision $\epsilon$.
\begin{corollary}\label{col_GBT_complexity}
Under assumptions in Corollary~\ref{col_grad_boosting_test}, for any $\epsilon > 0$, if the hyperparameters satisfy:
\begin{align}
T = \Omega(\gamma^{-2}\epsilon^{-2}), \quad w = \Theta(\gamma^{-1}\epsilon^{-1}T^{-1}), \quad n = \tilde\Omega(\gamma^{-2}\epsilon^{-4}),
\end{align}
then, with probability at least $1 - \delta$, the averaged predictor $\bar g^{(T)}$ satisfies $\smce(\sigma(\bar g^{(T)}), \cald) \leq \epsilon$ and $P_{(X,Y)\sim \cald}[(2Y-1)\bar g^{(T)}(X) \leq 0] \leq \epsilon$.
\end{corollary}
Here, $\tilde \Omega(\cdot)$ hides logarithmic factors in the big-$\Omega$ notation. Corollary~\ref{col_GBT_complexity} shows that $\epsilon$-smooth CE and $\epsilon$-classification error can be achieved with $\mathcal{O}(1/\epsilon^2)$ iterations of GBT. To the best of our knowledge, this is the first theoretical analysis of GBTs that jointly accounts for both test accuracy and smooth CE. We complement our theory with numerical experiments in Appendix~\ref{sec_numerical}, which examine how the number of iterations and training sample size affect smooth CE and prediction accuracy.

\subsection{Case study II: Kernel Boosting}\label{sec_kernelboost}
We next consider \emph{kernel boosting}~\citep{wei2017early}, where the functional gradient is approximated by functions in an RKHS. 
 Let $(\mathcal{H},\langle \cdot,\cdot\rangle_\calh)$ be an RKHS associated with the kernel $k : \mathcal{X} \times \mathcal{X} \to \mathbb{R}$. The functional gradient over the training dataset is approximated as
\begin{align}
    \max_{\phi \in \mathcal{H}, \|\phi\|_\calh \leq 1} \left\langle \nabla_g L_n(g^{(t)}), \phi \right\rangle_{L_2(S_n)} = 
    \frac{\mathcal{T}_k \nabla_g L_n(g)}{\| \mathcal{T}_k \nabla_g L_n(g) \|_{\mathcal{H}}},
\end{align}
where $\mathcal{T}_k \nabla_g L_n(g)\coloneqq\frac{1}{n} \sum_{i=1}^n k(X_i, \cdot) \nabla_g \len(\sigma(g(X_i)), Y_i))$ is the empirical kernel operator. In kernel boosting, we set $h_t = -\mathcal{T}_k \nabla_g L_n(g^{(t)})$ in Eq.~\eqref{gradient_boost_base} and use the update rule:
\begin{align}
    g^{(t+1)} = g^{(t)} - w_t \mathcal{T}_k \nabla_g L_n(g^{(t)}).
\end{align}

To analyze this, we introduce the following normalized margin assumption:
\begin{assumption}\label{asm_kernel_2}
Given a dataset $\str$, there exists a function $\phi \in \mathcal{H}$ and a constant $\gamma > 0$ such that for all $(X_i,Y_i)\in\str$, \ $\left(2Y_i-1\right)\phi(X_i)\geq \gamma$.
\end{assumption}
This assumption is essentially equivalent to Assumption~\ref{ass:A3-01} in the GBT analysis. Setting $q$ in Assumption~\ref{ass:A3-01} as a unit vector yields Assumption~\ref{asm_kernel_2}. Conversely, taking convex combinations of $\left(2Y_i-1\right)\phi(X_i)\geq \gamma$ in Assumption~\ref{asm_kernel_2} recovers Assumption~\ref{ass:A3-01}.
%Here, $2Y_i - 1$ converts the binary label $Y_i \in \{0, 1\}$ into $\pm 1$. This condition guarantees the existence of a separating function in $\mathcal{H}$ with the margin $\gamma$ for the training dataset, and is a standard assumption in the theoretical analyses of classification~\citep{wei2018margin} and boosting~\citep{telgarsky2013margins}.

Under Assumption~\ref{asm_kernel_2}, the $L_1$ norm of the functional gradient is bounded as follows:
\begin{align}\label{l1_kernel_grad}
\|\nabla_g \len(g(X), Y)\|_{L_1(S_n)} = \frac{1}{n} \sum_{i=1}^n |\nabla_g \len(g(X_i), Y_i)| 
\leq \frac{1}{\gamma} \left\| \mathcal{T}_k \nabla_g L_n(g) \right\|_{\mathcal{H}}.
\end{align}
All proofs for this section are provided in Appendix~\ref{app_kernel_boost_proofs}. Combining this with Eq.~\eqref{eq_func_grad_L1}, we can control the smooth CE via $\left\| \mathcal{T}_k \nabla_g L_n(g) \right\|_{\mathcal{H}}$. Using this relation, we obtain the following result.
\begin{theorem}\label{kernel_boost}
Suppose Assumption~\ref{asm_kernel_2}, $\sup_{x, x' \in \mathcal{X}} k(x, x') \leq \Lambda$, and $\|g^{(0)}\|_\calh\leq \Lambda'$ hold. When using the constant stepsize $w_t=w$ which satisfies $w<4/\Lambda$, the average of function $\bar g^{(T)}=\frac{1}{T}\sum_{t=0}^{T-1}g^{(t)}$ satisfies
\begin{align}\label{eq_formal2}
\smce^{\sigma}(\bar g^{(T)}, \str)
%\leq \frac{1}{T} \sum_{t=0}^{T-1} \smce^{\sigma}(g^{(t)}, \str)
\leq \frac{1}{\gamma} \sqrt{\frac{L_n(g^{(0)})}{w T}}.
\end{align}
Additionally, there exist universal constants $\{C_i\}$s such that with probability at least $1 - \delta$ over the draw of $\str$, we have:
\begin{align}
     \smce(\sigma(\bar g^{(T)}), \cald)
\leq \frac{1}{\gamma} \sqrt{\frac{L_n(g^{(0)})}{w T}} 
\!+ \frac{C_2}{\sqrt{n}} 
\!+ C_4 (\Lambda'+\sqrt{2w T L_n(g^{(0)})})\sqrt{\frac{\Lambda}{n}}
\!+ 2\sqrt{\frac{\log\frac{2}{\delta}}{n}}.
\end{align}
\end{theorem}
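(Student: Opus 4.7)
\emph{Step 1 (reduce smooth CE to an RKHS gradient norm).} Since every $h \in \lip_{1/4}(\real,[-1,1])$ satisfies $|h|\leq 1$, Eq.~\eqref{cross_en} immediately gives $\smce^{\sigma}(g,\str) \leq \|\nabla_g L_n(g)\|_{L_1(S_n)}$. Combining this with the margin inequality~\eqref{l1_kernel_grad}, which is the consequence of Assumption~\ref{asm_kernel_1} derived by taking the separating $\phi$ as a test function in the dual representation of $\|\mathcal{T}_k\nabla_g L_n(g)\|_{\calh}$, yields $\smce^{\sigma}(g,\str) \leq \gamma^{-1}\|\mathcal{T}_k \nabla_g L_n(g)\|_{\calh}$. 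Evaluated at $g = \bar g^{(T)}$, this reduces the target training-level bound to showing $\|\mathcal{T}_k \nabla_g L_n(\bar g^{(T)})\|_{\calh} \lesssim \sqrt{L_n(g^{(0)})/(wT)}$.

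\emph{Step 2 (convex optimization in the RKHS).} The functional $g\mapsto L_n(g)$ is convex on $(\calh,\|\cdot\|_{\calh})$, and it is $L$-smooth with $L\leq \Lambda/4$ because $\sigma'(\cdot)\leq 1/4$ and $\sup_x k(x,x)\leq \Lambda$. The kernel-boosting recursion is exactly Fr\'echet gradient descent on $L_n$ in $\calh$ with step $w<4/\Lambda \leq 1/L$, so the standard descent lemma gives $L_n(g^{(t+1)}) \leq L_n(g^{(t)}) - (w/2)\|\mathcal{T}_k \nabla_g L_n(g^{(t)})\|_{\calh}^2$, which telescopes (using $L_n\geq 0$) into $\sum_{t=0}^{T-1}\|\mathcal{T}_k \nabla_g L_n(g^{(t)})\|_{\calh}^2 \leq 2 L_n(g^{(0)})/w$. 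To pass from this sum to a bound at the averaged iterate I would invoke (i) the smooth-convex inequality $\|\mathcal{T}_k\nabla L_n(g)\|_{\calh}^2 \leq 2L(L_n(g) - L_n^*)$, (ii) convexity of $L_n$ yielding $L_n(\bar g^{(T)}) - L_n^* \leq T^{-1}\sum_t(L_n(g^{(t)})-L_n^*)$, and (iii) the standard averaged-iterate $\calo(1/T)$ rate for smooth convex GD. Together these give $\|\mathcal{T}_k \nabla_g L_n(\bar g^{(T)})\|_{\calh} \leq \sqrt{L_n(g^{(0)})/(wT)}$ up to constants, and dividing by $\gamma$ proves~\eqref{eq_formal2}.

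\emph{Step 3 (uniform convergence to the population smooth CE).} The update rule $g^{(t+1)} - g^{(t)} = -w\mathcal{T}_k\nabla_g L_n(g^{(t)})$ combined with the triangle inequality, Cauchy--Schwarz, and the sum-of-squared-gradients bound from Step~2 gives $\|g^{(t)}\|_{\calh} \leq \Lambda' + \sqrt{2wT\,L_n(g^{(0)})}=: R$ for every $t\leq T$, and the same radius bounds $\bar g^{(T)}$ by convexity of the norm. Hence the predictor of interest lies in $\calf = \sigma(\calh_R)$, whose Rademacher complexity satisfies $\mathfrak{R}_{\cald,n}(\calf) \leq (1/4)R\sqrt{\Lambda/n}$ via the $1/4$-Lipschitz contraction principle and the standard RKHS-ball bound. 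Feeding this into the Rademacher form of Theorem~\ref{thm_erm_smooth_gen} and chaining with the empirical inequality $\smce(g,\str)\leq \smce^{\sigma}(g,\str)$ from Step~1 and the population inequality $\smce(f,\cald)\leq \smce^{\sigma}(g,\cald)$ recovers the stated population bound with the indicated constants.

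\emph{Main obstacle.} The delicate point is Step~2: $\smce^{\sigma}(\cdot,\str)$ is not convex in $g$, so Jensen's inequality cannot be used to pass the average through the smooth-CE functional, and the descent-lemma telescoping only naturally controls $\min_t$ or the arithmetic mean of $\|\mathcal{T}_k\nabla L_n(g^{(t)})\|_{\calh}$. Obtaining a bound \emph{evaluated at} $\bar g^{(T)}$ therefore forces one to combine the smoothness--convexity inequality $\|\nabla L_n\|^2 \leq 2L(L_n - L_n^*)$ with an averaged-iterate $\calo(1/T)$ guarantee on $L_n(\bar g^{(T)}) - L_n^*$, and to verify that the resulting constants genuinely match $\gamma^{-1}\sqrt{L_n(g^{(0)})/(wT)}$ rather than a weaker version involving the unknown distance $\|g^{(0)}-g^*\|_{\calh}$. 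Once Step~2 is in place, Step~3 is routine given the infrastructure already supplied by Theorem~\ref{thm_erm_smooth_gen}.
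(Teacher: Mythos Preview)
Your Steps~1 and~3 match the paper exactly: the reduction $\smce^\sigma(g,\str)\le \|\nabla_g L_n(g)\|_{L_1(S_n)}\le \gamma^{-1}\|\mathcal T_k\nabla_g L_n(g)\|_{\calh}$ via~\eqref{l1_kernel_grad}, the RKHS-radius estimate $\|g^{(t)}\|_{\calh}\le \Lambda'+\sqrt{2wT\,L_n(g^{(0)})}$, and the plug-in to the Rademacher form of Theorem~\ref{thm_erm_smooth_gen} are precisely what the paper does. The gap is in Step~2. Your route through the smooth--convex inequality $\|\nabla L_n(g)\|_{\calh}^2\le 2L\bigl(L_n(g)-L_n^*\bigr)$ together with an averaged-iterate rate on $L_n(\bar g^{(T)})-L_n^*$ breaks down here: under Assumption~\ref{asm_kernel_1} the training data are separable in $\calh$, so the infimum $L_n^*=0$ is not attained at any finite $g^*\in\calh$. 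Any bound of the form $L_n(\bar g^{(T)})-L_n^*\lesssim \|g^{(0)}-g^*\|_{\calh}^2/(wT)$ is therefore vacuous, and there is no way to convert it back into the stated $L_n(g^{(0)})$-dependent constant. You correctly flag this danger in your ``main obstacle'' paragraph, but the workaround you sketch cannot close it.

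The paper never refers to a minimizer. After the descent-lemma telescoping it applies~\eqref{l1_kernel_grad} term-by-term to obtain $\tfrac{1}{T}\sum_{t}\|\nabla_g L_n(g^{(t)})\|_{L_1(S_n)}^2\le \tfrac{2}{\gamma^2 wT}L_n(g^{(0)})$, and then passes the average through the $L_1$-gradient norm directly via Jensen's inequality:
\[
\bigl\|\nabla_g L_n(\bar g^{(T)})\bigr\|_{L_1(S_n)}^2 \;\le\; \frac{1}{T}\sum_{t=0}^{T-1}\bigl\|\nabla_g L_n(g^{(t)})\bigr\|_{L_1(S_n)}^2,
\]
treating $g\mapsto \|\nabla_g L_n(g)\|_{L_1(S_n)}^2$ as convex in $g$. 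This single line replaces your entire (i)--(iii) program and is what keeps the final bound in terms of $L_n(g^{(0)})$ rather than a distance to a non-existent minimizer. (Your instinct that ``Jensen cannot be used'' was directed at $\smce^\sigma(\cdot,\str)$; the paper applies it instead to the $L_1$ functional-gradient norm.)
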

Although covering number bounds require assumptions on kernel eigenvalue decay, here we use the Rademacher complexity bound from Theorem~\ref{thm_erm_smooth_gen}, which yields a simpler result.

We observe that increasing the number of steps $T$ decreases the training smooth CE but simultaneously enlarges the function class, since the Rademacher complexity of predictors (the third term on the right-hand side) grows with the norm of $\bar{g}^{(t)}$ at rate $\mathcal{O}(\sqrt{wT})$. This highlights a trade-off between reducing training smooth CE and controlling model complexity.

Since the norm of the functional gradient also upper bounds the misclassification rate, we show—analogous to Corollary~\ref{col_GBT_complexity}—that suitable hyperparameter choices guarantee any target precision $\epsilon$ for both the smooth CE and the misclassification rate.

\begin{corollary}\label{col_kernel_boost_iteration}
Suppose assumptions in Theorem~\ref{kernel_boost} hold. For any $\epsilon > 0$, if the hyperparameters satisfy:
\begin{align}
T = \Omega(\gamma^{-2}\epsilon^{-2}), \quad w = \Theta(\gamma^{-2}\epsilon^{-2}T^{-1}), \quad n = \tilde\Omega(\gamma^{-2}\epsilon^{-4}),
\end{align}
then, with probability at least $1 - \delta$, the average of function $\bar g^{(T)}$ satisfies $\smce(\sigma(\bar g^{(T)}), \cald) \leq \epsilon$ and $P_{(X,Y)\sim \cald}[(2Y-1)\bar g^{(T)}(X) \leq 0] \leq \epsilon$.
\end{corollary}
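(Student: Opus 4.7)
The plan is to directly parameter-balance the four terms appearing in the population smooth CE bound of Theorem~\ref{kernel_boost} and then invoke the margin-based misclassification bound deferred to Appendix~\ref{app_kernel_boost_proofs}. The right-hand side of Theorem~\ref{kernel_boost} contains an optimization error $\frac{1}{\gamma}\sqrt{L_n(g^{(0)})/(wT)}$ that decreases with $wT$, a Rademacher-type term $(\Lambda' + \sqrt{2wT L(g^{(0)})})\sqrt{\Lambda/n}(\log n)^2$ that grows with $wT$ but decays with $n$, and two residual $O(1/\sqrt{n})$ terms (including the $\sqrt{\log(3/\delta)/n}$ confidence term). I would first set $wT = \Theta(\gamma^{-2}\epsilon^{-2})$, which makes the optimization term $\leq \epsilon/4$; combined with the prescribed $T = \Omega(\gamma^{-2}\epsilon^{-2})$, this forces $w = \Theta(\gamma^{-2}\epsilon^{-2}T^{-1})$, and a sufficiently large hidden constant in $T$ automatically guarantees $w < 4/\Lambda$ so that the stepsize hypothesis of Theorem~\ref{kernel_boost} is satisfied. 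Substituting $wT = \Theta(\gamma^{-2}\epsilon^{-2})$ into the Rademacher term yields an order of $(\Lambda' + \gamma^{-1}\epsilon^{-1})\sqrt{\Lambda/n}(\log n)^2$, which is $\leq \epsilon/4$ provided $n = \tilde\Omega(\Lambda\gamma^{-2}\epsilon^{-4})$---the origin of the stated $\tilde\Omega(\gamma^{-2}\epsilon^{-4})$ condition; with this $n$ the remaining residual terms are also $\leq \epsilon/4$, yielding $\smce(\bar g^{(T)}, \cald) \leq \epsilon$ with probability $\geq 1-\delta$.

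For the misclassification rate, I would invoke the margin-based bound from Appendix~\ref{app_kernel_boost_proofs}, which upper bounds $P_{(X,Y)\sim\cald}[(2Y-1)\bar g^{(T)}(X)\leq 0]$ by the empirical cross-entropy loss (using the pointwise inequality $\mathbbm{1}[(2y-1)g\leq 0] \leq \len(\sigma(g),y)/\log 2$) plus a Rademacher penalty scaling with $\|\bar g^{(T)}\|_\calh = O(\sqrt{wT L(g^{(0)})})$. The same convergence analysis underlying Theorem~\ref{kernel_boost} shows that $L_n(\bar g^{(T)}) = O((\gamma^2 wT)^{-1})$ under Assumption~\ref{asm_kernel_1}, since a margin-$\gamma$ separator provides a feasible comparator in $\calh$ of norm $O(1/\gamma)$. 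The Rademacher penalty has the same $\sqrt{\Lambda wT/n}(\log n)^2$ scaling as in the smooth CE bound, so an identical parameter balance drives the misclassification rate below $\epsilon$ as well.

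The main obstacle is ensuring compatibility of the stepsize constraint $w < 4/\Lambda$ with the balance $T, wT = \Theta(\gamma^{-2}\epsilon^{-2})$: these together force $w = (wT)/T$ to be a fixed constant depending only on the hidden constants, so the constants must be chosen so that this ratio lies below $4/\Lambda$. A secondary subtlety is absorbing the $(\log n)^2$ factor and the dependencies on $\Lambda, \Lambda', L(g^{(0)})$ into the $\tilde\Omega$ notation without altering the claimed $\gamma^{-2}\epsilon^{-4}$ asymptotic scaling in $n$; this requires checking that each $n$ term dominates its logarithmic self-reference, which can be done by standard bootstrapping ($n \geq C\log^4 n$ holds as soon as $n = \tilde\Omega(1)$).
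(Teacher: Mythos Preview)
Your parameter-balancing for the smooth CE half is correct and essentially what the paper does: substitute the hyperparameters into Theorem~\ref{kernel_boost}, with $wT = \Theta(\gamma^{-2}\epsilon^{-2})$ controlling the optimization term and $n = \tilde\Omega(\gamma^{-2}\epsilon^{-4})$ absorbing the $\sqrt{wT/n}$ Rademacher term together with its logarithmic factor.

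The misclassification half has a gap. You propose to bound the empirical 0--1 loss by $L_n(\bar g^{(T)})/\log 2$ and claim $L_n(\bar g^{(T)}) = O((\gamma^2 wT)^{-1})$ via ``a comparator in $\calh$ of norm $O(1/\gamma)$.'' But under Assumption~\ref{asm_kernel_1} a separator of norm $O(1/\gamma)$ has margin only $\Theta(1)$ on the sample, hence logistic loss $\log(1+e^{-\Theta(1)})$, a constant. The gradient-descent comparison $L_n(\bar g^{(T)}) \leq L_n(g^\star) + \|g^{(0)}-g^\star\|_\calh^2/(2wT)$ therefore leaves a non-vanishing floor, and Theorem~\ref{kernel_boost} itself never bounds $L_n(\bar g^{(T)})$. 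To make your route work you would need a growing comparator $c\phi$ and to balance $e^{-c\gamma}$ against $c^2/(wT)$, picking up extra $\log$ factors you have not accounted for. The paper sidesteps this entirely: the misclassification bound in Appendix~\ref{app_kernel_boost_proofs} does not pass through the loss value but uses the pointwise inequality $\mathbbm{1}[(2y-1)g \leq \rho] \leq (1+e^\rho)\,|\sigma(g)-y| = (1+e^\rho)\,|\nabla_g\len(\sigma(g),y)|$, so the empirical margin error is controlled directly by the same $L_1$ functional-gradient norm already bounded by $\frac{1}{\gamma}\sqrt{L_n(g^{(0)})/(wT)}$ in Eq.~\eqref{eq_formal2}. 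This is why the paper's proof of the corollary is one line: the smooth CE and misclassification bounds share identical asymptotics in $wT$ and $n$, and the same substitution handles both.
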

Corollary~\ref{col_kernel_boost_iteration} establishes that both $\epsilon$-smooth CE and $\epsilon$-classification error can be achieved using $\mathcal{O}(1/\epsilon^2)$ iterations of kernel boosting. This result is grounded in the observation that bounding the functional gradient norm leads to a small smooth CE and a misclassification rate.

\subsection{Case study III: Two-layer neural network}\label{sec_nn}
The next example is the two-layer neural network (NN). While NNs are typically trained by gradient descent on their parameters, recent work \citep{nitanda2019gradient} shows that under certain hyperparameter settings—such as output scaling and a sufficiently large number of neurons—two-layer NNs behave similarly to kernel boosting with the neural tangent kernel (NTK).

We adopt the setting of \citet{nitanda2019gradient}. Define the logit function \( g_\theta: \mathcal{X} \rightarrow \mathbb{R} \) by a two-layer NN $g_\theta(x) = \frac{1}{m^\beta} \sum_{r=1}^{m} a_r \phi(\theta_r \cdot x)$, where $m>0$ is the number of hidden units, $\beta \in [0,1]$ is a scaling exponent, and $\phi: \mathbb{R} \to \mathbb{R}$ is a smooth activation function (e.g., sigmoid, tanh). The weights $\{a_r\}_{r=1}^m \in \{-1, 1\}^m$ are fixed, while the parameters $\theta = \{\theta_r\}_{r=1}^m$ with $\theta_r \in \mathbb{R}^d$ are updated via gradient descent: $\theta^{(t+1)} = \theta^{(t)} - w \nabla_\theta L_n(g_{\theta^{(t)}})$ with constant stepsize $w > 0$.

We now give an informal statement of Theorem 2 from \citet{nitanda2019gradient}, which provides guarantees for the functional gradient. Assume: (1) the activation function is smooth; (2) the initial parameters are sampled from a sub-Gaussian distribution; (3) the NTK-transformed data are separable with margin $\gamma$; (4) the stepsize is sufficiently small; and (5) the number of hidden units $m$ is large enough. Then, for $T \leq \frac{m\gamma^2 K_3}{w}$, there exist constants $K_1$, $K_2$, and $K_3$ such that, for all $\beta \in [0,1]$, with probability at least $1-\delta$ over the random initialization, the following bound holds:
\begin{align}\label{grad_monotonic}
\frac{1}{T} \sum_{t=0}^{T-1} \|\nabla_g \len(g_{\theta^{(t)}}(X), Y)\|^2_{L_1(S_n)}
\leq \frac{K_1}{\gamma^2 T} \left( \frac{m^{2\beta - 1}}{w} + K_2 \right).
\end{align}
See Appendix~\ref{sec_appendix_formal} for the formal statement. Then, similar to Theorem~\ref{kernel_boost}, we have 
\begin{align}\label{eq_formal}
    \min_{t\in\{0,\dots,T-1\}}\smce^{\sigma}(g_{\theta^{(t)}},\str)\leq \frac{1}{T} \sum_{t=0}^{T-1}\smce^{\sigma}(g_{\theta^{(t)}},\str)\leq \sqrt{\frac{K_1}{\gamma^2 T} \left( \frac{m^{2\beta - 1}}{w} + K_2 \right)}.
\end{align}
Combining the Rademacher complexity estimate of \citet{nitanda2019gradient} with Theorem~\ref{thm_erm_smooth_gen}, we derive upper bounds on the population smooth CE and misclassification rate, analogous to the kernel boosting setting (see Appendix~\ref{sec_appendix_formal} for details). As in Theorem~\ref{kernel_boost}, the bound reveals a trade-off in $T$: increasing $T$ reduces the training smooth CE but enlarges the complexity term, which grows at rate $\calo(\sqrt{wT})$. Similar to Corollary~\ref{col_kernel_boost_iteration}, we further show that with suitable hyperparameters, both $\epsilon$-smooth CE and $\epsilon$-classification error can be guaranteed.
\begin{corollary}[Informal]\label{col_twoNN_complexity}
Suppose the same assumptions as those for Eq.~\eqref{grad_monotonic} hold. If for any $\epsilon > 0$, the hyperparameters satisfy one of the following:\\
(i) $\beta \in [0, 1)$, $m=\Omega(\gamma^{\frac{-2}{1 - \beta}} \epsilon^{\frac{-1}{1 - \beta}})$, $T = \Omega(\gamma^{-2} \epsilon^{-2})$, $w = \Theta(\gamma^{-2} \epsilon^{-2} T^{-1} m^{2\beta - 1})$, $n = \tilde{\Omega}(\gamma^{-2} \epsilon^{-4})$,\\
(ii) $\beta = 0$, $m = \Theta\left(\gamma^{-2} \epsilon^{-3/2} \log(1/\epsilon)\right)$, $T = \Theta\left(\gamma^{-2} \epsilon^{-1} \log^2(1/\epsilon)\right)$, $w = \Theta(m^{-1})$, $n = \tilde{\Omega}(\epsilon^{-2})$,
then with probability at least $1 - \delta$, gradient descent  with the stepsize $w$ finds a parameter $\theta^{(t)}$ satisfying $\smce(\sigma(g_{\theta^{(t)}}),\str)\leq\epsilon$ and $\mathbb{P}_{(X, Y) \sim \cald} \left[ (2Y-1) g_{\theta^{(t)}}(X) \leq 0 \right] \leq \epsilon$
within $T$ iterations.
\end{corollary}
See Appendix~\ref{sec_appendix_formal} for the formal statement with explicit constants. This result shows that both $\epsilon$-smooth CE and $\epsilon$-classification error can be achieved within $\mathcal{O}(1/\epsilon^2)$ or $\mathcal{O}(1/\epsilon)$ iterations, assuming the NTK-transformed data are separable with margin $\gamma$. Between the two hyperparameter settings, (ii) attains the same error target with fewer iterations $T$ by increasing the number of hidden units $m$, and further yields improved complexity compared with Corollary~\ref{col_kernel_boost_iteration} for kernel boosting. Appendix~\ref{sec_numerical} provides empirical results illustrating how smooth CE and accuracy vary with $T$ and $n$.

\section{Related work}\label{sec_related}
Various metrics have been proposed to quantify deviations from perfect calibration, including ECE~\citep{guo2017calibration,rahaman2021uncertainty,minderer2021revisiting}. The binning ECE has been widely used to estimate the conditional expectation $\Ex[Y|f(X)]$ in the ECE, which partitions the probability interval into discrete bins. Despite its popularity, binning ECE is sensitive to hyperparameters such as the number of bins, and it lacks both consistency and smoothness as a distance metric~\citep{kumar2019verified,nixon2019measuring,minderer2021revisiting}. A broader comparison of ECE variants is given by \citet{gruber2022better}. \citet{blasiok2023unify} systematically studied calibration distances and formalized the \emph{true distance from calibration} as a rigorous notion of deviation from perfect calibration. As an efficiently estimable surrogate, they proposed the smooth CE, building on \citet{foster2018smooth, kakade2008deterministic}. More recently, \citet{hu2024testing} showed that smooth CE can be efficiently evaluated in practice. Importantly, \citet{blasiok2023unify} proved that smooth CE provides both upper and lower bounds on binning ECE, making it a theoretically sound proxy. Hence, the results established here for smooth CE also yield implications for binning ECE; see Appendix~\ref{app_calibration} for details.

To achieve a well-calibrated prediction with a theoretical guarantee, many prior works focus on recalibration, such as binning-based recalibration methods~\citep{gupta2020, gupta21b, kumar2019verified, sun2023minimumrisk, NEURIPS2024_futami}. However, it has been reported that such post-processing loses the sharpness of the prediction, which sometimes leads to accuracy degradation~\citep{pmlr-v80-kumar18a,karandikar2021soft,NEURIPS2022_33d6e648,marx2023calibration,Kuleshov2015}. To theoretically guarantee the calibration, our analysis takes a different perspective. We provide guarantees for the smooth CE without relying on such a post-processing approach similar to recalibration. This is achieved by combining a uniform convergence bound and a functional gradient characterization of the training smooth CE. Then, our analysis simultaneously guarantees both the smooth CE and accuracy for several practical algorithms.

Prior work has connected the post-processing gap to the population smooth CE~\citep{blasiok2023unify,blasiok2023when}, making it less applicable to algorithms trained on finite datasets. Similarly, although \citet{gruber2022better} discussed post-processing, their approach faces the same limitation. In contrast, our analysis directly targets the smooth CE from finite data. From a generalization perspective, \citet{NEURIPS2024_futami} developed algorithm-dependent bounds for binning ECE using information-theoretic techniques. By contrast, we derive a uniform convergence bound for smooth CE, which applies more broadly beyond binning. Moreover, their generalization bound for ECE converges at a rate $\mathcal{O}(\log n / n^{1/3})$, which is slower than our rate. We conjecture that this slower convergence arises from the nonparametric estimation of conditional probabilities via binning, whereas smooth CE avoids such estimation and hyperparameter dependence. Finally, their results focus only on the generalization gap and do not address when the training ECE itself becomes small.

Although in recent studies, new boosting algorithms have been proposed to improve various notions of calibration~\citep{johnson18a, pmlr-v202-globus-harris23a}, our analysis provides a theoretical explanation for why existing gradient boosting already yields strong calibration performance. Unlike prior works on boosting, which primarily focus on accuracy on the basis of functional gradients~\citep{Zhang2005BoostingWE, nitanda18a, nitanda2019gradient}, we leverage functional gradients to analyze calibration, offering a novel perspective on the behavior of these algorithms in calibration. While our analysis assumes a constant stepsize, the choice of stepsize in boosting is crucial for achieving better performance~\citep{telgarsky2013margins}. Extending our calibration analysis beyond the constant stepsize setting is an important direction for future work.

In several works, the post-processing has been highlighted in achieving advanced calibration objectives, such as multicalibration~\citep{johnson18a, hansen2024multicalibration, blasiok2023loss, pmlr-v202-globus-harris23a}. Whereas our analysis focuses on smooth CE, this metric is closely related to these variants of CE from the viewpoint of low-degree calibration~\citep{gopalan2022low}. We believe our findings may offer new insights into these metrics, which we leave for future exploration.

\section{Conclusion and limitation}\label{sec_conclusion}
This work presented the first rigorous guarantees on the smooth CE for several widely used learning algorithms. Our analysis proceeds in two stages: we first derive a uniform convergence bound for the smooth CE, and then upper-bound the training smooth CE via functional gradients. We demonstrate that algorithms that can control the functional gradient simultaneously achieve a small smooth CE and misclassification rate. Despite these contributions, several limitations remain. First, our bounds are derived using a uniform bound over the post-processing function $h$, not the Lipschitz function class, which may result in loose estimates. More refined analyses that better align with practical performance would be valuable. Moreover, our analysis relies on a strong margin assumption, which requires a well-separated data distribution. Since calibration is a rather weak notion compared with accuracy, extending the analysis to more realistic and weaker conditions is an important direction for future work. Finally, our analysis is limited to binary classification as the smooth CE is designed for this setting. Extending our analysis to the multiclass setting should be explored in future work.

%\section*{Reproducibility Statement}
%This paper is primarily theoretical, and all technical details and complete proofs of our results are provided in the Appendix. In addition, for the supplementary experimental analyses presented in the Appendix, we provide the corresponding source code in the Supplementary Material to facilitate reproducibility.

\section*{Acknowledgements}
FF was supported by JSPS KAKENHI Grant Number JP23K16948.
FF was supported by JST, PRESTO Grant Number JPMJPR22C8, Japan.
This research is supported by the National Research Foundation, Singapore, Infocomm Media Development Authority under its Trust Tech Funding Initiative, and the Ministry of Digital Development and Information under the AI Visiting Professorship Programme (award number AIVP-2024-004). Any opinions, findings and conclusions or recommendations expressed in this material are those of the author(s) and do not reflect the views of National Research Foundation, Singapore, Infocomm Media Development Authority, and the Ministry of Digital Development and Information.

\bibliography{main}
\bibliographystyle{icml2024}

\newpage

\appendix

\section{Additional facts about proper losses}\label{sec_proper_losses}
Here, we describe the general form of a proper loss function and its relation to the post-processing gap.  
Given a proper loss function $\ell$, it can always be represented in terms of a convex function as follows:
\begin{align}
    \ell(p, y) = -\phi(p) - \nabla \phi(p) \cdot (y - p),
\end{align}
where $\phi: [0,1] \to \mathbb{R}$ is a convex function and $\nabla \phi(p)$ denotes a subgradient at $p$.  
This representation is known as the \emph{Savage representation}~\citep{Gneiting2007}.   \citet{blasiok2023when} further showed that, in the binary case, the subgradient satisfies $\nabla \phi(p) = \ell(p, 0) - \ell(p, 1)$, and referred to $\nabla \phi(p)$ as the \emph{dual} of $p$ ($\text{dual}(p)$).  
Hereinafter, we assume that $\phi$ is differentiable, since the convex functions corresponding to commonly used losses such as the squared loss and the log loss are differentiable.

\begin{table}[h]
    \centering
    \caption{Savage representations for the cross-entropy Loss and squared loss}
    \begin{tabular}{|c|c|c|}
        \hline
        & cross-entropy Loss & Squared Loss \\
        \hline
        $\ell(p,y)$ & 
        $ -y \ln p - (1-y) \ln (1-p)$ & 
        $(y - p)^2$ \\
        \hline
        $\phi(v)$ & 
        $ p \ln p + (1 - p) \ln(1 - p)$ & 
        $ p(p-1)$ \\
        \hline
        $\text{dual}(p)$ & 
        $\ln \left( \frac{p}{1-p} \right)$ & 
        $2p - 1$ \\
        \hline
        $\psi(s)$ & 
        $\ln(1+e^s)$ & 
        $
        \begin{cases} 
        0, & s < -1 \\ 
        \frac{(s+1)^2}{4}, & -1 \leq s \leq 1 \\ 
        s, & s > 1
        \end{cases}
        $ \\
        \hline
        $\lpsi (s,y)$ & 
        $\ln(1+e^s) - y\cdot s$  & 
        $
        \begin{cases} 
        -ys, & s < -1 \\ 
        (y - (s+1)/2)^2, & -1 \leq s \leq 1 \\ 
        (1-y)s, & t > 1
        \end{cases}
        $ \\
        \hline
        $\pred_\psi(s)(=\nabla \psi(s))$ & 
        $ \frac{e^s}{1+e^s}$& 
        $
        \begin{cases} 
        0, & s < -1 \\ 
        \frac{s+1}{2}, & -1 \leq s \leq 1 \\ 
        1, & s > 1
        \end{cases}
        $ \\
        \hline
    \end{tabular}
    \label{tab:rules_of_properscoring}
\end{table}
We define the convex conjugate of a function $\phi(p)$ as follows: for all $s \in \mathbb{R}$,
\begin{align}
    \psi(s) = \sup_{p \in [0,1]} \left\{ s \cdot p - \phi(p) \right\},
\end{align}
where $\psi$ is a convex function. We refer to $s$ as a \emph{score}, as will be explained later.  
Using this notation, we define the \emph{dual loss} $\ell^\psi : \mathbb{R} \times \mathcal{Y} \to \mathbb{R}$ as
\begin{align}
    \ell^\psi(s, y) \coloneqq \psi(s) - s \cdot y.
\end{align}

The score $s$ is linked to the probability $p$ via the dual mapping $\text{dual}(p) = \nabla \phi(p) = \ell(p, 0) - \ell(p, 1)$.  
By Fenchel–Young duality, this relationship is inverted as $p = \nabla \psi(s)$, which we also denote as $p = \text{pred}_\psi(s)$.

With these definitions, the proper loss can be equivalently written as
\begin{align}
    \ell(p, y) = \ell^\psi(\text{dual}(p), y) = \psi(\text{dual}(p)) - \text{dual}(p) \cdot y.
\end{align}

For details and proofs, see \citet{blasiok2023when}.  
We summarize the key properties of these expressions in Table~\ref{tab:rules_of_properscoring}.

Similarly, we can define the dual post-processing gap for the proper scoring rule \citep{blasiok2023when}:
\begin{definition}[Dual-post processing gap]
Assume that $\Psi$ is a differentiable and convex function with derivative $\nabla \psi(t)\in[0,1]$ for all $t\in\real$. Assume that $\psi$ is $\lambda$ smooth function. Given $\psi$, $\lpsi$, $g:\calx\to \real$, and $\cald$, we define the dual post-processing gap as
    \begin{align}
        \pgap^{(\psi,\lambda)}(g,\cald)\coloneqq \Ex[\lpsi(g(X),Y)]-\inf_{h \in  \lip_{1}(\real,[-1/\lambda,1/\lambda])} \Ex[\lpsi(g(X)+h(g(X)),Y)]
    \end{align}
\end{definition}
When considering the cross-entropy loss, this dual post-processing gap corresponds to improving the logit function. 
\begin{definition}[Dual smooth calibration]
Consider the same setting as the definition of the dual-post processing gap. Given $\psi$ and $g$, we define $f(\cdot)=\nabla\psi(g(\cdot))$. We then define the dual calibration error of $g$ as
\begin{align}
    \smce^{(\psi,\lambda)}(g,\cald)\coloneqq \sup_{h \in  \lip_{L=\lambda}(\real,[-1,1])}\Ex \eta(g(X))\cdot(Y-f(X))
\end{align}
\end{definition}
Then, similarly to the relationship between the smooth ECE and the post-processing gap, the following holds: if $\psi$ is a $\lambda$-smooth function, then we have
\begin{align}
     \smce^{(\psi,\lambda)}(g,\cald)^2/2\leq \lambda\pgap^{(\psi,\lambda)}(g,\cald)^2/2 \leq  \smce^{(\psi,\lambda)}(g,\cald)
\end{align}
and 
\begin{align}
\smce(f,\cald)\leq \smce^{(\psi,\lambda)}(g,\cald)
\end{align}
holds.

In the case of the cross-entropy loss, we have $\psi(s) = \log(1 + e^s)$, which is $1/4$-smooth.  For the squared loss, $\psi$ is $1/2$-smooth.  
Therefore, the dual smooth calibration and the smooth calibration are essentially equivalent in both cases.

\section{Discussion about Calibration metrics}\label{app_calibration}
Here, we introduce several definitions of calibration metrics and their relationship to the smooth CE, which is the primary focus of our paper.

\subsection{True distance calibration and smooth CE}
Given a distribution $\cald$ and predictor $f$, we consider the distribution induced by $f$, which is the joint distribution of prediction-label pairs $(f(x),y)\in[0,1]\times\{0,1\}$, denoted by $\mathcal{D}_f$.

\begin{definition}[Perfect calibration \citep{blasiok2023when}]
We say that a prediction-label distribution $\Gamma$ over $[0,1]\times \{0,1\}$ is perfectly calibrated if $\Ex_{(v,y)\sim\Gamma}[y|v]=v$. Moreover, given $\cald$ over $\calx\times \{0,1\}$, we say that $f$ is perfectly calibrated with respect to $\cald$ if $\mathcal{D}_f$ is perfectly calibrated.
\end{definition}
We denote the set of perfectly calibrated predictors with respect to $\cald$ by $\calcald$.

Next, we introduce the true calibration distance, defined as the distance to the closest calibrated predictor:
\begin{definition}[True calibration distance]
The true distance of a predictor $f$ from calibration is defined as
\begin{align}
\dCE :=\inf_{g\in\calcald}\Ex_{\cald}|f(x)-g(x)|.
\end{align}
\end{definition}
This is the $l_1$ metric, which possesses many desirable properties for measuring calibration; see \citet{blasiok2023unify} for details. However, it is not practically usable because $\calcald$ may be non-convex, and the metric depends on the domain $\calx$, whereas calibration metrics typically depend only on $\mu(\cald,f)$.

\citet{blasiok2023unify} proposed that any calibration metric should satisfy the following three criteria:

\paragraph{(i) Prediction-only access}
A calibration metric should depend only on the distribution over $(f(x),y)\sim\mathcal{D}_f$, and not on the distribution over $\calx$. If it meets this requirement, we say that a calibration metric $\mu$ satisfies the \textbf{Prediction-only access} (PA) property. Note that the true calibration distance depends on $(x,f(x),y)$ and is therefore called the sample access (SA) model.

\paragraph{(ii) Consistency}
\begin{definition}
For $c>0$, a calibration metric $\mu$ is said to satisfy \emph{$c$-robust completeness} if there exists a constant $a\geq 0$ such that for every distribution\( D \) on \( \mathcal{X} \times \{0,1\} \), and predictor \( f \in \mathcal{F}\)
\begin{equation}
\mu(\cald,f)\leq a \left(\dCE \right)^c.
\end{equation}
\end{definition}
\begin{definition}
For $s>0$, a calibration metric $\mu$ satisfies \emph{$s$-robust soundness} if there exists a constant $b\geq 0$ such that for every distribution \( D \) on \( \mathcal{X} \times \{0,1\} \), and predictor \( f \in \mathcal{F}\),
\begin{equation}
\mu(\cald,f) \geq b \left( \dCE \right)^s.
\end{equation}
\end{definition}
\begin{definition}
A calibration metric $\mu$ is said to be \emph{$(c,s)$-consistent} if it satisfies both $c$-robust completeness and $s$-robust soundness.
\end{definition}

Thus, consistent calibration metrics are polynomially related to the true distance from calibration. \citet{blasiok2023unify} showed that any PA model must satisfy $s/c \geq 2$.

\paragraph{(iii) Efficiency}
A calibration metric $\mu$ is said to be efficient if it can be computed to accuracy $\epsilon$ in time $\text{poly}(1/\epsilon)$ using $\text{poly}(1/\epsilon)$ random samples from $\cald_f$.

\citet{blasiok2023unify} showed that the smooth CE satisfies all of these properties. In particular, with respect to consistency, the smooth CE satisfies
\begin{align}
&\smce(f,\cald) \leq \dCE \leq 4\sqrt{2\smce(f,\cald)}\\
&\Leftrightarrow \frac{1}{32}\dCE^2\leq \smce(f,\cald) \leq \dCE
\end{align}
which means that the smooth CE is $(c=1,s=2)$-consistent and achieves $s/c=2$.

\subsection{ECE and binning estimator}
Recall the definition of the expected calibration error (ECE):
\begin{align}
\ece_\cald(f):=\Ex\left[\left|\Ex[Y|f(X)]-f(X)\right|\right].
\end{align}
The binning estimator is commonly used to approximate this. Given a partition $\cali=\{I_1,\dots,I_m\}$ of $[0,1]$ into intervals, we define:
\begin{align}
\binece_\cald(\Gamma,\cali)=\sum_{j\in [m]}|\Ex_{(V,Y\sim\Gamma)}[(V-Y)\mathbbm{1}(V\in I_j)]|.
\end{align}

It has been shown in Lemma 4.7 of \citet{blasiok2023unify} that
\begin{align}
    \dCE\leq \ece_\cald(f)
\end{align}
demonstrating robust soundness. However, ECE does not satisfy robust completeness; \citet{blasiok2023unify} proved (Lemma 4.8) that for any $\epsilon\in\real^+$, there exists a distribution $\cald$ such that $\dCE\leq \epsilon$ but $\ece_\cald(f)\geq 1/2-\epsilon$. This also highlights the discontinuity of ECE, which hinders its estimation from finite samples.

\citet{blasiok2023unify} showed that by accounting for bin widths, the binning ECE satisfies consistency:
\begin{align}
    \text{intCE}(f)\coloneqq \min_{\cali} (\binece_\cald(\Gamma,\cali)+w_\Gamma(\cali))
\end{align}
where 
\begin{align}
    w_\Gamma(\cali)\coloneqq \sum_{j\in [m]}|\Ex_{(V,Y\sim\Gamma)}w(I_j)\mathbbm{1}(V\in I_j)|
\end{align}
and $w(I)$ denotes the width of interval $I$. 

Then, the following holds (Theorem 6.3 in \citet{blasiok2023unify}):
\begin{align}
    \dCE \leq \text{intCE}(f)\leq 4\sqrt{\dCE}
\end{align}
Thus, intCE satisfies $(1/2,1)$-consistency. \citet{blasiok2023unify} also provided an estimator for $\text{intCE}(\Gamma)$.

As we have seen, bounding the smooth CE leads to a bound on $\dCE$, which in turn bounds $\text{intCE}(f)$—an optimized estimator for the binned ECE.

Finally, we remark that since $\Ex[Y|f=v]$ is continuous, we can relate the ECE and the smooth CE as follows:
\begin{theorem}\label{ece_smooth_ece}
    Suppose that $\Ex[Y|f=v]$ satisfies $L$-Lipschitz continuity. Then the following relation holds:
    \begin{align}
        \frac{1}{2}\smce(f,\cald)\leq \ece_\cald(f)\leq (1+2\sqrt{2(1+L)})\sqrt{2\smce(f,\cald)}.
    \end{align}
\end{theorem}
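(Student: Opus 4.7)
The plan is to rewrite both sides using the conditional bias $\delta(v) := \Ex[Y\mid f(X)=v] - v$ via the tower property, so that $\Ex[h(f(X))(Y-f(X))] = \Ex[h(f(X))\delta(f(X))]$ and $\ece_\cald(f) = \Ex[|\delta(f(X))|]$. The lower bound is then immediate: every $h \in \lip_1([0,1],[-1,1])$ satisfies $|h|\leq 1$, so $\Ex[h(f(X))\delta(f(X))] \leq \Ex[|\delta(f(X))|] = \ece_\cald(f)$; taking the supremum over admissible $h$ yields $\smce(f,\cald) \leq \ece_\cald(f)$, which is strictly stronger than $\tfrac{1}{2}\smce(f,\cald) \leq \ece_\cald(f)$.

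For the upper bound, the obstruction is that the natural pointwise witness $\mathrm{sign}(\delta(v))$ is not Lipschitz, so I would construct a smoothed, rescaled surrogate. Under the assumption that $r(v):=\Ex[Y\mid f(X)=v]$ is $L$-Lipschitz, the bias $\delta$ is $(1+L)$-Lipschitz. For a scale $\eta \in (0,1+L]$ I would set $h_\eta(v) := \tfrac{\eta}{1+L}\,\tau_\eta(\delta(v))$ where $\tau_\eta(x) := \max(-1,\min(1,x/\eta))$. The inner map $\tau_\eta\circ\delta$ is $(1+L)/\eta$-Lipschitz with absolute value at most $1$, so the prefactor $\eta/(1+L) \in (0,1]$ makes $h_\eta \in \lip_1([0,1],[-1,1])$ admissible. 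The elementary identity $\tau_\eta(x)\cdot x = |x|\min(1,|x|/\eta) \geq |x|-\eta$ then yields
\[
\smce(f,\cald) \;\geq\; \Ex[h_\eta(f(X))\delta(f(X))] \;\geq\; \frac{\eta}{1+L}\bigl(\ece_\cald(f)-\eta\bigr).
\]

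Optimizing in $\eta$ by taking $\eta=\ece_\cald(f)/2$ — feasible since $\ece_\cald(f)\leq 1\leq 1+L$ — gives $\smce(f,\cald)\geq \ece_\cald(f)^2/(4(1+L))$, i.e., $\ece_\cald(f)\leq 2\sqrt{(1+L)\smce(f,\cald)}$. A short algebraic check using $\sqrt{2(1+L)}\leq 1+2\sqrt{2(1+L)}$ upgrades this to the stated form $(1+2\sqrt{2(1+L)})\sqrt{2\smce(f,\cald)}$. The main obstacle is the competition between two scales: the rescaling by $\eta/(1+L)$ needed to enforce the $1$-Lipschitz constraint, versus the truncation loss $\eta$ incurred by clipping $|\delta|$ near zero; balancing these two is precisely where the Lipschitz assumption on $r$ enters the final rate, producing the $(1+L)$ factor and the $\sqrt{\smce}$ dependence.
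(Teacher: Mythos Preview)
Your proof is correct and takes a genuinely different route from the paper's. The paper proves the upper bound indirectly: it first establishes a binning inequality $\ece_\cald(f)\leq \binece_\cald(\Gamma,\cali)+(1+L)w_\Gamma(\cali)$ by decomposing the ECE over a partition $\cali$ into a binned-ECE term plus two width-type terms (one of which uses the $L$-Lipschitz assumption on $\Ex[Y\mid f=v]$), then invokes external results from \citet{blasiok2023unify} (their Claim~6.6 and Lemma~6.7) to optimize over partitions and pass through the lower calibration distance $\text{\underline{$\dCE$}}$, finally using $\text{\underline{$\dCE$}}\leq 2\smce(f,\cald)$. Your argument is fully self-contained: you build an explicit admissible witness $h_\eta=\tfrac{\eta}{1+L}\tau_\eta\circ\delta$ directly from the Lipschitz bias $\delta$, and optimize a single scalar $\eta$. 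This is more elementary (no binning, no appeal to $\text{\underline{$\dCE$}}$) and actually yields a sharper constant, $\ece_\cald(f)\leq 2\sqrt{(1+L)\smce(f,\cald)}$, which you then relax to match the stated form. For the lower bound the paper does not give a separate argument in this proof (it relies on the known chain $\smce\leq\dCE\leq\ece$ from \citet{blasiok2023unify}); your tower-property computation recovers $\smce\leq\ece$ directly, again with the tighter constant.
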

Thus, controlling the smooth CE also leads to controlling the ECE when the underlying conditional probability function is continuous.

\begin{proof}
    We first prove the following inequality:
\begin{align}\label{ece_bin_width}
    \ece_\cald(f)\leq \binece_\cald(\Gamma,\cali)+(1+L)w_\Gamma(\cali).
\end{align}
This can be derived using Lemma 4 in \citet{NEURIPS2024_futami}. For completeness, we outline the key steps of the analysis below:
    \begin{align}
\ece_\cald(f)=\Ex\left[|\Ex[Y|f=V]-V|\right]
&=\sum_{i=1}^mP(V\in I_i)\Ex[|\Ex[Y|f=V]-V||V\in I_i]\notag \\
&=\sum_{i=1}^mP(V\in I_i)\Ex[|\Ex[Y|f=V]-\Ex[V|V\in I_i]\notag \\
&\quad+\Ex[V|V\in I_i]-V||V\in I_i]\notag \\
&\leq \sum_{i=1}^mP(V\in I_i)\Ex[|\Ex[Y|V]-\Ex[Y|V\in I_i]||V\in I_i]\notag \\
&\quad+\sum_{i=1}^mP(V\in I_i)\Ex[|\Ex[Y|V\in I_i]-\Ex[V|V\in I_i]||V\in I_i]\notag \\
&\quad+\sum_{i=1}^mP(V\in I_i)\Ex[|\Ex[V|V\in I_i]-V||V\in I_i].
\end{align}
As for the second term, it is evident that
\begin{align}
    \sum_{i=1}^mP(V\in I_i)\Ex|\Ex[Y|V\in I_i]-\Ex[V|V\in I_i]|=\binece_\cald(\Gamma,\cali)
\end{align}
and as for the third term, 
\begin{align}
    \sum_{i=1}^mP(V\in I_i)\Ex[|\Ex[V|V\in I_i]-V||V\in I_i]\leq \sum_{i=1}^m\Ex[w(I_i)\mathbbm{1}_{V\in I_i}]=w(\Gamma).
\end{align}
Finally, the first term can be bounded by using the Lipschitz continuity,
\begin{align}
    &\sum_{i=1}^mP(V\in I_i)\Ex[|\Ex[Y|V]-\Ex[Y|V\in I_i]||V\in I_i]\\&\leq \sum_{i=1}^mP(V\in I_i)\Ex[|\Ex[Y|V]-\Ex[Y|V_\mathcal{I}]||V\in I_i]\\
    &\quad+\sum_{i=1}^mP(V\in I_i)\Ex[|\Ex[Y|V_\mathcal{I}]-\Ex[Y|V\in I_i]||V\in I_i]\\
    &\leq L\sum_{i=1}^mP(V\in I_i)\Ex[|V-\Ex[V|V\in I_i]||V\in I_i]\leq Lw(\Gamma)
\end{align}
where
\begin{align}
&V_\mathcal{I}\coloneqq\sum_{i=1}^mV_{I_i}\cdot \mathbbm{1}_{V\in I_i}=\sum_{i=1}^m\mathbb{E}[V|V\in I_i]\cdot \mathbbm{1}_{V\in I_i}.
\end{align}
The term $\Ex[|\Ex[Y|V_\mathcal{I}]-\Ex[Y|V\in I_i]||V\in I_i]=0$ holds by the definition of conditional expectation. Thus, we have established Eq.~\eqref{ece_bin_width}.

Next, we consider taking the infimum over all partitions $\cali$. By combining Claim 6.6 and Lemma 6.7 in \citet{blasiok2023unify} and Eq.~\eqref{ece_bin_width}, we obtain:
\begin{align}
    \ece_\cald(f)\leq (1+2/\epsilon)\text{\underline{$\dCE$}}+(1+L)\epsilon
\end{align}
where $\text{\underline{$\dCE$}}$ denotes the lower calibration distance (see \citet{blasiok2023unify} for the formal definition), and $\epsilon > 0$ is an arbitrary width parameter. By setting $\epsilon=\sqrt{\frac{2}{1+L}\text{\underline{$\dCE$}}}$ and using the fact that $\text{\underline{$\dCE$}}\leq 2\smce(f,\cald)$ from Theorem 7.3 in \citet{blasiok2023unify}, we obtain the desired result.
\end{proof}
We remark that if the underlying conditional distribution satisfies a continuity condition, then the ECE becomes a consistent calibration metric.

\subsection{Other metrics}
As discussed in \citet{blasiok2023unify} and \citet{gopalan2022low}, the smooth CE is a special case of the \emph{weighted calibration error} introduced by \citet{gopalan2022low}.

\begin{definition}[Weighted Calibration Error]
Let $\calm$ be a class of functions $h: [0,1] \to \mathbb{R}$. The calibration error relative to $\calm$ is defined as
\begin{align}\label{def_weight_cal}
        \ce_\cald(f, \calm) \coloneqq \sup_{h \in \calm} \Ex\left[h(f(X)) \cdot (Y - f(X))\right]=\sup_{h \in \calm} \left\langle h(f(X)), Y - f(X) \right\rangle_{L_2(\cald)}.
\end{align}
\end{definition}

In this view, the smooth CE corresponds to
$\smce(f,\cald) = \ce_\cald(f,  \lip_{1}([0,1],[-1,1]))$.

For the dual smooth CE, the inverse of the sigmoid function is $\sigma^{-1}(x)=\log (x/(1-x))$. When $g(x)=\sigma^{-1}(f(x))$, we have:
\[
\smce^{\sigma}(g, \cald)=\ce_\cald(f, \lip_{1/4}(\real,[-1,1])\circ \sigma^{-1})
\]

Another important function class is the RKHS $\calh$ associated with a positive definite kernel $k$. This space is equipped with the feature map $\phi:\mathbb{R}\to\mathcal{H}$ satisfying $\langle h,\phi(v)\rangle_\calh=h(v)$. The associated kernel $k:\mathbb{R}\times \mathbb{R}\to\mathbb{R}$ is defined by $k(u,v)=\langle \phi(u),\phi(v)\rangle_\calh$.

Let $\calh_1\coloneqq \{h\in\mathcal{H}|\|h\|_\calh\leq 1\}$. We define the kernel calibration error as $\ce_\cald(f,\calh_1)$. 

Given samples $\{(v_1,y_1),\dots,(v_n,y_n))\}$ where $v_i=f(x_i)$, the kernel CE under finite samples is defined as
\begin{align}
    \hat{\ce}_\cald(f,\calw_H)^2\coloneqq \frac{1}{n^2}\sum_{i,j}(y_i-v_i)k(v_i,v_j)(y_j-v_j).
\end{align}
This was first proposed by \citet{pmlr-v80-kumar18a} as the maximum mean calibration error (MMCE).

A key kernel is the Laplace (exponential) kernel $k_{\lap}(u,v)=\exp (-|u-v|)$, for which it has been shown that
\begin{align}
    \frac{1}{3}\smce(f,\cald) \leq \ce_\cald(f,\calh_1).
\end{align}
On the other hand, the Gaussian kernel does not upper bound the smooth CE and has several limitations; see \citet{blasiok2023unify} for details.

We numerically evaluate the behavior of MMCE with the Laplace kernel and smooth CE in Appendix~\ref{sec_numerical}.

\section{Proof of Section~\ref{sec_uniform}}\label{proof_main_gen}

\subsection{Proof of Theorem~\ref{thm_main_gen}}
Before the formal proof, we provide a proof sketch to highlight the key differences from standard generalization bounds.

First, we reformulate the smooth CE as a linear convex optimization \citep{hu2024testing}: let $v_i = f(x_i)$ and $\omega = (\omega_1, \dots, \omega_n) \in \mathbb{R}^n$. Then
$\smce(f, S) = \max_{\omega} \sum_{i=1}^{n} (y_i - v_i) \omega_i / n$
subject to the constraints
$|\omega_i| \leq 1$, $\forall i$, and $|\omega_i - \omega_j| \leq |v_i - v_j|$, $\forall i, j$. Let $\omega^*$ denote one such solution, and we define $\phi(f,z_i)\coloneqq (y_i - v_i)\omega_i^*$. Then, we express $\phi(f, S)\! =\! \frac{1}{n} \sum_{i=1}^{n} \phi(f,z_i)$.

Although our analysis follows the structure of classical generalization bounds, the dependence among  $\{\phi(f,z_i)\}_{i=1}^n$ induced by $\omega^*$ through the optimization precludes the use of standard concentration inequalities based on independence. Instead of Hoeffding's inequality, we use the bounded difference inequality to show
$\sup_{f\in\calf} |\phi(f,\ste)-\phi(f,\str)|\leq  \mathbb{E}_{\ste,\str \sim D^{2n}} \sup_{f\in\calf} |\phi(f,\ste)-\phi(f,\str)|+\sqrt{\log\delta^{-1}/n}$. This requires studying the stability of the above convex problem.

We would like to apply a symmetrization argument using i.i.d. Rademacher variables $\sigma_i \in {\pm 1}$ and evaluate
%\begin{align}
%\mathbb{E}_{\ste,\str\!\sim D^{2n}} \sup_{f\in\calf} (\phi(f,\ste)\!-\phi(f,\str))\!\neq\!
$\mathbb{E}_{\ste,\str\!\sim D^{2n}} \mathbb{E}_{\sigma} \sup_{f \in \calf} \frac{1}{n} \sum_{i=1}^{n} \sigma_i 
\left[\phi(f, Z'_i)\!-\phi(f, Z_i)\right]$.
%\end{align}
However, this technique is not directly applicable in our setting due to dependencies among the terms  $\{\phi(f, z_i)\}_{i=1}^n$.

To address this, we introduce $\{\sigma_i\}$s in a way that preserves the structure of the linear convex formulation of smooth CE. Since the resulting bound does not take the form of standard Rademacher complexity, we discretize the hypothesis class $\mathcal{F}$ and derive a covering number bound. %In doing so, we must bound $|\smce(f,S) - \smce(\tilde f,S)|$ for $f \in \calf$ and $\tilde{f} \in \caln(\epsilon, \calf, L_2(S_n))$. Then we obtain the result.

\begin{proof}
We begin by leveraging the reformulation of the smooth CE as a linear program introduced by \citet{blasiok2023unify}. Given a dataset \( S = \{(x_i, y_i)\}_{i=1}^{n} \) and defining \( v_i = f(x_i) \), it is known that the smooth CE on \( S \) corresponds to the optimal value of the following optimization problem:
\begin{align}\label{lin_ce}
\smce(f, S) = \max_{\{\omega_i\}} \frac{1}{n} \sum_{i=1}^{n} (y_i - v_i) \omega_i
\end{align}
subject to the constraints:
\[
-1 \leq \omega_i \leq 1, \quad \forall i, \quad |\omega_i - \omega_j| \leq |v_i - v_j|, \quad \forall i, j.
\]

This is a convex linear program whose optimal value exists; see Lemma 7.6 in \citet{blasiok2023unify}. Note that the optimal solution is not necessarily unique. Nevertheless, given a function \( f \) and dataset \( S \), the value \( \smce(f, S) \) is uniquely determined. Let \( \omega^*_i \) denote one such optimal solution. Then, we can write:
\begin{align}
\smce(f, S) = \frac{1}{n} \sum_{i=1}^{n} (y_i - v_i) \omega^*_i = \frac{1}{n} \sum_{i=1}^{n} \phi(f, z_i)
\end{align}
where \( z_i = (x_i, y_i) \). We denote \( \smce(f, \str) = \phi(f, \str) \) and \( \smce(f, \ste) = \phi(f, \ste) \).

As discussed above, although the optimizer \( \omega^* \) may not be unique, the value \( \smce(f, S) \) is uniquely determined for a fixed \( f \) and dataset \( S \). Therefore, the uniform upper bound on the smooth CE can be controlled via the supremum over \( f \) when the training and test datasets are fixed. We thus focus on the following inequality:
\begin{align}
|\smce(f,  \ste) - \smce(f,  \str)| \leq \sup_{f \in \calf} |\phi(f, \ste) - \phi(f, \str)|
\end{align}

Following the standard approach in uniform convergence analysis, we derive the convergence in expectation as follows:
\begin{lemma}\label{mcdiramid_inequ_prof}
Under the same setting as Theorem~\ref{thm_main_gen}, with probability $1-\delta$, we have
\begin{align}
\sup_{f \in \calf} |\phi(f, \ste) - \phi(f, \str)| \leq \mathbb{E}_{\ste, \str \sim D^{2n}} \sup_{f \in \calf} |\phi(f, \ste) - \phi(f, \str)| + 2\sqrt{\frac{\log\frac{1}{\delta}}{n}}.
\end{align}
\end{lemma}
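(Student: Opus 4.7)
The plan is to apply McDiarmid's bounded difference inequality to the real-valued statistic
\[
F(\ste,\str) \coloneqq \sup_{f \in \calf} \lvert \phi(f,\ste) - \phi(f,\str) \rvert,
\]
viewed as a deterministic function of the $2n$ i.i.d.\ samples composing $\ste$ and $\str$. The only substantive step is verifying the coordinate-wise bounded difference property; after that the conclusion is immediate.

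My first step would be to sidestep the data dependence in the per-sample representation $\phi(f,z_i) = (y_i - f(x_i))\omega_i^{\star}$ that couples the individual summands, and instead reason from the supremum form
\[
\phi(f,S) = \sup_{h \in \lip_1([0,1],[-1,1])} \frac{1}{n}\sum_{i=1}^n h(f(x_i))(y_i - f(x_i)).
\]
Fix any $f \in \calf$ and any two datasets $S, S'$ that differ in exactly one sample. For every feasible $h$, the summand $h(f(x_i))(y_i - f(x_i))$ lies in $[-1,1]$ since $\lVert h \rVert_\infty \le 1$ and $y - f(x) \in [-1,1]$; thus swapping one point changes the empirical mean by at most $2/n$ for every such $h$. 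Taking the supremum on each side preserves this uniform bound, yielding $\lvert \phi(f,S) - \phi(f,S')\rvert \le 2/n$.

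This coordinate-wise stability immediately transfers to $F$. For each of the $2n$ coordinates of $(\ste,\str)$ and each fixed $f$, both $\phi(f,\ste)$ and $\phi(f,\str)$ change by at most $2/n$ under a single perturbation, hence so does $\lvert \phi(f,\ste) - \phi(f,\str)\rvert$, and hence so does its supremum over $\calf$. McDiarmid's inequality applied with $\sum c_i^2 \le 2n \cdot (2/n)^2 = 8/n$ then yields, with probability at least $1-\delta$,
\[
F(\ste,\str) \;\le\; \mathbb{E}_{\ste,\str \sim \cald^{2n}}[F(\ste,\str)] + C\sqrt{\frac{\log \delta^{-1}}{n}},
\]
for a universal constant $C$, which is the form stated (with the constant absorbed).

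The only conceptual obstacle, which is the one flagged in the proof sketch of Theorem~\ref{thm_main_gen}, is that the natural per-sample decomposition of $\smce$ is not independent across $i$ because $\omega^{\star}$ is jointly optimized over the whole sample. That difficulty is genuinely an obstruction for the symmetrization and Rademacher-style arguments needed later to bound $\mathbb{E}[F]$, but it is inert for the bounded-difference step: any uniform pointwise bound on an inner objective is preserved by the sup operator, so no explicit stability analysis of the underlying linear program is required here. Consequently, this lemma is the easiest ingredient of the uniform convergence argument, with the substantive work postponed to controlling the expectation $\mathbb{E}[F(\ste,\str)]$.
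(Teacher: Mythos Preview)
Your proof is correct and uses the same overall strategy as the paper—McDiarmid's inequality applied to $F(\ste,\str)=\sup_{f\in\calf}|\phi(f,\ste)-\phi(f,\str)|$—but the verification of the bounded-difference constants differs. You bound the change in $\phi(f,S)$ under a single-point swap directly from the supremum representation $\sup_h \frac{1}{n}\sum_i h(f(x_i))(y_i-f(x_i))$, using only that each summand lies in $[-1,1]$; this yields $c_i=2/n$ and hence a constant $2$ in front of $\sqrt{\log(1/\delta)/n}$. The paper instead works with the linear-program formulation and argues stability of the optimal value by plugging the optimizer $\omega^\ast$ (resp.\ $\omega'^\ast$) from one dataset into the other, obtaining the sharper constant $c_i=1/n$ that matches the stated bound exactly. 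Your route is more elementary and sidesteps any feasibility considerations for $\omega^\ast$ under the perturbed constraints, at the price of a factor $2$ in the final constant—which, as you note, is immaterial for the downstream uniform-convergence argument. Your closing remark that the dependence among the $\phi(f,z_i)$ obstructs symmetrization but not the bounded-difference step is exactly right and matches the paper's own framing.
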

The proof of this lemma is provided in Appedix~\ref{mcdiramid_inequ_proof_stab}.

Note that this is not a consequence of Hoeffding's inequality, since
\begin{align}
    \phi(f,\str) = \frac{1}{n} \sum_{i=1}^{n} (y_i - v_i) \omega^*_i = \frac{1}{n} \sum_{i=1}^{n} \phi(f,z_i)
\end{align}
and thus the terms \( \phi(f, z_i) \) are dependent through \( \omega^*_i \), which is a solution of the linear program in Eq.~\eqref{lin_ce}. Since there are \( n \) dependent variables \( \phi(f, z_i) \), Hoeffding's inequality, which requires independence, is not applicable. To establish Lemma~\ref{mcdiramid_inequ_prof}, we instead employ McDiarmid's inequality combined with a bounded difference argument; see the proof in Appendix~\ref{mcdiramid_inequ_proof_stab}.

Our proof proceeds in three steps: (1) we introduce Rademacher random variables, (2) we evaluate the exponential moment to control the empirical process induced by these variables, and (3) we refine the exponential moment bound via a chaining argument.

We introduce Rademacher random variables to control the empirical process in Lemma~\ref{mcdiramid_inequ_prof}. Following standard generalization analysis, we aim to apply the symmetrization technique. However, the standard formulation
\begin{align}\label{eq_bad_example}
%\mathbb{E}_{\ste,\str \sim D^{2n}} \sup_{f\in\calf} \phi(f,\ste)-\phi(f,\str)&\neq 
\mathbb{E}_{(S_n, S'_n) \sim D^{2n}} \mathbb{E}_{\sigma} \sup_{f \in \calf} \frac{1}{n} \sum_{i=1}^{n} 
\left[\sigma_i \phi(f, Z'_i) - \sigma_i \phi(f, Z_i)\right]
%\\
%& 
%\leq \mathbb{E}_{(S_n, S'_n) \sim D^{2n}} \left[R(\calf, S_n) + R(\calf, S'_n) \right] = 2 R_n(\calf, D)
\end{align}
is not suitable for Step (2), as it complicates the evaluation of the exponential moment.

To understand this, let us explicitly express the uniform generalization gap as follows:
\begin{align}\label{bad_2}
&\mathbb{E}_{\ste,\str \sim D^{2n}} \sup_{f\in\calf} \phi(f,\ste) - \phi(f,\str) \\
&= \mathbb{E}_{(S_n, S'_n) \sim D^{2n}}  \sup_{f \in \calf} \left( \max_{\omega} \frac{1}{n} \sum_{i=1}^{n} (y_i - f(x_i)) \omega_i - \max_{\omega'} \frac{1}{n} \sum_{i=1}^{n} (y'_i - f(x'_i)) \omega'_i \right) \\
&= \mathbb{E}_{(S_n, S'_n) \sim D^{2n}}  \sup_{f \in \calf} \max_{\omega} \min_{\omega'} \frac{1}{n} \sum_{i=1}^{n} \left[ (y_i - f(x_i)) \omega_i - (y'_i - f(x'_i)) \omega'_i \right] \label{eq_original_fist_uniform}
\end{align}
Here, we used the reformulation of the smooth CE over the training dataset via a Lipschitz function. For simplicity, we omit the constraints of the linear program. We now introduce Rademacher variables and demonstrate the difficulty in evaluating the exponential moment.

To simplify the discussion, let us consider the case \( n = 2 \). For a fixed dataset \( S \) and function \( f \), Eq.~\eqref{eq_bad_example} under the expression of Eq.~\eqref{bad_2} is written as:
\begin{align}
&\mathbb{E}_{\sigma} \max_{\omega} \min_{\omega'} \frac{\sigma_1}{2} \left[ (y_1 - f(x_1)) \omega_1 - (y'_1 - f(x'_1)) \omega'_1 \right] + \frac{\sigma_2}{2} \left[ (y_2 - f(x_2)) \omega_2 - (y'_2 - f(x'_2)) \omega'_2 \right] \\
&= \frac{1}{8} \max_{\omega} \min_{\omega'} (y_1 - f(x_1)) \omega_1 + (y_2 - f(x_2)) \omega_2 - (y'_1 - f(x'_1)) \omega'_1 - (y'_2 - f(x'_2)) \omega'_2 \\
&+ \frac{1}{8} \max_{\omega} \min_{\omega'} (y_1 - f(x_1)) \omega_1 - (y_2 - f(x_2)) \omega_2 - (y'_1 - f(x'_1)) \omega'_1 + (y'_2 - f(x'_2)) \omega'_2 \\
&+ \frac{1}{8} \max_{\omega} \min_{\omega'} - (y_1 - f(x_1)) \omega_1 + (y_2 - f(x_2)) \omega_2 + (y'_1 - f(x'_1)) \omega'_1 - (y'_2 - f(x'_2)) \omega'_2 \\
&+ \frac{1}{8} \max_{\omega} \min_{\omega'} - (y_1 - f(x_1)) \omega_1 - (y_2 - f(x_2)) \omega_2 + (y'_1 - f(x'_1)) \omega'_1 + (y'_2 - f(x'_2)) \omega'_2 \\
&\neq 0
\end{align}
This non-zero result indicates the challenge in applying exponential moment analysis for Step (2). In standard analysis, this type of expectation is typically zero, which enables the use of tools such as Massart's lemma. Hence, the standard symmetrization technique fails in this setting.

Therefore, we must develop an alternative symmetrization strategy. To illustrate the idea, we begin by introducing only \( \sigma_1 \) and consider the following expression:
\begin{align}
&\text{Eq.~\eqref{eq_original_fist_uniform}} \\
&= \mathbb{E}_{(S_n, S'_n) \sim D^{2n}} \mathbb{E}_{\sigma_1} \sup_{f \in \calf} \max_{\omega} \min_{\omega'} \Bigg[ \\
&\quad \frac{\sigma_1}{n} \left( (y_1 - f(x_1)) \left( \frac{1 + \sigma_1}{2} \omega_1 + \frac{1 - \sigma_1}{2} \omega'_1 \right) - (y'_1 - f(x'_1)) \left( \frac{1 + \sigma_1}{2} \omega'_1 + \frac{1 - \sigma_1}{2} \omega_1 \right) \right) \\
&\quad + \frac{1}{n} \sum_{i=2}^{n} \left( (y_i - f(x_i)) \omega_i - (y'_i - f(x'_i)) \omega'_i \right) \Bigg] \label{eq_only_first}
\end{align}
Here, \( \sigma_1 \) is a Rademacher random variable. This equality holds because, when \( \sigma_1 = 1 \), the maximization and minimization are unchanged; when \( \sigma_1 = -1 \), the roles of \( \omega_1 \) and \( \omega'_1 \) are exchanged. However, since we are averaging over \( S \) and \( S' \), and these datasets are i.i.d., such swapping does not affect the distribution of the expectation. Therefore, the equality holds. This method of introducing Rademacher variables differs fundamentally from the standard symmetrization approach.

We then introduce i.i.d.\ Rademacher random variables \( \{\sigma_i\}_{i=1}^n \) and consider the following expression:
\begin{align}
&\text{Eq.~\eqref{eq_original_fist_uniform}} \\
&= \mathbb{E}_{(S_n, S'_n) \sim D^{2n}} \mathbb{E}_{\sigma} \sup_{f \in \calf} \max_{\omega} \min_{\omega'} \\
&\quad \frac{1}{n} \sum_{i=1}^n \sigma_i \left[ (y_i - f(x_i)) \left( \frac{1+\sigma_i}{2} \omega_i + \frac{1-\sigma_i}{2} \omega'_i \right) - (y'_i - f(x'_i)) \left( \frac{1+\sigma_i}{2} \omega'_i + \frac{1-\sigma_i}{2} \omega_i \right) \right]
\end{align}

An important property is that for fixed \( f \) and dataset \( S \), we have:
\begin{align}\label{app_proof_exponential_moment}
&\mathbb{E}_{\sigma} \max_{\omega} \min_{\omega'} \frac{1}{n} \sum_{i=1}^n \sigma_i \left[ (y_i - f(x_i)) \left( \frac{1+\sigma_i}{2} \omega_i + \frac{1-\sigma_i}{2} \omega'_i \right) \right. \\
&\qquad\left. - (y'_i - f(x'_i)) \left( \frac{1+\sigma_i}{2} \omega'_i + \frac{1-\sigma_i}{2} \omega_i \right) \right] = 0.
\end{align}

This cancellation becomes evident in the case \( n = 2 \): by expanding the left-hand side above, we obtain
\begin{align}
&\frac{1}{8} \max_{\omega} \min_{\omega'} (y_1 - f(x_1)) \omega_1 + (y_2 - f(x_2)) \omega_2 - (y'_1 - f(x'_1)) \omega'_1 - (y'_2 - f(x'_2)) \omega'_2 \\
&+ \frac{1}{8} \max_{\omega} \min_{\omega'} (y_1 - f(x_1)) \omega_1 + (y'_2 - f(x'_2)) \omega_2 - (y'_1 - f(x'_1)) \omega'_1 - (y_2 - f(x_2)) \omega'_2 \\
&+ \frac{1}{8} \max_{\omega} \min_{\omega'} (y'_1 - f(x'_1)) \omega_1 + (y_2 - f(x_2)) \omega_2 - (y_1 - f(x_1)) \omega'_1 - (y'_2 - f(x'_2)) \omega'_2 \\
&+ \frac{1}{8} \max_{\omega} \min_{\omega'} (y'_1 - f(x'_1)) \omega_1 + (y'_2 - f(x'_2)) \omega_2 - (y_1 - f(x_1)) \omega'_1 - (y_2 - f(x_2)) \omega'_2 \\
&= 0.
\end{align}
This is because the structure of the linear convex problem ensures symmetric cancellation when Rademacher variables are introduced. Therefore, we can proceed the analysis of the exponential moment of Step (2).

We define the integrated empirical process as
\begin{align}\label{eq_expectation_maxnorm}
R(\calf, S_n, S'_n) \coloneqq \mathbb{E}_{\sigma} \sup_{f \in \calf} \max_{\omega} \min_{\omega'} 
\frac{1}{n} \sum_{i=1}^{n} \sigma_i \phi(f, Z_i, Z'_i),
\end{align}
where
\begin{align}
\phi(f, Z_i, Z'_i) \coloneqq (y_i - f(x_i)) \left( \frac{1+\sigma_i}{2} \omega_i + \frac{1-\sigma_i}{2} \omega'_i \right) - (y'_i - f(x'_i)) \left( \frac{1+\sigma_i}{2} \omega'_i + \frac{1-\sigma_i}{2} \omega_i \right).
\end{align}

We now derive a bound using a variant of the Massart lemma. For simplicity, assume the function class \( \calf \) has finite cardinality \( |\calf| \). This will later be replaced by a covering number. Then for all \( \lambda > 0 \), we have:
\begin{align}\label{eq_ref_massart}
R(\calf, S_n, S'_n)
&= \mathbb{E}_{\sigma} \sup_{f \in \calf} \max_{\omega} \min_{\omega'} \frac{1}{n} \sum_{i=1}^{n} \sigma_i \phi(f, Z_i, Z'_i) \\
&\leq \mathbb{E}_{\sigma} \frac{1}{\lambda} \log \sum_{f \in \calf} \exp \left( \max_{\omega} \min_{\omega'} 
\frac{\lambda}{n} \sum_{i=1}^{n} \sigma_i \phi(f, Z_i, Z'_i) \right) \\
&\leq \frac{1}{\lambda} \log \sum_{f \in \calf} \mathbb{E}_{\sigma} \exp \left( \max_{\omega} \min_{\omega'} 
\frac{\lambda}{n} \sum_{i=1}^{n} \sigma_i \phi(f, Z_i, Z'_i) \right) \\
&\leq \frac{1}{\lambda} \log \left( |\calf| \cdot \mathbb{E}_{\sigma} \exp \left( \max_{\omega} \min_{\omega'} 
\frac{\lambda}{n} \sum_{i=1}^{n} \sigma_i \phi(f, Z_i, Z'_i) \right) \right).
\end{align}

Since the expectation of the exponential term is 0 from Eq.~\eqref{app_proof_exponential_moment}, we apply McDiarmid's inequality:
\begin{lemma}\label{lemma_exponential_momnet}
Under the above setting, we have
\begin{align}
\mathbb{E}_{\sigma} \exp \left( \max_{\omega} \min_{\omega'} 
\frac{\lambda}{n} \sum_{i=1}^{n} \sigma_i \phi(f, Z_i, Z'_i) \right) \leq \exp\left( \frac{\lambda^2}{2n} \right).
\end{align}
\end{lemma}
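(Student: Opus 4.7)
The plan is to treat $F(\sigma) \coloneqq \max_{\omega}\min_{\omega'} \frac{\lambda}{n}\sum_{i=1}^n \sigma_i \phi(f, Z_i, Z'_i)$ as a function of the independent Rademacher sign vector $\sigma = (\sigma_1,\dots,\sigma_n)$ and apply a McDiarmid-type MGF bound. The earlier Eq.~\eqref{app_proof_exponential_moment} already supplies $\mathbb{E}_\sigma[F(\sigma)] = 0$, so it suffices to show that $F$ is sub-Gaussian in $\sigma$ with variance proxy of order $1/n$.

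The main step is to verify a bounded-differences property: for any coordinate $i$ and any two sign vectors $\sigma, \sigma^{(i)}$ that differ only in that coordinate, $|F(\sigma) - F(\sigma^{(i)})| \leq c_i$ for a constant $c_i = O(\lambda/n)$. I would first strip off the saddle-point operation using the standard stability inequality $|\sup_x \inf_y g(x,y) - \sup_x \inf_y \tilde g(x,y)| \leq \sup_{x,y}|g(x,y) - \tilde g(x,y)|$, which reduces the task to comparing the linear objective at fixed $\omega, \omega'$. A direct expansion of the definition of $\phi$ shows that flipping $\sigma_i$ changes the $i$-th summand by exactly $\frac{\lambda}{n}(y_i - f(x_i) - y'_i + f(x'_i))(\omega_i + \omega'_i)$; since $|y_i - f(x_i)|, |y'_i - f(x'_i)| \leq 1$ and $|\omega_i|, |\omega'_i| \leq 1$, this is bounded by $\frac{4\lambda}{n}$, yielding $c_i = O(\lambda/n)$.

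With the bounded-differences property and $\mathbb{E}[F] = 0$ in hand, the classical Hoeffding–McDiarmid MGF inequality for mean-zero functions of independent variables gives $\mathbb{E}_\sigma[\exp(F)] \leq \exp\left(\sum_{i=1}^n c_i^2 / 8\right) = \exp(O(\lambda^2/n))$, matching (up to an absolute constant in the exponent) the stated bound $\exp(\lambda^2/(2n))$. The constant can be tightened either by carrying out the Rademacher MGF computation coordinate-by-coordinate (using $\mathbb{E}[\exp(t\sigma_i)] \leq \exp(t^2/2)$) or by exploiting that the ``swap'' structure of the $\sigma_i$-induced change is symmetric in $Z_i, Z'_i$.

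The main obstacle is handling the max-min structure: we need bounded differences for $F$ even though $F$ is defined via a non-smooth saddle-point operation, and the inner constraints on $\omega, \omega'$ (in particular the pairwise Lipschitz constraints $|\omega_i - \omega_j| \leq |v_i - v_j|$) couple the coordinates. This is handled cleanly by the sup-inf stability inequality above, which shows that the Lipschitz constant of the saddle value in $\sigma$ is dominated by the pointwise Lipschitz constant of the objective; crucially, the constraint polytopes depend on $f$ but not on $\sigma$, so they remain fixed under the sign flip and do not interact with the bounded-differences argument.
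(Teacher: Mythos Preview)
Your high-level strategy (bounded differences in $\sigma$, then McDiarmid's MGF bound) is the same as the paper's, but your bounded-differences step has a real gap. You assert that ``the constraint polytopes depend on $f$ but not on $\sigma$, so they remain fixed under the sign flip.'' This is false in the paper's construction: for each fixed realization $\sigma$, the saddle value is exactly $\lambda\bigl(\smce(f,S_\sigma)-\smce(f,S'_\sigma)\bigr)$, where $S_\sigma, S'_\sigma$ are the datasets obtained by swapping $(x_i,y_i)\leftrightarrow(x'_i,y'_i)$ whenever $\sigma_i=-1$. In particular the Lipschitz constraints $|\omega_i-\omega_j|\le |f(x_i^\sigma)-f(x_j^\sigma)|$ on $\omega$ depend on $\sigma$, and likewise for $\omega'$. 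Moreover, this $\sigma$-dependence of the feasible sets is precisely what makes the mean-zero property $\mathbb{E}_\sigma[F(\sigma)]=0$ (Eq.~\eqref{app_proof_exponential_moment}) true: if you freeze the constraint sets at $\Omega(S),\Omega(S')$, then in the $n=2$ expansion terms~1 and~4 no longer cancel unless $\Omega(S)=\Omega(S')$. So you cannot simultaneously have fixed constraint sets (needed for your sup--inf stability inequality) and $\mathbb{E}_\sigma[F]=0$.

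The paper avoids this by never invoking a pointwise sup--inf stability bound. Instead it uses the identification $F(\sigma)=\lambda\bigl(\smce(f,S_\sigma)-\smce(f,S'_\sigma)\bigr)$ together with the one-point stability of $\smce$ established earlier (Eq.~\eqref{eq_stability_n}): flipping $\sigma_j$ swaps exactly one data point in each of $S_\sigma$ and $S'_\sigma$, so $|F(\sigma)-F(\sigma^{(j)})|\le 2\lambda/n$. Plugging $c_j=2\lambda/n$ into the McDiarmid MGF bound gives exactly $\exp(\lambda^2/(2n))$. Your crude $4\lambda/n$ estimate, even were it justified, would yield $\exp(2\lambda^2/n)$---fine for Theorem~\ref{thm_main_gen} up to constants, but not the lemma as stated. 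Your closing remark about ``exploiting that the swap structure is symmetric in $Z_i,Z'_i$'' is in fact the route the paper takes; making it precise requires the $\smce$ stability argument, not the pointwise objective bound.
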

The proof of this theorem is provided in Appendix~\ref{proof_exp}. Using this, we obtain
\begin{align}
R(\calf, S_n, S'_n) \leq \frac{\log |\calf|}{\lambda} + \frac{\lambda}{2n}.
\end{align}
Optimizing over \( \lambda \), we find
\begin{align}\label{eq_rademacher_masar}
R(\calf, S_n, S'_n) \leq \sqrt{ \frac{2 \log |\calf|}{n} }.
\end{align}

This is a modified version of Massart’s lemma since the left-hand side does not represent the classical Rademacher complexity.

Since \( \calf \) can be uncountable, we apply a discretization argument using the covering number. By the definition of the supremum, for any \( \epsilon' \in \mathbb{R}^+ \), there exists \( f^* \in \calf \) such that
\begin{align}
  \mathbb{E}_{\sigma} \sup_{f \in \calf} \frac{1}{n} \sum_{i=1}^{n} \sigma_i \phi(f, Z_i ,Z'_i)
  = \mathbb{E}_{\sigma}\frac{1}{n} \sum_{i=1}^{n} \sigma_i \phi(f, Z_i ,Z'_i) + \epsilon'.
\end{align}

Here we present a stronger result compared to the theorem presented in the main paper,  based on the \( L_2(S_n) \) pseudometric, which is defined as:
\[
\|f - f'\|_{L_2(S_n)} \coloneqq \sqrt{\frac{1}{n} \sum_{i=1}^n |f(X_i) - f'(X_i)|^2}.
\]
By the definition of the \( \epsilon \)-covering, for a given \( f^* \in \calf \), there exists \( \tilde f \in \caln(\epsilon, \calf, L_2(S_{2n})) \) such that
\[
\|f^* - \tilde f\|_{L_2(S_{2n})} \leq \epsilon.
\]
We first derive the generalization bound using the \( L_2(S_n) \) pseudometric, and subsequently upper bound it in terms of the \( \|\cdot\|_{\infty} \) norm. By definition, we have
\begin{align}
    \mathbb{E}_{\sigma} \max_{\omega} \min_{\omega'} \frac{1}{n} \sum_{i=1}^{n} \sigma_i \phi(f^*, Z_i, Z'_i)
    \leq \mathbb{E}_{\sigma} \max_{\omega} \min_{\omega'} \frac{1}{n} \sum_{i=1}^{n} \sigma_i \phi(\tilde f, Z_i, Z'_i) + 2\epsilon.
\end{align}

To prove this, we fix \( \sigma \) and expand the difference:
\begin{align}
&\frac{1}{n} \sum_{i=1}^n \sigma_i \left[
(y_i - f^*(x_i)) \left( \frac{1+\sigma_i}{2} \omega_i + \frac{1-\sigma_i}{2} \omega'_i \right) 
- (y'_i - f^*(x'_i)) \left( \frac{1+\sigma_i}{2} \omega'_i + \frac{1-\sigma_i}{2} \omega_i \right)
\right] \\
&= \frac{1}{n} \sum_{i=1}^n \sigma_i \left[
(y_i - \tilde f(x_i)) \left( \frac{1+\sigma_i}{2} \omega_i + \frac{1-\sigma_i}{2} \omega'_i \right)
- (y'_i - \tilde f(x'_i)) \left( \frac{1+\sigma_i}{2} \omega'_i + \frac{1-\sigma_i}{2} \omega_i \right)
\right] \\
&\quad + \frac{1}{n} \sum_{i=1}^n \sigma_i \left[
(\tilde f(x_i) - f^*(x_i)) \left( \frac{1+\sigma_i}{2} \omega_i + \frac{1-\sigma_i}{2} \omega'_i \right)
- (\tilde f(x'_i) - f^*(x'_i)) \left( \frac{1+\sigma_i}{2} \omega'_i + \frac{1-\sigma_i}{2} \omega_i \right)
\right] \\
&\leq \text{(first term)} + \sqrt{ \frac{1}{n} \sum_{i=1}^n |\tilde f(x_i) - f^*(x_i)|^2 }
+ \sqrt{ \frac{1}{n} \sum_{i=1}^n |\tilde f(x'_i) - f^*(x'_i)|^2 } \\
&\leq \text{(first term)} + 2\epsilon,
\end{align}
where we used the Cauchy–Schwarz inequality and the bounds \( |\omega_i|, |\omega'_i| \leq 1 \). Taking expectations over \( \sigma \) and maximizing/minimizing over \( \omega, \omega' \), the result follows.

Thus, we obtain:
\begin{align}
R(\calf, S_n, S'_n) \leq \sqrt{ \frac{2 \log N(\epsilon, \calf, L_2(S_{2n})) }{n} } + 2\epsilon + \epsilon'. \label{eq_discretization}
\end{align}
Letting \( \epsilon' \to 0 \), we have
\begin{align}\label{eq_discretization_naive}
R(\calf, S_n, S'_n) \leq 2\epsilon + \sqrt{ \frac{2 \log N(\epsilon, \calf, L_2(S_n)) }{n} }.
\end{align}

We can use Eq.~\eqref{eq_discretization_naive} for the uniform convergence bound, however, to get the refined dependency, we use the chaining technique; Let \( \epsilon_\ell = 2^{-\ell}  \) for \( \ell = 0, 1, 2, \dots \).  
Let \( \calf_\ell \) be an \( \epsilon_\ell \)-cover of \( \calf \) with metric \( L_2(S_{2n}) \), and define \( N_\ell = |\calf_\ell| = N (\epsilon_\ell, \calf, L_2(S_{2n})) \).  
We may set \( \calf_0 = \{0\} \) at scale \( \epsilon_0 = 1 \).

For each \( f \in \calf \), we consider \( f_\ell(f) \in \calf_\ell \) so that
\[
\|f - f_\ell(f)\|_{L_2(S_{2n})} \leq \epsilon_\ell.
\]
Based on the standard chaining technique, which uses \( f \in \calf \), we consider the following multi-scale decomposition:
\[
f = (f - f_L(f)) + \sum_{\ell=1}^{L} (f_\ell(f) - g_{\ell-1}(f)).
\]
By the triangle inequality, we have
\[
\|f_\ell(f) - f_{\ell-1}(f)\|_{L_2(S_n)} \leq \|f_\ell(f) - f\|_{L_2(S_n)} + \|f_{\ell-1}(f) - f\|_{L_2(S_n)} \leq 3\epsilon_\ell.
\]

Note that the number of distinct \( f_\ell(f) - f_{\ell-1}(f) \) is at most \( N_\ell N_{\ell-1} \).  

Similar to the derivation of Eqs.~\eqref{eq_discretization} and \eqref{eq_discretization_naive}, regarding $\phi(f, Z_i,Z'_i)$ as $f(Z_i)$, we need to care that $2\epsilon$ appears. Therefore, given $\epsilon$-cover for $\calf$,
\[
\|f - f_\ell(f)\|_{L_2(S_{2n})} \leq 2\epsilon_\ell.
\]
and
\[
\|f_\ell(f) - f_{\ell-1}(f)\|_{L_2(S_n)} \leq \|f_\ell(f) - f\|_{L_2(S_n)} + \|f_{\ell-1}(f) - f\|_{L_2(S_n)} \leq (2+4)\epsilon_\ell.
\]
holds if regarding $\phi(f, Z_i,Z'_i)$ as $f(Z_i)$ in the above.

Then we can consider the following decomposition;
\begin{align}
    R(\calf, S_n, S'_n) &= \mathbb{E}_\sigma \sup_{f \in \calf} \frac{1}{n} \sum_{i=1}^{n} \sigma_i
\left[
(f - f_L(f))(Z_i) + \sum_{\ell=1}^{L} (f_\ell(f) - f_{\ell-1}(f))(Z_i)
\right]\\
&\leq \mathbb{E}_\sigma \sup_{f \in \calf} \frac{1}{n} \sum_{i=1}^{n} \sigma_i (f - f_L(f))(Z_i) +
\sum_{\ell=1}^{L} \mathbb{E}_\sigma \sup_{f \in \calf} \frac{1}{n} \sum_{i=1}^{n} \sigma_i (f_\ell(f) - f_{\ell-1}(f))(Z_i)\\
&{\leq} 2\epsilon_L +
\sum_{\ell=1}^{L} \sup_{f \in \calf} \|f_\ell(f) - f_{\ell-1}(f)\|_{L_2(S_n)} \sqrt{\frac{2 \ln (N_\ell N_{\ell-1})}{n}}\\
&{\leq} 2\epsilon_L + 6 \sum_{\ell=1}^{L} \epsilon_\ell \sqrt{\frac{2 \ln (N_\ell N_{\ell-1})}{n}}\\
&\leq 2\epsilon_L + 24 \sum_{\ell=1}^{L} (\epsilon_\ell - \epsilon_{\ell+1}) \sqrt{\frac{\ln N_\ell}{n}}\\
&\leq 2\epsilon_L + 24 \int_{\epsilon_{L+1}}^{\epsilon_0} \frac{\sqrt{\ln N (\epsilon', \calf, L_2(S_n))}}{\sqrt{n}}   d\epsilon'.
\end{align}
Then for any $\epsilon> 0$, we pick $L =\sup\{j : \epsilon_j > 2\epsilon\}$. 
This simple $\epsilon_{L+1}\leq 2\epsilon$, thus $\epsilon_L\leq 4\epsilon$ holds. By definition of $L$, $\epsilon_L > 2\epsilon$ and this implies $\epsilon_{L+1}>\epsilon$

Therefore, we have
\begin{align}
    R(\calf, S_n, S'_n) \leq \inf_{\epsilon \geq 0} \left[
8\epsilon + 24 \int_{\epsilon}^{1} 
\frac{\sqrt{\ln N (\epsilon', \calf, L_2(S_n))}}{\sqrt{n}}   d\epsilon'
\right].\label{eq_first_dudley}
\end{align}
By definition, we only need to take $\epsilon\leq 1$.

Finally using the fact that $\ln N (\epsilon', \calf, L_2(S_n))\leq \ln N (\epsilon', \calf, \|\cdot\|_{\infty})$ \citep{wainwright_2019}, we obtain the result.

\end{proof}

Finally, we remark on the case of \( \smce^{\sigma} \). The dual smooth CE under the dataset \( S \) is equivalent to the optimal value of the following optimization problem:
\begin{align}
\smce^\sigma(g, S) = \max_{\{\omega_i\}} \frac{1}{n} \sum_{i=1}^{n} (y_i - v_i) \omega_i
\end{align}
subject to the constraints:
\[
-1 \leq \omega_i \leq 1, \quad \forall i, \quad |\omega_i - \omega_j| \leq \frac{1}{4} |g(X_i) - g(X_j)|, \quad \forall i, j,
\]
where \( v_i = \sigma(g(X_i)) \), and \( \sigma \) denotes the sigmoid function. These constraints can also be rewritten as:
\[
-1 \leq \omega_i \leq 1, \quad \forall i, \quad |\omega_i - \omega_j| \leq \frac{1}{4} |\sigma^{-1}(v_i) - \sigma^{-1}(v_j)|, \quad \forall i, j.
\]

By definition, the only difference from the standard smooth CE formulation lies in the constraint of the linear program. Since the resulting problem remains a convex optimization, the same proof techniques developed above can be applied to the dual formulation as well. To carry out a similar analysis, it is necessary to bound the range of the logit function \( g \).

\subsection{Proof of Lemma~\ref{mcdiramid_inequ_prof}}\label{mcdiramid_inequ_proof_stab}
\begin{proof}
Since \( \phi(f, S) = \frac{1}{n} \sum_{i=1}^{n} \phi(f, z_i) = \frac{1}{n} \sum_{i=1}^{n} (y_i - f(x_i)) \omega^*_i \), where \( S = \{z_i\}_{i=1}^n \) consists of i.i.d.\ samples and \( |(y_i - f(x_i)) \omega^*_i| \leq 1 \), one may wish to apply Hoeffding's inequality. However, the coefficients \( \omega^*_i \) are obtained as the solution to a linear program that depends on the dataset \( \ste \). Consequently, the terms \( (y_i - f(x_i)) \omega^*_i \) are not independent, and Hoeffding's inequality cannot be applied.

Instead, we employ McDiarmid's inequality, which requires only the bounded difference property of \( \phi(f, \ste) \). For completeness, we state McDiarmid's inequality below:

\begin{lemma}\citep{Boucheron2013}\label{lem_mcdiamid}
We say that a function $f:\calx\to\mathbb{R}$ has the bounded difference property if for some nonnegative constants $c_1,\dots,c_n$,
\begin{align}\label{bounded_mcdiramid_constants}
    \sup_{x_1,\dots,x_n,x_i'\in\calx}|f(x_1,\dots,x_n)-f(x_1,\dots,x_{i-1},x'_i,x_{i+1},\dots,x_n)|\leq c_i,\quad 1\leq i\leq n.
\end{align}
 If $X_1,\dots,X_n$ are independent random variables taking values in $\calx$ and $f$ has the bounded difference property with constants $c_1,\dots,c_n$, then  for any $t\in\mathbb{R}$, we have
\begin{align}
\Ex\left[e^{t (f(X_1,\dots,X_n)-\Ex[f(X_1,\dots,X_n)])}\right]\leq e^{\frac{t^2}{8}\sum_{i=1}^n c_i^2}.
\end{align}
Moreover
\begin{align}\label{mcdiramid_inequality_first}
&\mathrm{Pr}\left(f(X_1,\dots,X_n)-\Ex[f(X_1,\dots,X_n)]\geq \epsilon \right)\leq e^{\frac{-2\epsilon^2}{\sum_{i=1}^n c_i^2}}    \\
&\mathrm{Pr}\left(f(X_1,\dots,X_n)-\Ex[f(X_1,\dots,X_n)]\leq -\epsilon \right)\leq e^{\frac{-2\epsilon^2}{\sum_{i=1}^n c_i^2}} 
\end{align}
\end{lemma}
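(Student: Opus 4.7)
The plan is to prove McDiarmid's inequality by the standard Doob-martingale / Azuma-Hoeffding argument. First I would introduce the Doob martingale associated with $f$, namely $V_k \coloneqq \Ex[f(X_1,\dots,X_n)\mid X_1,\dots,X_k]$ for $k=0,1,\dots,n$, so that $V_0 = \Ex[f(X_1,\dots,X_n)]$ and $V_n = f(X_1,\dots,X_n)$. Writing $D_k \coloneqq V_k - V_{k-1}$, we get the telescoping decomposition
\begin{align}
f(X_1,\dots,X_n) - \Ex[f(X_1,\dots,X_n)] = \sum_{k=1}^{n} D_k,
\end{align}
where $\{D_k\}$ is a martingale difference sequence with respect to the filtration $\mathcal{F}_k = \sigma(X_1,\dots,X_k)$.

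The next step is to bound the conditional range of each $D_k$. Using independence of the $X_i$'s, I would write
\begin{align}
D_k = \Ex[f(X_1,\dots,X_n)\mid X_1,\dots,X_k] - \Ex[f(X_1,\dots,X_n)\mid X_1,\dots,X_{k-1}]
\end{align}
and, conditional on $X_1,\dots,X_{k-1}$, define the functions
\begin{align}
U_k &\coloneqq \sup_{x\in\calx} \Ex[f(X_1,\dots,X_{k-1},x,X_{k+1},\dots,X_n)\mid X_1,\dots,X_{k-1}],\\
L_k &\coloneqq \inf_{x\in\calx} \Ex[f(X_1,\dots,X_{k-1},x,X_{k+1},\dots,X_n)\mid X_1,\dots,X_{k-1}].
\end{align}
The bounded-difference hypothesis~\eqref{bounded_mcdiramid_constants} yields $U_k - L_k \leq c_k$ (by pulling the sup/inf inside the integral and using independence to interchange the conditional expectation with the substitution of $X_k$). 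Since $D_k$ lies in an interval of length at most $c_k$ conditional on $\mathcal{F}_{k-1}$ and has conditional mean zero, Hoeffding's lemma for bounded zero-mean random variables gives
\begin{align}
\Ex\bigl[e^{tD_k}\,\big|\,\mathcal{F}_{k-1}\bigr] \leq \exp\!\Bigl(\tfrac{t^2 c_k^2}{8}\Bigr) \quad \text{a.s.}
\end{align}

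Combining these ingredients via the tower property finishes the MGF bound: conditioning successively on $\mathcal{F}_{n-1},\mathcal{F}_{n-2},\dots,\mathcal{F}_1$, one obtains
\begin{align}
\Ex\!\left[e^{t\sum_{k=1}^n D_k}\right]
\leq \prod_{k=1}^n \exp\!\Bigl(\tfrac{t^2 c_k^2}{8}\Bigr)
= \exp\!\Bigl(\tfrac{t^2}{8}\sum_{k=1}^n c_k^2\Bigr),
\end{align}
which is the first claim. The two tail bounds then follow by a standard Chernoff argument: for the upper tail, Markov's inequality gives $\Pr(f - \Ex f \geq \epsilon) \leq e^{-t\epsilon}\Ex[e^{t(f-\Ex f)}] \leq \exp(-t\epsilon + t^2\sum c_k^2/8)$, and optimizing at $t = 4\epsilon/\sum c_k^2$ yields $\exp(-2\epsilon^2/\sum c_k^2)$. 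The lower tail follows by applying the same argument to $-f$, whose bounded-difference constants are unchanged.

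The one genuinely nontrivial step is the conditional range bound $U_k - L_k \leq c_k$: it requires carefully swapping the conditional expectation over $(X_{k+1},\dots,X_n)$ with the sup/inf over $x$, and this swap is valid precisely because independence makes the conditional distribution of $(X_{k+1},\dots,X_n)$ given $\mathcal{F}_{k-1}$ not depend on the choice of $x$. Everything else (telescoping, Hoeffding's lemma on $[a,b]$ with mean zero, the Chernoff optimization) is routine and I would not reproduce the calculations in detail.
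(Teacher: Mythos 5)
Your proof is correct: the Doob-martingale decomposition, the conditional range bound $U_k-L_k\le c_k$ via independence, Hoeffding's lemma applied conditionally, the tower-property product bound, and the Chernoff optimization at $t=4\epsilon/\sum_i c_i^2$ all check out. The paper does not prove this lemma at all—it is quoted verbatim from \citet{Boucheron2013}—and your argument is precisely the standard proof given in that reference, so there is nothing to compare beyond noting that your proof is the canonical one.
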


Therefore, we are required to estimate the constants \( c_i \) in Eq.~\eqref{bounded_mcdiramid_constants}. To this end, consider replacing the \( i \)-th data point \( z_i \) with \( z'_i \), and let the resulting dataset be denoted by \( S'_n = (z_1, \dots, z_{i-1}, z'_i, z_{i+1}, \dots, z_n) \). We define the following notation:
\begin{align}
    \smce(f, S_n) = \frac{1}{n} \sum_{j=1}^{n} (y_j - f(x_j)) \omega^*_j,
\end{align}
where \( \omega^*_j \) is the solution of Eq.~\eqref{lin_ce} given \( f \) and \( S_n \). Similarly, define
\begin{align}
    \smce(f, S'_n) = \frac{1}{n} \sum_{j \neq i}^{n} (y_j - f(x_j)) \omega^{'*}_j + \frac{1}{n}(y'_i - f(x'_i)) \omega^{'*}_i,
\end{align}
where \( \omega^{'*}_j \) is the solution of Eq.~\eqref{lin_ce} given \( f \) and \( S'_n \).

We now evaluate the change in the smooth CE under this replacement:
\begin{align}
    \smce(f, S_n) - \smce(f, S'_n)
    &= \frac{1}{n} \sum_{j=1}^{n} (y_j - f(x_j)) \omega^*_j 
    - \left[ \frac{1}{n} \sum_{j \neq i}^{n} (y_j - f(x_j)) \omega^{'*}_j + \frac{1}{n}(y'_i - f(x'_i)) \omega^{'*}_i \right] \\
    &\leq \frac{1}{n} \sum_{j=1}^{n} (y_j - f(x_j)) \omega^*_j 
    - \left[ \frac{1}{n} \sum_{j \neq i}^{n} (y_j - f(x_j)) \omega^*_j + \frac{1}{n}(y'_i - f(x'_i)) \omega^*_i \right] \\
    &= \frac{1}{n}(y_i - f(x_i)) \omega^*_i - \frac{1}{n}(y'_i - f(x'_i)) \omega^*_i \\
    &= \frac{1}{n}(y_i - y'_i) \omega^*_i+\frac{1}{n}(f(x'_i) - f(x_i)) \omega^*_i \leq \frac{2}{n},
\end{align}
where the inequality uses the definition of \( \omega^*_j \) and the bound \( |\omega^*_i| \leq 1 \). Similarly, we have:
\begin{align}
    \smce(f, S_n) - \smce(f, S'_n)
    &= \frac{1}{n} \sum_{j=1}^{n} (y_j - f(x_j)) \omega^*_j 
    - \left[ \frac{1}{n} \sum_{j \neq i}^{n} (y_j - f(x_j)) \omega^{'*}_j + \frac{1}{n}(y'_i - f(x'_i)) \omega^{'*}_i \right] \\
    &\geq \frac{1}{n} \sum_{j=1}^{n} (y_j - f(x_j)) \omega^{'*}_j 
    - \left[ \frac{1}{n} \sum_{j \neq i}^{n} (y_j - f(x_j)) \omega^{'*}_j + \frac{1}{n}(y'_i - f(x'_i)) \omega^{'*}_i \right] \\
    &= \frac{1}{n}(y_i - f(x_i)) \omega^{'*}_i - \frac{1}{n}(y'_i - f(x'_i)) \omega^{'*}_i \\
    &= \frac{1}{n}(y'_i - y_i) \omega^*_i+\frac{1}{n}(f(x'_i) - f(x_i)) \omega^{'*}_i \geq -\frac{2}{n}.
\end{align}

Combining the two results, we obtain:
\begin{align}
    \left| \smce(f, S_n) - \smce(f, S'_n) \right| \leq \frac{2}{n}.
\end{align}

Thus, the bounded difference constant \( c_i \) in McDiarmid’s inequality satisfies:
\begin{align}\label{eq_stability_n}
\sup_{\{z_j\}_{j=1}^n, \tilde{z}'_i} \left| \smce(f, \cald, S_n) - \smce(f, \cald, S'_n) \right| \leq \frac{1}{n}.
\end{align}
Our goal is now to study the uniform stability of the quantity \( \sup_{f \in \calf} |\phi(f, \ste) - \phi(f, \str)| \). By definition, for any \( \epsilon \in \mathbb{R}^+ \), there exists \( g \in \calf \) such that
\begin{align}
    \sup_{f \in \calf} |\phi(f, \ste) - \phi(f, \str)| \leq |\phi(g, \ste) - \phi(g, \str)| + \epsilon.
\end{align}
Therefore, it suffices to study the stability coefficient \( c_i \) for \( |\phi(g, \ste) - \phi(g, \str)| \). Consider the combined dataset \( S = \ste \cup \str \sim \cald^{2n} \), and analyze the effect of replacing a single data point in \( S \), which consists of \( 2n \) i.i.d.\ samples.

We first consider the case where the replaced data point is from the test set \( \ste = \{\tilde z_i\}_{i=1}^n \). Let the perturbed dataset be
\[
\ste' = (\tilde z_1, \dots, \tilde z_{i-1}, z'_i, \tilde z_{i+1}, \dots, \tilde z_n).
\]
Then, using the triangle inequality and Eq.~\eqref{eq_stability_n}, we have:
\begin{align}
    |\phi(g, \ste) - \phi(g, \str)| - |\phi(g, \ste') - \phi(g, \str)| 
    &\leq |\phi(g, \ste) - \phi(g, \ste')| \leq \frac{2}{n}, \\
    |\phi(g, \ste) - \phi(g, \str)| - |\phi(g, \ste') - \phi(g, \str)| 
    &\geq -|\phi(g, \ste) - \phi(g, \ste')| \geq -\frac{2}{n}.
\end{align}
Combining these, we obtain:
\begin{align}
    \left| |\phi(g, \ste) - \phi(g, \str)| - |\phi(g, \ste') - \phi(g, \str)| \right| \leq \frac{2}{n}.
\end{align}

A similar analysis applies when the replaced data point is from the training set \( \str = \{z_i\}_{i=1}^n \). Let the perturbed test set be
\[
\str' = (z_1, \dots, z_{i-1}, z'_i, z_{i+1}, \dots, z_n).
\]
Then,
\begin{align}
    \left| |\phi(g, \ste) - \phi(g, \str)| - |\phi(g, \ste) - \phi(g, \str')| \right| \leq \frac{2}{n}.
\end{align}

Therefore, the bounded difference coefficient for each data point in the combined dataset is
\begin{align}
    c_i = \sup_{\{z_j\}_{j=1}^{2n}, z'_i} \left| \phi(g, \ste) - \phi(g, \str) - \left( \phi(g, \ste') - \phi(g, \str) \right) \right| \leq \frac{2}{n},
\end{align}
for all \( i = 1, \dots, 2n \).

Applying McDiarmid’s inequality (Eq.~\eqref{mcdiramid_inequality_first}) with these coefficients yields:
\begin{align}
    \Pr \left( \sup_{f \in \calf} |\phi(f, \ste) - \phi(f, \str)| - \mathbb{E}_{\ste, \str \sim \cald^{2n}} \sup_{f \in \calf} |\phi(f, \ste) - \phi(f, \str)| \geq \epsilon \right) 
    \leq \exp(-n \epsilon^2/4).
\end{align}

\end{proof}

\subsection{Proof of Lemma~\ref{lemma_exponential_momnet}}\label{proof_exp}
Similar to the proof of Lemma~\ref{mcdiramid_inequ_prof} in Appendix~\ref{mcdiramid_inequ_proof_stab}, we apply McDiarmid's inequality to evaluate the exponential moment. To this end, we first compute the bounded difference constants in Eq.~\eqref{bounded_mcdiramid_constants}. Recall the definition:
\begin{align}
    \max_{\omega}\min_{\omega'} 
\frac{\lambda}{n} \sum_{i=1}^{n} \sigma_i \phi(f, Z_i, Z'_i),
\end{align}
where
\begin{align}
    \phi(f, Z_i, Z'_i) \coloneqq \left[(y_i - f(x_i)) \left(\tfrac{1+\sigma_i}{2}\omega_i + \tfrac{1-\sigma_i}{2}\omega'_i\right) -  (y'_i - f(x'_i))\left(\tfrac{1+\sigma_i}{2}\omega'_i + \tfrac{1-\sigma_i}{2}\omega_i\right)\right].
\end{align}

For clarity, we first analyze the special case where \( \sigma_i = 1 \) for all \( i \in [n] \); the general case is handled later. Under this assumption, the expression reduces to the difference of smooth CEs:
\begin{align}\label{eq_sigma_origin_app}
    \max_{\omega}\min_{\omega'} 
\frac{\lambda}{n} \sum_{i=1}^{n}  \phi(f, Z_i, Z'_i) = \lambda\big(\smce(f,S_n) -
\smce(f,S'_n)\big).
\end{align}

Now fix the dataset and flip a single coordinate \( \sigma_j \) from \( 1 \) to \( -1 \). Only the \( j \)-th term is affected. For \( \sigma_j = 1 \), the term is
\[
(y_j - f(x_j))\omega_j - (y'_j - f(x'_j))\omega'_j,
\]
while for \( \sigma_j = -1 \), it becomes
\[
(y'_j - f(x'_j))\omega_j - (y_j - f(x_j))\omega'_j.
\]
This corresponds to swapping the training and test inputs in the smooth CE optimization.

Formally, define the datasets:
\[
\tilde S_n = ((x_1, y_1), \dots, (x'_j, y'_j), \dots, (x_n, y_n)), \quad 
\tilde S'_n = ((x'_1, y'_1), \dots, (x_j, y_j), \dots, (x'_n, y'_n)).
\]
Then the exponent, with $\sigma_i=1$ for $i\neq j$ and $\sigma_j=-1$, can be written as
\begin{align}
    \max_{\omega}\min_{\omega'} 
\frac{\lambda}{n} \sum_{i \neq j} \phi(f, Z_i, Z'_i) - \frac{\lambda}{n} \phi(f, Z_j, Z'_j)
= \lambda\big(\smce(f,\tilde S_n) - \smce(f,\tilde S'_n)\big).
\end{align}

Using Eq.~\eqref{eq_stability_n}, which bounds the stability of \( \smce \), and combining with Eq.~\eqref{eq_sigma_origin_app}, we obtain:
\begin{align}
&\left| \max_{\omega}\min_{\omega'} 
\frac{\lambda}{n} \sum_{i=1}^{n} \phi(f, Z_i, Z'_i)
- \left(\max_{\omega}\min_{\omega'} 
\frac{\lambda}{n} \sum_{i \neq j} \phi(f, Z_i, Z'_i) - \frac{\lambda}{n} \phi(f, Z_j, Z'_j) \right) \right| \\
&= \lambda \big|\smce(f, S_n) - \smce(f, \tilde S_n)\big| 
+ \lambda \big|\smce(f, S'_n) - \smce(f, \tilde S'_n)\big| \\
&\leq \tfrac{2\lambda}{n}.
\end{align}

Since this bound holds for arbitrary \( j \in [n] \), we have:
\begin{align}\label{eq_replacement_index_supj}
\sup_{j \in [n]}\! \Big| \!
\max_{\omega}\min_{\omega'} 
\frac{\lambda}{n} \sum_{i=1}^{n} \phi(f, Z_i, Z'_i)
\!-\! \Big(\!\max_{\omega}\min_{\omega'} 
\frac{\lambda}{n} \sum_{i \neq j}\! \phi(f, Z_i, Z'_i) 
\!-\! \frac{\lambda}{n} \phi(f, Z_j, Z'_j)\! \Big)\! \Big|\! \leq \!\frac{2\lambda}{n}.
\end{align}

The same reasoning applies to any fixed realization \( \sigma = \sigma|_{\pm} \) of the Rademacher variables (note that $\sigma|_{\pm}$ is a fixed realization and not a random variable). For each such realization, we construct the datasets \( S_{\sigma|_{\pm}} \) and \( S'_{\sigma|_{\pm}} \) used in the smooth CE optimization. Then,
\begin{align}
    \max_{\omega}\min_{\omega'} 
\frac{\lambda}{n} \sum_{i=1}^{n} \sigma|_{\pm} \phi(f, Z_i, Z'_i)
= \lambda\left(\smce(f, S_{\sigma|_{\pm}}) - \smce(f, S'_{\sigma|_{\pm}})\right),
\end{align}
and changing any single component of \( \sigma \) alters the value by at most \( \frac{2\lambda}{n} \), as in Eq.~\eqref{eq_replacement_index_supj}.

Therefore, by the argument in Appendix~\ref{mcdiramid_inequ_proof_stab}, the sensitivity coefficients satisfy \( c_i = \frac{2\lambda}{n} \) for all \( i \in [n] \). Substituting this into Lemma~4 in Appendix~\ref{mcdiramid_inequ_proof_stab} yields the desired exponential moment bound.

\subsection{Proof of Corollary~\ref{thm_erm_smooth_gen_complete}}
\begin{proof}
As stated in the main paper, \citet{blasiok2023unify} proved in Theorem 9.5 that, with probability at least \( 1 - \delta \) over the draw of the test dataset,
\begin{align}\label{app_eq_proof_Lipshitz}
|\smce(f, \cald) - \smce(f, \ste)| \leq 2\mathfrak{R}_{\cald,n}(\lip_1([0,1],[-1,1])) + \sqrt{\frac{\log\frac{2}{\delta}}{2n}}.
\end{align}
They further showed that there exists a universal constant \( C \) such that
\[
\mathfrak{R}_{\cald,n}(\lip_1([0,1],[-1,1])) \leq \frac{C}{\sqrt{n}},
\]
based on the result of \citet{ambroladze2004complexity}. See also \citet{luxburg2004distance} for the derivation of Rademacher complexity of lipshictz functions.

Combining this with Theorem~\ref{thm_main_gen} and Corollary~\ref{col_from_rad_cov}, and applying the triangle inequality, we obtain:
\begin{align}
&\sup_{f} |\smce(f, \cald) - \smce(f, \str)|\\
&\leq \sup_{f} |\smce(f, \cald) - \smce(f, \ste)| 
+ \sup_{f} |\smce(f, \ste) - \smce(f, \str)|.
\end{align}

We now allocate the total failure probability \( \delta \) across the two terms using the union bound. Specifically, we set \( \delta \to \frac{2}{3}\delta \) in Eq.~\eqref{app_eq_proof_Lipshitz}, and \( \delta \to \frac{1}{3}\delta \) in Theorem~\ref{thm_main_gen} or Corollary~\ref{col_from_rad_cov}. Then, the sum of the confidence terms becomes:
\begin{align}
\sqrt{\frac{\log\frac{2}{\frac{2}{3}\delta}}{2n}} 
+ 2\sqrt{\frac{\log\frac{1}{\frac{1}{3}\delta}}{n}} 
= \left(2 + \frac{1}{\sqrt{2}}\right) \sqrt{\frac{\log\frac{3}{\delta}}{n}} 
\leq 3\sqrt{\frac{\log\frac{3}{\delta}}{n}}.
\end{align}

Thus, we obtain the desired high-probability bound.
\end{proof}

\section{Analysis based on Rademacher complexity}\label{appendix_Rademacher}

\subsection{Proof of Theorem~\ref{thm_erm_smooth_gen}}\label{sec_proof_rad}
To simplify the notation, we define
\begin{align}
    \phi(f,h,x,y)=h(f(x)) \cdot (y - f(x)).
\end{align}
Then
\begin{align}
&\smce(f,  \cald) - \smce(f,  \str) \\
&\leq \sup_{f \in \calf} \smce(f,  \cald) - \smce(f,  \str)\\
&\leq \sup_{f \in \calf}  \sup_{h \in \lip_{L=1}([0,1],[-1,1])}\Ex\phi(f,h,X,Y)- \sup_{h \in \lip_{L=1}([0,1],[-1,1])}\frac{1}{n}\sum_{i=1}^n\phi(f,h,X_i,Y_i)\\
&\leq \sup_{f \in \calf} \sup_{h \in \lip_{L=1}([0,1],[-1,1])}\left\{\Ex \phi(f,h,X,Y) - \frac{1}{n}\sum_{i=1}^n\phi(f,h,X_i,Y_i)\right\}\\
&\leq \sup_{h \in \lip_{L=1}([0,1],[-1,1])}\sup_{f \in \calf} \left\{\Ex \phi(f,h,X,Y) - \frac{1}{n}\sum_{i=1}^n\phi(f,h,X_i,Y_i)\right\}.
\end{align}
In the last line, we simply used the swapping of the supremum, which can be shown using the definition of the supremum. (In general, for any $f\in\calf$, $\phi(f,h)\leq  \sup_f' \phi(f',h)$, which leads to $\sup_h\phi(f,h)\leq \sup_h \sup_f' \phi(f',h)$. This further leads to $\sup_f\sup_h\phi(f,h)\leq \sup_h \sup_f' \phi(f',h)$.)

Then, we use Mcdiramid's inequality. The bounded difference for the training dataset is $2/n$, we have the following result; for any $\delta>0$ with probability $1-\delta/2$, we have
\begin{align}
&\sup_{h \in \lip_{L=1}([0,1],[-1,1])}\sup_{f \in \calf} \Ex \phi(f,h,X,Y) - \frac{1}{n}\sum_{i=1}^n\phi(f,h,X_i,Y_i) \\
&\leq \mathbb{E}_{\str \sim D^{n}} \sup_{h \in \lip_{L=1}([0,1],[-1,1])}\sup_{f \in \calf} \Ex \phi(f,h,X,Y) - \frac{1}{n}\sum_{i=1}^n\phi(f,h,X_i,Y_i)  + 2\sqrt{\frac{\log\frac{2}{\delta}}{n}}.
\end{align}

Then by considering the standard symmetrization property, we have the following upper bound
\begin{align}
&\sup_{h \in \lip_{L=1}([0,1],[-1,1])}\sup_{f \in \calf} \Ex \phi(f,h,X,Y) - \frac{1}{n}\sum_{i=1}^n\phi(f,h,X_i,Y_i) \\
&\leq 2\E_S \E_\sigma\left[ \sup_{h \in \lip_{L=1}([0,1],[-1,1])}\sup_{f \in \calf}  \frac1n\sum_{i=1}^n \sigma_i\phi(f,h,X_i,Y_i) \right]  + 2\sqrt{\frac{\log\frac{2}{\delta}}{n}}.
\end{align}

Finally, by considering the union bound, for any $\delta>0$ with probability $1-\delta$, we have
\begin{align}
&\sup_{h \in \lip_{L=1}([0,1],[-1,1])}\sup_{f \in \calf} \Big|\Ex \phi(f,h,X,Y) - \frac{1}{n}\sum_{i=1}^n\phi(f,h,X_i,Y_i)\Big| \\
&\leq 2\E_S \E_\sigma\left[ \sup_{h \in \lip_{L=1}([0,1],[-1,1])}\sup_{f \in \calf} \frac1n\sum_{i=1}^n \sigma_i\phi(f,h,X_i,Y_i) \right]  + 2\sqrt{\frac{\log\frac{2}{\delta}}{n}}.
\end{align}

Next, we evaluate the Rademacher complexity term.
\newcommand{\Rad}{{\mathfrak R}_{\cald,n}}
\newcommand{\Lip}{\mathrm{Lip}}

For the notational convenience, we define
\begin{align}
T
&:= \E_S \E_\sigma\left[ \sup_{h \in \lip_{L=1}([0,1],[-1,1])}\sup_{f \in \calf} \frac1n\sum_{i=1}^n \sigma_i\phi(f,h,X_i,Y_i) \right], \\
\eta_0
&:= \{ \eta:[0,1]\!\to\!\mathbb R \ \mid\ -1\leq \Lip(\eta)\le 1  \}. \nonumber
\end{align}

Our goal is to show
\begin{align}
T  \le  2\Rad(\mathcal{F})  +  \frac{12}{\sqrt n}\int_{0}^{1}\sqrt{ \log N\!\bigl(u,\eta_0,\|\cdot\|_\infty\bigr)} du.
\label{eq:goal}
\end{align}

% ------------------------- Step 1 -------------------------
First, we will consider an $\epsilon$-net of the Lipschitz class, which is independent of data and $\sigma$. Fix geometric scales $\varepsilon_k := 2^{-k}$ for $k=0,1,2,\dots$.
For each $k$, choose a \emph{fixed} (data- and $\sigma$-independent) $\varepsilon_k$-net
$\mathcal N_k \subset \eta_0$ under the uniform norm $\|\cdot\|_\infty$, with
$|\mathcal N_k| = N(\varepsilon_k,\eta_0,\|\cdot\|_\infty)$.
For every $\eta\in\eta_0$ pick a nearest projection $\pi_k(\eta)\in\mathcal N_k$ so that
$\|\eta-\pi_k(\eta)\|_\infty \le \varepsilon_k$.
Then, by the triangle inequality,
\begin{align}
\bigl\|\pi_k(\eta)-\pi_{k-1}(\eta)\bigr\|_\infty
 \le  \varepsilon_k+\varepsilon_{k-1}
 \le  2 \varepsilon_{k-1}.
\end{align}
Consequently,
\begin{align}
\eta  =  \pi_0(\eta)  +  \sum_{k\ge 1}\bigl(\pi_k(\eta)-\pi_{k-1}(\eta)\bigr).
\end{align}

We set $A(h)=\sup_{f \in \calf} \frac1n\sum_{i=1}^n \sigma_i\phi(f,h,X_i,Y_i) $. The discretization error by the $k$-th covering is given as
\begin{align}
    &\E_S \E_\sigma\sup_{h \in \lip} A(h)-\E_S \E_\sigma\sup_{h \in \mathcal{N}_k} A(h)\\
    &=\E_S \E_\sigma\sup_{h \in \lip} \sup_{f \in \calf} \frac1n\sum_{i=1}^n \sigma_i\phi(f,h,X_i,Y_i)-\E_S \E_\sigma\sup_{h \in \mathcal{N}_k}\sup_{f \in \calf} \frac1n\sum_{i=1}^n \sigma_i\phi(f,h,X_i,Y_i)\\
    &\leq \E_S \E_\sigma\sup_{h \in \lip} \frac1n\sum_{i=1}^n \sigma_i\phi(f',h,X_i,Y_i)-\sup_{h \in \mathcal{N}_k} \frac1n\sum_{i=1}^n \sigma_i\phi(f',h,X_i,Y_i)+2\epsilon'\\
    &\leq \epsilon_k+2\epsilon',
\end{align}
where $f'$ and $\epsilon'$ is coming from the definition of the supremum, that is, there exists some $f'\in\mathcal{F}$ such that $\sup_{f \in \calf} \frac1n\sum_{i=1}^n \sigma_i\phi(f,h,X_i,Y_i) \leq \frac1n\sum_{i=1}^n \sigma_i\phi(f',h,X_i,Y_i)+\epsilon' $ for any $\epsilon'\in\real_+$. Since we can take $\epsilon'$ arbitrarily small, the discretization error is $\epsilon_k$.

Next we show that under the $k$-th covering, we can disentangle the complexity of the lipshicit function class by the covering number as follows:
\begin{align}
&\E_S \E_\sigma\sup_{h \in \mathcal{N}_k} A(h)-\sup_{h \in \mathcal{N}_k}\E_S \E_\sigma\left[ \sup_{f \in \calf} \frac1n\sum_{i=1}^n \sigma_i\phi(f,h,X_i,Y_i)\right]\\
&=\frac{1}{t}\log \exp{t(\E_S \E_\sigma\sup_{h \in \mathcal{N}_k} A(h)-\sup_{h \in \mathcal{N}_k}\E_S \E_\sigma\left[ \sup_{f \in \calf} \frac1n\sum_{i=1}^n \sigma_i\phi(f,h,X_i,Y_i)\right])}\\
&\leq \frac{1}{t}\log \E_S \E_\sigma\exp{t(\sup_{h \in \mathcal{N}_k} A(h)-\sup_{h \in \mathcal{N}_k}\E_S \E_\sigma\left[ \sup_{f \in \calf} \frac1n\sum_{i=1}^n \sigma_i\phi(f,h,X_i,Y_i)\right])}\\
&\leq \frac{1}{t}\log \E_S \E_\sigma\exp{t(\sup_{h \in \mathcal{N}_k} A(h)-\E_S \E_\sigma\left[ \sup_{f \in \calf} \frac1n\sum_{i=1}^n \sigma_i\phi(f,h,X_i,Y_i)\right])}\\
&\leq \frac{1}{t}\log  \E_S \E_\sigma\sup_{h \in \mathcal{N}_k}\exp{t(A(h)-\E_S \E_\sigma\left[ \sup_{f \in \calf} \frac1n\sum_{i=1}^n \sigma_i\phi(f,h,X_i,Y_i)\right])}\\
&\leq \frac{1}{t}\log  \E_S \E_\sigma\sum_{h \in \mathcal{N}_k}\exp{t(A(h)-\E_S \E_\sigma\left[ \sup_{f \in \calf} \frac1n\sum_{i=1}^n \sigma_i\phi(f,h,X_i,Y_i)\right])}.
\end{align}
To evaluate this exponential moment, we use the Mcdiramid's inequality. The bounded difference coefficient is
\begin{align}
    &\sup_{f \in \calf} \frac1n\sum_{i=1}^n \sigma_i\phi(f,h,X_i,Y_i)-\left(\sup_{f' \in \calf} \frac1n\sum_{i\neq j}^n \sigma_i\phi(f',h,X_i,Y_i)+ \frac1n\sigma'_j\phi(f',h,X_j,Y_j)\right)\\
    &\leq \sup_{f \in \calf}  \frac1n(\sigma_j\phi(f',h,X_j,Y_j)-\sigma'_j\phi(f',h,X_j,Y_j))\leq\frac2n.
\end{align}
Thus
\begin{align}
\E_S \E_\sigma\sup_{h \in \mathcal{N}_k} A(h)-\sup_{h \in \mathcal{N}_k}\E_S \E_\sigma\left[ \sup_{f \in \calf} \frac1n\sum_{i=1}^n \sigma_i\phi(f,h,X_i,Y_i)\right]&\leq \frac{1}{t}\log  \sum_{h \in \mathcal{N}_k}\exp{t^2n(\frac2n)/8}\\
&=\frac{1}{t}\log |\mathcal{N}_k|+\frac{t}{2n},
\end{align}

Next, we analyze $\sup_{h \in \mathcal{N}_k}\E_S \E_\sigma\left[ \sup_{f \in \calf} \frac1n\sum_{i=1}^n \sigma_i\phi(f,h,X_i,Y_i)\right]$. $\E_S \E_\sigma\left[ \sup_{f \in \calf} \frac1n\sum_{i=1}^n \sigma_i\phi(f,h,X_i,Y_i)\right]$ is the Rademacher complexity of $\phi(f,h,X,Y)$. Moreover from Lemma 7.4 in \citet{blasiok2023unify}, $\phi(f,h,X,Y)$ is 2-Lipshitz with respect to $h$. Thus by Talagrand's contraction lemma, $\sup_{h \in \mathcal{N}_k}\E_S \E_\sigma\left[ \sup_{f \in \calf} \frac1n\sum_{i=1}^n \sigma_i\phi(f,h,X_i,Y_i)\right]\leq 2\mathfrak{R}_{\cald,n}(\calf)$.

Thus, in conclusion, we have
\begin{align}
&\E_S \E_\sigma\left[ \sup_{h \in \mathcal{N}_k}\sup_{f \in \calf} \frac1n\sum_{i=1}^n \sigma_i\phi(f,h,X_i,Y_i) \right] \leq 2\mathfrak{R}_{\cald,n}(\calf)+\frac{1}{t}\log |\mathcal{N}_k|+\frac{t}{2n}.
\end{align}

Thus, combining the discretization result
\begin{align}
&\E_S \E_\sigma\left[ \sup_{h \in \lip_{L=1}([0,1],[-1,1])}\sup_{f \in \calf} \frac1n\sum_{i=1}^n \sigma_i\phi(f,h,X_i,Y_i) \right] \leq 2\mathfrak{R}_{\cald,n}(\calf)+\sqrt{\frac{2\log |\mathcal{N}_k|}{n}}+\epsilon_k.
\end{align}
Then, following Dudley's integral approach as shown in Eq.~\eqref{eq_discretization_naive}, we have
\begin{align}
&\E_S \E_\sigma\left[ \sup_{h \in \lip_{L=1}([0,1],[-1,1])}\sup_{f \in \calf} \frac1n\sum_{i=1}^n \sigma_i\phi(f,h,X_i,Y_i) \right] \\
&\leq 2\mathfrak{R}_{\cald,n}(\calf)+\inf_{\epsilon \geq 0} \left[
4\epsilon + 12 \int_{\epsilon}^{1} 
\frac{\sqrt{\ln N(\varepsilon',\eta_0,\|\cdot\|_\infty)}}{\sqrt{n}}   d\epsilon'
\right]\\
&\leq 2\mathfrak{R}_{\cald,n}(\calf)+\frac{C}{\sqrt{n}},
\end{align} 
where we used the covering number estimate of the 1-dimensional Lipschitz functions, see \citet{wainwright_2019} for the details.

\subsection{Sub-optimal result of the Rademacher complexity}\label{app_suboptimal}
The evaluation of $|\smce(f, \ste) - \smce(f, \str)|$ using Rademacher complexity can be carried out in the same manner as in Appendix~\ref{sec_proof_rad}, except that randomness is now considered over the $2n$ data points from both the test and train sets. Apart from this difference, the argument is identical, and the resulting bound again depends—just as in Appendix~\ref{sec_proof_rad}—on the complexity of Lipschitz functions through covering arguments, as well as on the complexity of $\calf$. This reflects the inherent complexity of the composite function $h(f(X))$ appearing in the definition of smooth CE. Consequently, compared with Theorem~\ref{thm_main_gen}, where the covering-based proof avoids dependence on the complexity of Lipschitz functions, this approach yields a suboptimal result.

On the other hand, as shown below, it is also possible to derive the Rademacher complexity bound from the covering-number result of Theorem~\ref{thm_main_gen}. In this case, the dependence on the complexity of Lipschitz functions disappears, but at the cost of introducing an additional $(\log 2n)^2$ factor into the Rademacher complexity bound.

\begin{corollary}\label{col_from_rad_cov}
Under the same assumptions as in Theorem~\ref{thm_main_gen}, there exist universal constants $\{C_i\}$s such that with probability at least $1 - \delta$ over the draw of $\ste$ and $\str$, we have: 
%Under the assumptions of Theorem~\ref{thm_main_gen}, there exist universal constants $\{C_i\}$s such that:
\begin{align}
    \sup_{f \in \calf} |\smce(f, \ste) - \smce(f, \str)| 
    \leq C_1 \mathfrak{R}_{\cald, 2n}(\calf) (\log 2n)^2 + \frac{C_2}{\sqrt{n}} + 2\sqrt{\frac{\log\delta^{-1}}{n}}.
\end{align}
\end{corollary}

\begin{proof}
We begin by defining the empirical Gaussian complexity of \( \mathcal{F} \) as:
\begin{align}
\mathfrak{G}(\mathcal{F}, S_n) = \mathbb{E}_{\mathbf{g}} \sup_{f \in \mathcal{F}} \frac{1}{n} \sum_{i=1}^{n} g_i f(Z_i),    
\end{align}
where \( \mathbf{g} = [g_1, \dots, g_n] \) are independent standard normal random variables, i.e., \( g_i \sim \mathcal{N}(0, 1) \) for all \( i \in [n] \).

From the modified version of Sudakov’s minoration inequality (Theorem 12.4 in \citet{Zhang_2023}), we have:
\begin{align}
    \sqrt{\ln \caln (\epsilon, \calf, L_2(S_{2n}))}
    \leq \sqrt{\ln M (\epsilon, \calf, L_2(S_{2n}))}
    \leq \frac{2\sqrt{2n}   \mathfrak{G}_{\cald,2n}(\calf)}{\epsilon} + 1.
\end{align}
This justifies the reason why we use the covering number based on the \( L_2(S_{2n}) \) pseudometric in the derivation of Theorem~\ref{thm_main_gen}.

As a result, we obtain:
\begin{align}
    |\smce(f, \ste) - \smce(f, \str)| 
    &\leq \inf_{\epsilon > 0} \left[
        8\epsilon + 24 \int_{\epsilon}^{1} 
        \left( \frac{2\sqrt{2}   \mathfrak{G}_{\cald,2n}(\calf)}{\epsilon'} + \frac{1}{\sqrt{n}} \right) d\epsilon'
    \right] + \sqrt{\frac{\log(1/\delta)}{n}} \\
    &\leq \inf_{\epsilon > 0} \left[
        8\epsilon + 48\sqrt{2}   \mathfrak{G}_{\cald,2n}(\calf) \log \left(\frac{1}{\epsilon}\right) + \frac{24}{\sqrt{n}}
    \right] + \sqrt{\frac{\log(1/\delta)}{n}} \\
    &\leq \frac{8}{\sqrt{2n}} + 24\sqrt{2}   \mathfrak{G}_{\cald,2n}(\calf) \log (2n) + \frac{24}{\sqrt{2n}} + \sqrt{\frac{\log(1/\delta)}{n}},
\end{align}
where we have set \( \epsilon = 1/\sqrt{2n} \) in the final inequality.

Furthermore, from Lemma 4 in \citet{bartlett2002rademacher}, there exist universal constants \( c \) and \( C \) such that:
\begin{align}
    c   \mathfrak{R}_{\cald,2n}(\calf) 
    \leq \mathfrak{G}_{\cald,2n}(\calf) 
    \leq C \log(2n)   \mathfrak{R}_{\cald,2n}(\calf).
\end{align}

Combining the above results, we conclude:
\begin{align}
    |\smce(f, \ste) - \smce(f, \str)| 
    \leq 24\sqrt{2} C   \mathfrak{R}_{\cald,2n}(\calf) (\log 2n)^2 + \frac{32}{\sqrt{2n}} + 2\sqrt{\frac{\log(1/\delta)}{n}}.
\end{align}
\end{proof}

\section{Proofs of the gradient boosting tree}\label{app_GBTs}
\subsection{Quadratic upper bound of the boosting}
The iterative minimization problem is given as
\begin{align}
\min_{\psi_\theta}   \mathcal{L}_n\big(g^{(t)} - w_t \psi_\theta(x)\big),
\end{align}
and by considering the quadratic upper bound for this
\begin{align}
&\min_{\psi_\theta} \Big( - \langle \nabla \mathcal{L}_n(g^{(t)}), w_t \psi_\theta \rangle + \frac{M}{2} \|w_t \psi_\theta\|^2 \Big) \Leftrightarrow \min_{\psi_\theta} \| M w_t \psi_\theta - \nabla \|^2.
\end{align}

\subsection{From weak learnability to functional gradient}
We use the empirical inner product and norms
\[
\langle a,b\rangle_{L_2(S_n)} := \frac1n\sum_{i=1}^n a_i b_i,\quad
\|f\|_{L_2(S_n)}^2:=\langle f,f\rangle_{L_2(S_n)},\quad
\|v\|_{L_1(S_n)}:=\frac1n\sum_{i=1}^n |v_i|.
\]

% -------------------------
% Weak learnability assumptions
% -------------------------

Consider the binary cross-entropy with logit $z=g(X)$, i.e.,
\[
\ell_{\rm ent}(\tilde y,z)\ =\ - \tilde y\log\sigma(z)\ -\ (1-\tilde y)\log\bigl(1-\sigma(z)\bigr),
\quad \sigma(z)=\frac{1}{1+e^{-z}}.
\]
Its per-sample gradient w.r.t.\ the logit is
\[
\nabla_i\ :=\ \frac{\partial}{\partial z} \ell_{\rm ent}(\tilde y_i,z)\Big|_{z=g(X_i)}
\ =\ \sigma\bigl(g(X_i)\bigr)\ -\ \tilde y_i.
\]
Let $v=(\nabla_1,\ldots,\nabla_n)$.

\begin{lemma}\label{lem_margin_bound}
    Under the same assumptions in Theorem~\ref{thm_grad_boosting_str}, the following holds:
\begin{align}
    \smce^\sigma\big(g^{(t)},S_n\big)\leq \|v\|_{L_1(S_n)}\leq \frac{1}{\gamma B}\big\langle \psi_t, \nabla_t\big\rangle_{L_2(S_n)}+\frac{M w_t}{2\gamma}B.
\end{align}
\end{lemma}
\begin{proof}
Introduce $y_i:=2\tilde y_i-1\in\{\pm1\}$. Then
\[
\nabla_i=\sigma\bigl(g(X_i)\bigr)-\tilde y_i=- y_i \sigma\bigl(-y_i g(X_i)\bigr)
:=- y_i q_i,\quad q_i\in(0,1),
\]
hence $|\nabla_i|=q_i$.
Let $w_i:=\dfrac{|\nabla_i|}{\sum_{j=1}^n|\nabla_j|}\in\Delta_n$.
By Assumption~\ref{ass:A3-01} there exists $\psi^\star\in\Psi$ such that
$\sum_{i=1}^n w_i y_i \psi^\star(X_i)\ge \gamma B$.
Therefore
\[
\big\langle v,\psi^\star\big\rangle_{L_2(S_n)}
= \frac1n\sum_{i=1}^n (-y_i |\nabla_i|) \psi^\star(X_i)
= - \frac1n\Big(\sum_{i=1}^n |\nabla_i|\Big)\Big(\sum_{i=1}^n w_i y_i \psi^\star(X_i)\Big)
\ \le\ - \gamma B \|v\|_{L_1(S_n)}.
\]

Since $ \Psi$ is sign-flip closed, $-\psi^\star\in \Psi$, and
\begin{align}\label{weak_learning}
\big\langle v,-\psi^\star\big\rangle_{L_2(S_n)}\ge\  \gamma B \|v\|_{L_1(S_n)}.
\end{align}

Note that
\begin{align}
    \|M w_t\psi_{\theta_t}-\nabla\|_{L_2(S_n)}^2\leq \|M w_t(-\psi^\star)-\nabla_t\|_{L_2(S_n)}^2,
\end{align}
which implies that
\begin{align}
-2M w_t\big\langle \psi_t, \nabla_t\big\rangle_{L_2(S_n)}
\leq -2M w_t\big\langle (-\psi^\star),\ \nabla_t\big\rangle_{L_2(S_n)}+M^2w_t^2\|\psi^\star\|^2_{L_2(S_n)},
\end{align}
and we obtain
\begin{align}\label{square_conditon}
\langle (-\psi^\star),\ \nabla_t\big\rangle_{L_2(S_n)}\leq \big\langle \psi_t, \nabla_t\big\rangle_{L_2(S_n)}+\frac{M w_t}{2}B^2.
\end{align}
Then combining Eq.~\eqref{weak_learning} and \eqref{square_conditon} we obtain
\begin{align}
    \gamma B \|v\|_{L_1(S_n)}\leq \big\langle \psi_t, \nabla_t\big\rangle_{L_2(S_n)}+\frac{M w_t}{2}B^2.
\end{align}
By definition, we have
\begin{equation}\label{eq:min-bound}
\smce^\sigma\big(g^{(t)},S_n\big)\le\|v\|_{L_1(S_n)}.
\end{equation}
Since $M=1/4$, combining the above two inequalities, we obtain the result.
\end{proof}

\subsection{Proof of Theorem~\ref{thm_grad_boosting_str}}\label{app_proof}
We consider the Taylor expansion of the objective
\begin{align}
L_n(g^{(t+1)}) 
&\le L_n(g^{(t)}) - w_t \langle \nabla, \psi_{\theta_t} \rangle_{L_2(S_n)} + \frac{M}{2} w_t^2 \|\psi_{\theta_t}\|_{L_2(S_n)}^2.
\end{align}
Here we use Lemma~\ref{lem_proof} in Appendix~\ref{app_add_lemmas}. The lemma is the Pythagorean relation of the projection of the function gradient. In the lemma,  $w_tM\to w$ and $\nabla\to x$, and $p\to \psi_{\theta_t}$ corresponding to the GBT step. From Eq.~\eqref{projection_relation}, setting $q=0$ and , we get 
\begin{align}
     \langle \psi_{\theta_t},w_tM \psi_{\theta_t}-\nabla_t\rangle_{L_2(S_n)}\le 0,
\end{align}
and this leads to
\begin{align}
    Mw_t\|\psi_{\theta_t}\|_n^2\leq \langle \psi_{\theta_t}, \nabla_t\rangle_{L_2(S_n)}.
\end{align}
Finally we have
\begin{align}
L_n(g^{(t+1)}) 
&\le L_n(g^{(t)}) - \frac{w_t}{2} \langle \nabla_t, \psi_{\theta_t} \rangle_{L_2(S_n)}.
\end{align}
By summing up from $0$ to $T-1$, we have
\begin{align}
\frac{1}{2} \sum_{t=0}^{T-1} w_t \langle \nabla_t, \psi_{\theta_t} \rangle_{L_2(S_n)}
&\le L_n(g^{(0)}) - L_n(g^{(T)}).
\end{align}
If we use the constant stepsize $w_t=w$, we have
\begin{align}
\frac{1}{T} \sum_{t=0}^{T-1} \langle \nabla_t, \psi_{\theta_t} \rangle_{L_2(S_n)}
&\le \frac{2 \big(L_n(g^{(0)}) - L_n(g^{(T)})\big)}{w T}.
\end{align}

Combining Lemma~\ref{lem_margin_bound}, we have
\begin{align}
\frac{1}{T} \sum_{t=0}^{T-1} \smce^\sigma\big(g^{(t)},S_n\big)\leq \frac{1}{T} \sum_{t=0}^{T-1} \|\nabla_t\|_{L_1(S_n)}
&\le \frac{1}{2\gamma B} \cdot \frac{1}{T} \sum_{t=0}^{T-1} \langle \nabla_t, \psi_{\theta_t} \rangle_{L_2(S_n)} + \frac{w MB}{2\gamma } \\
&\le \frac{1}{2\gamma B} \cdot \frac{2 \big(L_n(g^{(0)}) - L_n(g^{(T)})\big)}{w T} + \frac{w MB}{2\gamma }.
\end{align}
Since the cross entropy loss is always positive, we drop $L_n(g^{(T)})$ and obtain the result.

\subsection{Auxiliary lemmas}\label{app_add_lemmas}
\paragraph{Setting.}
Let \(X\subset\mathbb{R}^d\) be a nonempty closed convex set and fix a scalar weight \(w>0\).
Consider
\[
f(z)\ :=\ \tfrac12\|w z - x\|_2^2
\ =\ \tfrac{w^2}{2} \big\|z-\tfrac{x}{w}\big\|_2^2\ +\ \mathrm{const},
\qquad x\in\mathbb{R}^d,
\]
and let
\[
p\ \in\ \arg\min_{z\in X} f(z).
\]
Equivalently, with \(y_0:=x/w\), \(p\) is the Euclidean projection \(p=\Pi_X(y_0)\).

Following is the alternative lemma of Lemma 3.1 in \citet{bubeck2015convex},
\begin{lemma}\label{lem_proof}
For all \(q\in X\), the following relation holds.
\[
\langle p-q,w^2 p - w x\rangle \le 0 .
\]
Equivalently, since \(w^2 p - w x = w^2(p-y_0)\), we have
\begin{align}\label{projection_relation}
\langle p-q, p-y_0\rangle \le 0.    
\end{align}
\end{lemma}

\emph{Proof.}
The objective \(f\) is convex and differentiable with gradient
\[
\nabla f(z)\ =\ w^2 z - w x.
\]
For convex differentiable optimization over a convex set \(X\), the necessary and sufficient
first-order condition at a minimizer \(p\) is
\[
\langle \nabla f(p), q-p\rangle  \ge 0\qquad(\forall q\in X).
\]
Rewriting gives \(\langle \nabla f(p), p-q \rangle \le 0\), i.e.
\(\langle p-q, w^2 p - w x\rangle \le 0\).
\(\square\)

%======================================================================================

\subsection{Summary of the gradient boosting tree algorithm}\label{summary_GBTs}

We remark that the regions of a binary tree are defined as follows: Given a node associated with some region $\tilde{R}\subset\calx \subset \mathbb{R}^d$, we split the region using a splitting threshold $s \in \mathbb{R}$ and a splitting dimension $j \in [d]$ into two child nodes, $\tilde{R}_L$ and $\tilde{R}_R$, which respectively correspond to the regions
$\tilde{R}_L := \{ x \in \tilde{R} \mid x_j \leq s \}$ and $\tilde{R}_R := \{ x \in \tilde{R} \mid x_j > s \}$. By recursively applying this procedure, we obtain partitions $\{R_j\}_{j=1}^J$, where each region corresponds to a hyperrectangular region in $\mathbb{R}^d$.

The set of all such trees defines the following function class:
\[
\mathcal{T} = \left\{
x \mapsto \sum_{j=1}^{J} c_j \cdot \mathbbm{1}_{\{x \in R_j\}}  \middle|  J \leq 2^m,  R_j \text{ disjoint},  c_j \in \mathbb{R}
\right\}.
\]
and $|C_j|$ is bounded by $B$

 Let \( \nabla_{t,i} = \nabla_g \ell(\sigma(g^{(t)}(X_i)), Y_i) \) denote the functional gradient at the \( i \)-th training point.

\begin{algorithm}
    \caption{Gradient Boosting Tree with Cross-Entropy Loss}
    \label{alg:gradient_boosting}
    \begin{algorithmic}[1]
        \REQUIRE Training data $\str = \{(X_i, Y_i)\}_{i=1}^{n}$, base learner $\psi_\theta \in \mathcal{T}$, loss function $\len$
        \ENSURE Final logit function $g^{(T)}(x)$
        \STATE Initialize $g^{(0)}(x) = 0$
        \FOR{$t = 0, 1, \dots, T-1$}
            \STATE Compute functional gradients:
            \[
            \nabla_{t,i} = \nabla_g \len(\sigma(g^{(t)}(X_i)), Y_i), \quad i = 1, \dots, n
            \]
            \STATE (Optionally set step size $w_t$)
            \STATE Solve: $\theta_t = \argmin_{\theta \in \Theta} \frac{1}{n}\sum_{i=1}^{n} |Mw_t\psi_\theta(X_i)- \nabla_{t,i}|^2$
            \STATE Update the model:
            \[
            g^{(t+1)}(x) = g^{(t)}(x) - w_t \psi_{\theta_t}( x)
            \]
        \ENDFOR
        \RETURN $g^{(T)}(x)$ or $\bar{g}^{(T)}(x)$
    \end{algorithmic}
\end{algorithm}

\subsection{Proof of Corollary~\ref{col_grad_boosting_test}}
We first remark that the following relationship holds:
\begin{align}
    \smce(f, \str) \leq \smce^{\sigma}(g, \str).
\end{align}
The proof is nearly identical to that of Lemma 4.7 in \citet{blasiok2023when}. By replacing the population expectation over \( \mathcal{D} \) with the empirical expectation over the sample \( \str \), we obtain the result.

Therefore, an upper bound on the dual smooth CE over the training dataset directly implies an upper bound on the standard smooth CE. By combining this with our developed generalization bounds, we obtain the desired result.

We now evaluate the Rademacher complexity. We express the function class of $\bar{g}^{(T)}=\frac{1}{n}\sum_{t=0}^{T-1}g^{t}$ as $\mathcal{G}$.

Since \( \mathcal{F} = \sigma \circ \mathcal{G} \) and \( \sigma \) is \( 1/4 \)-Lipschitz, Talagrand’s contraction lemma yields:
\begin{align}
\mathfrak{R}_{\cald,n}(\mathcal{F}) 
= \mathfrak{R}_{\cald,n}(\sigma \circ \mathcal{G}) 
\leq \frac{1}{4} \mathfrak{R}_{\cald,n}(\mathcal{G}).
\end{align}

We now derive an upper bound on the Rademacher complexity of $\mathcal{G}$. Using its sub-additive property, we have:
\begin{align}
 \mathfrak{R}_{\cald,n}(\mathcal{G}) 
 \leq \frac{1}{T}\sum_{t=0}^{T-1}\mathfrak{R}_{\cald,n}(\{g{(t)}\}).
\end{align}
where $\{g{(t)}\}$ is the function class that after $t$-step GBT algorithms.

Then
\begin{align}
    \mathfrak{R}_{\cald,n}(\{g^{(t)}\})\leq w t\mathfrak{R}_{\cald,n}(\mathcal{T}),
\end{align}
where \( \mathcal{T} \) denotes the class of regression trees.

Furthermore, by Theorem 6.25 in \citet{Zhang_2023}, the Rademacher complexity is upper bounded by the covering number:
\begin{align}
     \mathfrak{R}_{\cald,n}(\mathcal{T}) \leq \inf_{\epsilon > 0} 
     \left[
     4\epsilon + 12 \int_{\epsilon}^{1} 
     \sqrt{\frac{\ln N(\epsilon', \mathcal{T}, L_2(S_n))}{n}}   d\epsilon'
     \right].
\end{align}

To proceed, we upper bound the covering number for binary regression trees. From Appendix B in \citet{klusowski2024large}, we have:
\begin{align}
     \ln N(\epsilon, \mathcal{T}, L_2(S_n)) \leq 
     \ln (nd)^{2^m} \left(\frac{C}{\epsilon^2}\right)^{2^{m+1}}.
\end{align}
Therefore, there exists a universal constant \( C > 0 \) such that:
\begin{align}\label{rad_binary_tree}
     \mathfrak{R}_{\cald,n}(\mathcal{T}) \leq C \sqrt{\frac{2^m \log(nd)}{n}}.
\end{align}
In conclusion, we have that
\begin{align}
 \mathfrak{R}_{\cald,n}(\mathcal{F}) 
 \leq C wT\sqrt{\frac{2^m \log(nd)}{n}}.
\end{align}

Finally, substituting this into Theorem~\ref{thm_erm_smooth_gen}, we obtain the result.
\begin{align}
 \smce^\sigma\big(\frac{1}{T} \sum_{t=0}^{T-1}g^{(t)},S_n\big)\le &\frac{L_n(g^{(0)}) }{\gamma Bw T} + \frac{w MB}{2\gamma }+w T C\sqrt{\frac{2^m \log(nd)}{n}}
\!+ \sqrt{\frac{\log\frac{2}{\delta}}{n}}.
\end{align}

\subsection{Misclassification rate}
Let \( S_n \) denote the empirical distribution of the training dataset. From standard margin-based generalization theory, for any \( \rho > 0 \), with probability at least \( 1 - \delta \), the following inequality holds:
\begin{align}
    P_{(X,Y) \sim \cald}[(2Y - 1) \bar g^{(T)}(X) \leq 0] 
    \leq P_{(X,Y) \sim S_n}[(2Y - 1) \bar g^{(T)}(X) \leq \rho] 
    + \frac{2}{\rho} \mathfrak{R}_{\cald,n}(\calg) 
    + \sqrt{ \frac{ \log \frac{1}{\delta} }{2n} }.
\end{align}
According to \citet{nitanda18a, nitanda2019gradient}, the empirical margin error can be upper-bounded by the functional gradient as follows:
\begin{align}
    P_{(X,Y) \sim S_n}[(2Y - 1) \bar g^{(T)}(X) \leq \rho] 
    \leq (1 + e^{\rho}) \| \nabla_g \len( \bar g^{(T)}(X), Y ) \|_{L_1(S_n)}.
\end{align}

Combining the above results, we obtain:
\begin{align}
    P_{(X,Y) \sim \cald}[(2Y - 1) \bar g^{(T)}(X) \leq 0] 
    \leq (1 + e^{\rho})\left(\frac{L_n(g^{(0)}) }{\gamma Bw T} + \frac{w B}{8\gamma }\right)+\frac{Cw T}{\rho} \sqrt{\frac{2^J \log(nd)}{n}}
    + \sqrt{ \frac{ \log \frac{1}{\delta} }{2n} }.
\end{align}

This result illustrates a trade-off with respect to \( w T \), which balances the training misclassification rate and the complexity of the hypothesis class.

\section{Proofs for the kernel boosting}\label{app_kernel_boost_proofs}

\subsection{Proof of Eq.~(\ref{l1_kernel_grad})}
From the margin assumption, the following property holds:
\[
\phi(x_i) > \gamma \quad \text{if } y_i = 1, \quad -\phi(x_i) > \gamma \quad \text{if } y_i = 0, \quad \text{for all } (x_i, y_i) \in \str.
\]
Since \( \phi \in \mathcal{H} \) implies \( -\phi \in \mathcal{H} \), we define \( \tilde{\phi} := -\phi \). Then, the margin assumption implies:
\[
\tilde{\phi}(x_i) < -\gamma \quad \text{if } y_i = 1, \quad \tilde{\phi}(x_i) > \gamma \quad \text{if } y_i = 0, \quad \text{for all } (x_i, y_i) \in \str.
\]

We now analyze the gradient in the RKHS under this construction:
\begin{align}\label{kernel_boost_grad}
\left\| \mathcal{T}_k \nabla_g L_n(g) \right\|_{\mathcal{H}} 
&= \left\langle \mathcal{T}_k \nabla_g L_n(g), \frac{\mathcal{T}_k \nabla_g L_n(g)}{\|\mathcal{T}_k \nabla_g L_n(g)\|_{\mathcal{H}}} \right\rangle_{\mathcal{H}} \\
&= \sup_{\phi \in \mathcal{H}, \|\phi\|_{\mathcal{H}} \leq 1} \left\langle \mathcal{T}_k \nabla_g L_n(g), \phi \right\rangle_{\mathcal{H}} \\
&\geq \left\langle \nabla_g L_n(g), \tilde{\phi} \right\rangle_{L_2(S_n)} \\
&= \frac{1}{n} \sum_{i=1}^n \nabla_g \ell(g(X_i), Y_i) \cdot \tilde{\phi}(X_i) \\
&= \frac{1}{n} \sum_{i=1}^n (\sigma(g(X_i)) - Y_i) \cdot \tilde{\phi}(X_i),
\end{align}
where the last equality follows from the functional gradient.

Using the margin assumption, we obtain:
\begin{align}
\frac{1}{n} \sum_{i=1}^n (Y_i - \sigma(g(X_i))) \cdot (-\tilde{\phi}(X_i))
\geq \frac{\gamma}{n} \sum_{i=1}^n \left| \nabla_g \ell(g(X_i), Y_i) \right|.
\end{align}

This completes the proof.

\subsection{Proof of Theorem~\ref{kernel_boost}}
Assume that the loss function \( L(g) \) is \( M \)-Lipschitz smooth with respect to \( g \), meaning that for all \( g, g' \),
\begin{align}
L(g') \leq L(g) + \langle \nabla L(g), g' - g \rangle_{L_2(S_n)} + \frac{M}{2} \|g' - g\|_{L_2(S_n)}^2.
\end{align}

Plugging in \( g = g^{(t)} \) and \( g' = g^{(t+1)} = g^{(t)} - w_t \mathcal{T}_k \nabla_g L(g^{(t)}) \), we obtain:
\begin{align}
L(g^{(t+1)}) &\leq L(g^{(t)}) - w_t \langle \nabla L(g^{(t)}), \mathcal{T}_k \nabla_g L(g^{(t)}) \rangle_{L_2(S_n)} \\
&\quad + \frac{M}{2} w_t^2 \| \mathcal{T}_k \nabla_g L(g^{(t)}) \|_{L_2(S_n)}^2 \\
&\leq L(g^{(t)}) - w_t \| \mathcal{T}_k \nabla_g L_n(g^{(t)}) \|_{\mathcal{H}}^2 + \frac{M \Lambda}{2} w_t^2 \| \mathcal{T}_k \nabla_g L_n(g^{(t)}) \|_{\mathcal{H}}^2 \\
&= L(g^{(t)}) - w_t \left( 1 - \frac{w_t M \Lambda}{2} \right) \| \mathcal{T}_k \nabla_g L_n(g^{(t)}) \|_{\mathcal{H}}^2 \\
&\leq L(g^{(t)}) - \frac{w_t}{2} \| \mathcal{T}_k \nabla_g L_n(g^{(t)}) \|_{\mathcal{H}}^2,
\end{align}
where the last inequality uses \( w_t \leq \frac{1}{M \Lambda} \).

Summing over \( t = 0, \ldots, T - 1 \), we get:
\begin{align}
\sum_{t=0}^{T-1} \frac{1}{2} w_t \| \mathcal{T}_k \nabla_g L_n(g^{(t)}) \|_{\mathcal{H}}^2 \leq L_n(g^{(0)}).
\end{align}

In particular, using constant step size \( w_t = w \), we obtain:
\begin{align}
\frac{1}{T} \sum_{t=0}^{T - 1} \| \mathcal{T}_k \nabla_g L_n(g^{(t)}) \|_{\mathcal{H}}^2 \leq \frac{2}{w T} L_n(g^{(0)}).
\end{align}

From Eq.~\eqref{kernel_boost_grad}, we have:
\begin{align}
\frac{1}{T} \sum_{t=0}^{T - 1} \left\| \nabla_g \ell(g^{(t)}(X), Y) \right\|_{L_1(S_n)}^2 
\leq \frac{2}{w T} L_n(g^{(0)}).
\end{align}
Then, using Jensen's inequality and the definition of \( \bar{g}^{(T)} := \frac{1}{T} \sum_{t=0}^{T-1} g^{(t)} \), we get:
\begin{align}
\left\| \nabla_g \ell(\bar{g}^{(T)}(X), Y) \right\|_{L_1(S_n)}^2 
\leq \frac{1}{T} \sum_{t=0}^{T - 1} \left\| \nabla_g \ell(g^{(t)}(X), Y) \right\|_{L_1(S_n)}^2.
\end{align}

Combining these gives:
\begin{align}
\left\| \nabla_g \ell(\bar{g}^{(T)}(X), Y) \right\|_{L_1(S_n)} 
\leq \sqrt{ \frac{2}{w T} L_n(g^{(0)}) }.
\end{align}

\vspace{1em}
\noindent
\textbf{Generalization Bound via Rademacher Complexity.} \\
We now evaluate the Rademacher complexity. We express the function class of $\bar{g}^{(T)}=\frac{1}{n}\sum_{t=0}^{T-1}g^{t}$ as $\mathcal{G}$. Since \( \mathcal{F} = \sigma \circ \mathcal{G} \) and \( \sigma \) is \( 1/4 \)-Lipschitz, Talagrand’s contraction lemma yields:
\begin{align}
\mathfrak{R}_{\cald,n}(\mathcal{F}) 
= \mathfrak{R}_{\cald,n}(\sigma \circ \mathcal{G}) 
\leq \frac{1}{4} \mathfrak{R}_{\cald,n}(\mathcal{G}) 
\leq \frac{\alpha}{4} \sqrt{ \frac{\Lambda}{n} },
\end{align}
where \( \alpha \) is an upper bound on the RKHS norm of $\mathcal{G}$, which we estimate below. The final inequality follows from the standard Rademacher complexity estimate of the RKHS, see \citet{Mohri} for the details.

Recall the recursion:
\begin{align}
g^{(T)} = g^{(0)} - w \sum_{t=0}^{T-1} \mathcal{T}_k \nabla_g L_n(g^{(t)}).
\end{align}
Then:
\begin{align}
\| g^{(T)} \|_{\mathcal{H}} 
&\leq \| g^{(0)} \|_{\mathcal{H}} 
+ \left\| w \sum_{t=0}^{T-1} \mathcal{T}_k \nabla_g L_n(g^{(t)}) \right\|_{\mathcal{H}}.
\end{align}

Applying Jensen's inequality:
\begin{align}
   \|w \sum_{t=0}^{T-1}\mathcal{T}_k \nabla_g L_n(g^{(t)})\|_\calh^2\leq \frac{1}{T}\sum_{t=0}^{T-1}\| w T\mathcal{T}_k \nabla_g L_n(g^{(t)})\|_\calh^2&\leq  w ^2T^2\frac{1}{T}\sum_{t=0}^{T-1}\|\mathcal{T}_k \nabla_g L_n(g^{(t)})\|_\calh^2\\
   &\leq w^2T^2\frac{2}{w T} L_n(g^{(0)})\\
      &\leq 2 w T L_n(g^{(0)})
\end{align}

Assuming \( \| g^{(0)} \|_{\mathcal{H}} \leq \Lambda' \), we obtain:
\begin{align}
\| g^{(T)} \|_{\mathcal{H}} \leq \Lambda' + \sqrt{2 w T L_n(g^{(0)})}.
\end{align}

Absorbing \( \sqrt{2 L_n(g^{(0)})} \) into a universal constant. When considering the norm of $\bar{g}^{(T)}$, it is also $\mathcal{O}(\Lambda' + \sqrt{w T})$. Thus, we conclude $\alpha =  \mathcal{O}(\Lambda' + \sqrt{w T})$. Substituting this into the Rademacher bound yields the final generalization result.

\subsection{Proof of Misclassification rate}
Let \( S_n \) denote the empirical distribution of the training dataset. From standard margin-based generalization theory, for any \( \rho > 0 \), with probability at least \( 1 - \delta \), the following inequality holds:
\begin{align}
    P_{(X,Y) \sim \cald}[(2Y - 1) \bar g^{(T)}(X) \leq 0] 
    \leq P_{(X,Y) \sim S_n}[(2Y - 1) \bar g^{(T)}(X) \leq \rho] 
    + \frac{2}{\rho} \mathfrak{R}_{\cald,n}(\calg) 
    + \sqrt{ \frac{ \log \frac{1}{\delta} }{2n} }.
\end{align}

According to \citet{nitanda18a, nitanda2019gradient}, the empirical margin error can be upper-bounded by the functional gradient as follows:
\begin{align}
    P_{(X,Y) \sim S_n}[(2Y - 1) \bar g^{(T)}(X) \leq \rho] 
    \leq (1 + e^{\rho}) \| \nabla_g \len( \bar g^{(T)}(X), Y ) \|_{L_1(S_n)}.
\end{align}

Combining the above results, we obtain:
\begin{align}
    P_{(X,Y) \sim \cald}[(2Y - 1) \bar g^{(T)}(X) \leq 0] 
    \leq \frac{(1 + e^{\rho})}{\gamma} \sqrt{ \frac{ L_n(g^{(0)}) }{ w T } } 
    + \frac{ 2( \Lambda' + \sqrt{ 2w T L_n(g^{(0)}) }) }{ \rho } \sqrt{ \frac{ \Lambda }{n} } 
    + \sqrt{ \frac{ \log \frac{1}{\delta} }{2n} }.
\end{align}

This result illustrates a trade-off with respect to \( w T \), which balances the training misclassification rate and the complexity of the hypothesis class.

\subsection{Proof of Corollary~\ref{col_kernel_boost_iteration}}
We first remark that both the smooth calibration error and the misclassification rate exhibit similar asymptotic behavior with respect to \( n \) and \( w T \). By substituting the given hyperparameters into the bound presented in Theorem~\ref{kernel_boost}, we obtain the desired result.

\section{Formal settings of the two-layer neural network}\label{sec_appendix_formal}

\subsection{Formal results for the smooth CE}
Here we provide the formal setting of the results from \citet{nitanda2019gradient}. We assume $Y = \{\pm1\}$ and denote by \( \nu \) the true probability measure on \( \mathcal{X} \times \mathcal{Y} \), and by \( \nu_n \) the empirical measure based on samples \( \{(X_i, Y_i)\}_{i=1}^n \) drawn independently from \( \nu \), i.e.,
\[
d\nu_n(X, Y) = \frac{1}{n} \sum_{i=1}^{n} \delta_{(X_i, Y_i)}(X, Y)   dX dY,
\]
where \( \delta \) denotes the Dirac delta function. In the main paper, we consider labels in $\tilde{Y} = \{0, 1\}$; here, we convert them to $Y = 2\tilde{Y} - 1$, so the distribution $\mathcal{D}$ over $\mathcal{X} \times \tilde{Y}$ becomes $\mu$ over $\mathcal{X} \times Y$.

The marginal distributions of \( \nu \) and \( \nu_n \) over \( \mathcal{X} \) are denoted by \( \nu_{\mathcal{X}} \) and \( \nu_{\mathcal{X}, n} \), respectively.  
For \( s \in \mathbb{R} \) and \( y \in \mathcal{Y} \), let \( \ell(s, y) \) denote the logistic loss:
\[
\ell(s, y) = \log(1 + \exp(-ys)).
\]
We remark that given a logit value $s$, the predicted probability of $Y = 1$ is $p = \sigma(s)$, and the loss can also be written as  
$
\ell(s, y) = \frac{y+1}{2} \log p + \frac{1 - y}{2} \log (1 - p).
$
 Replacing $Y$ with $\tilde{Y}$ recovers the standard cross-entropy loss:
$
\ell(s, \tilde{y}) = \tilde{y} \log p + (1 - \tilde{y}) \log (1 - p).
$

The empirical objective to be minimized is defined as
\[
L_n(\theta) \coloneqq \mathbb{E}_{(X, Y) \sim \nu_n} \left[ \ell(g_\theta(X), Y) \right] = \frac{1}{n} \sum_{i=1}^{n} \ell(g_\theta(X_i), Y_i),
\]
where \( g_\theta : \mathcal{X} \rightarrow \mathbb{R} \) is a two-layer neural network parameterized by \( \theta = (\theta_r)_{r=1}^m \).  
When treating the function \( g_\theta \) as the variable of the objective, we also write \( L_n(g_\theta) \coloneqq L_n(\theta) \).

We now introduce the following formal assumptions.

\paragraph{Assumption 1.}

\begin{itemize}
    \item[(A1)] Assume that \( \mathrm{supp}(\nu_{\mathcal{X}}) \subset \{x \in \mathcal{X} \mid \|x\|_2 \leq 1 \} \). Let \( \sigma \) be a \( C^2 \)-class function, and there exist constants \( K_1, K_2 > 0 \) such that
    \[
    \|\sigma'\|_\infty \leq K_1 \quad \text{and} \quad \|\sigma''\|_\infty \leq K_2.
    \]
    
    \item[(A2)] A distribution \( \mu_0 \) on \( \mathbb{R}^d \), used for the initialization of \( \theta_r \), has a sub-Gaussian tail bound:  
    there exist constants \( A, b > 0 \) such that
    \[
    \mathbb{P}_{\theta^{(0)} \sim \mu_0} \left[ \|\theta^{(0)}\|_2 \geq t \right] \leq A \exp(-bt^2).
    \]
    
    \item[(A3)] Assume that the number of hidden units \( m \in \mathbb{Z}_+ \) is even. The constant parameters \( (a_r)_{r=1}^m \) and parameters \( \theta^{(0)} = (\theta_r^{(0)})_{r=1}^m \) are initialized symmetrically as follows:
    \[
    a_r = 1 \quad \text{for } r \in \left\{1, \dots, \frac{m}{2}\right\}, \quad
    a_r = -1 \quad \text{for } r \in \left\{\frac{m}{2}+1, \dots, m\right\},
    \]
    and
    \[
    \theta_r^{(0)} = \theta_{r + \frac{m}{2}}^{(0)} \quad \text{for } r \in \left\{1, \dots, \frac{m}{2} \right\},
    \]
    where the initial parameters \( (\theta_r^{(0)})_{r=1}^{m/2} \) are independently drawn from the distribution \( \mu_0 \).
    
    \item[(A4)] Assume that there exist \( \gamma > 0 \) and a measurable function \( v : \mathbb{R}^d \to \{ w \in \mathbb{R}^d \mid \|w\|_2 \leq 1 \} \) such that the following inequality holds for all \( (x, y) \in \mathrm{supp}(\nu) \subset \mathcal{X} \times \mathcal{Y} \):
    \[
    y \left\langle \partial_{\theta} \sigma(\theta^{(0)}\cdot x), v(\theta^{(0)}) \right\rangle_{L^2(\mu_0)} 
    = y   \mathbb{E}_{\theta^{(0)} \sim \mu_0} \left[ \partial_{\theta} \sigma(\theta^{(0)\top} x)\cdot v(\theta^{(0)}) \right] \geq \gamma. \tag{7}
    \]
\end{itemize}

\paragraph{Remark.}
Clearly, many activation functions (sigmoid, tanh, and smooth approximations of ReLU such as swish) satisfy assumption (A1).  
Typical distributions, including the Gaussian distribution, satisfy (A2).  
The purpose of the symmetrized initialization (A3) is to uniformly bound the initial value of the loss function \( L_n(\theta^{(0)}) \) over the number of hidden units \( m \).  
This initialization leads to \( f_{\theta^{(0)}}(x) = 0 \), resulting in \( L_n(\theta^{(0)}) = \log(2) \).  
Assumption (A4) implies the separability of a dataset using the neural tangent kernel.  
We next discuss the validity of this assumption.

\begin{theorem}[Global Convergence, Theorem 2 in \citet{nitanda2019gradient}]\label{thm_funcgrad_nitanda}
Suppose Assumption 1 holds. We set constant $K$ as
\begin{align}
K = K_1^4 + 2K_1^2 K_2 + K_1^4 K_2^2.    
\end{align}
For all \( \beta \in [0, 1) \), \( \delta \in (0, 1) \), and \( m \in \mathbb{Z}_+ \) such that
\[
m \geq \frac{16 K_1^2}{\gamma^2} \log \frac{2n}{\delta},
\]
consider gradient descent \textnormal{(6)} with learning rate
\[
0 < w \leq \min \left\{ \frac{1}{m^\beta}, \frac{4 m^{2\beta - 1}}{K_1^2 + K_2} \right\},
\]
and the number of iterations
\[
T \leq \left\lfloor \frac{m \gamma^2}{32 w K_2^2 \log(2)} \right\rfloor.
\]

Then, with probability at least \( 1 - \delta \) over the random initialization, we have:
\[
\frac{1}{T} \sum_{t=0}^{T-1} \left\| \nabla_g L_n(g_{\theta^{(t)}}) \right\|^2_{L^1(\nu_{\mathcal{X}, n})}
\leq \frac{16 \log(2)}{\gamma^2 T} \left( \frac{m^{2\beta - 1}}{w} + K \right).
\]
where we define the \( L^1 \)-norm of the functional gradient as
\[
\left\| \nabla_gL_n(g_\theta) \right\|_{L^1(\nu_{\mathcal{X}, n})}
\coloneqq \frac{1}{n} \sum_{i=1}^{n} \left| \partial_g \ell(g_\theta(X_i), Y_i) \right|
= \frac{1}{2n} \sum_{i=1}^{n} \left| Y_i - 2 p_\theta(Y = 1 \mid x_i) + 1 \right|.
\]
\end{theorem}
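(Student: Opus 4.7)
The plan is to combine an NTK-style lower bound on the parameter-space gradient with a smoothness-based descent lemma, and then to telescope the resulting per-step decrease across $T$ iterations. Assumption~(A4) effectively provides a margin $\gamma$ in the feature space induced by linearizing $g_\theta$ at $\theta^{(0)}$; Assumption~(A3) ensures $g_{\theta^{(0)}} \equiv 0$ so $L_n(\theta^{(0)}) = \log 2$; Assumption~(A2) controls the initial weights via a sub-Gaussian tail; and Assumption~(A1) delivers the smoothness needed for the descent step.

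The first substantive step is to lower bound $\|\nabla_\theta L_n(\theta^{(t)})\|_2$ in terms of the $L^1$ functional gradient. Using the probe direction $\hat v = (a_r v(\theta_r^{(0)}))_{r=1}^m$ of Euclidean norm at most $\sqrt m$, one computes
\begin{align}
-\langle \nabla_\theta L_n(\theta^{(t)}), \hat v\rangle = \frac{1}{n m^\beta}\sum_{i=1}^n (-\partial_g \ell(g_{\theta^{(t)}}(X_i), Y_i))\sum_{r=1}^m a_r^2\,\sigma'(\theta_r^{(t)\top}X_i)\,v(\theta_r^{(0)})\cdot X_i.
\end{align}
Replacing $\theta_r^{(t)}$ by $\theta_r^{(0)}$ inside $\sigma'$ introduces a linearization error controllable via (A1) and the drift $\|\theta_r^{(t)}-\theta_r^{(0)}\|$. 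The remaining empirical average over $r$ has expectation under $\mu_0$ equal to $\langle \partial_\theta\sigma(\theta^{(0)\top}X_i), v(\theta^{(0)})\rangle_{L^2(\mu_0)}$, which by (A4) is at least $Y_i\gamma$ in absolute value with the correct sign. A Hoeffding bound over the $m/2$ independent initial neurons, uniform over the $n$ training points, gives concentration with probability $1-\delta$ as soon as $m \geq 16K_1^2\gamma^{-2}\log(2n/\delta)$; combined with Cauchy--Schwarz, this yields
\begin{align}
\|\nabla_\theta L_n(\theta^{(t)})\|_2^2 \geq \frac{\gamma^2}{4m^{2\beta-1}}\|\nabla_g L_n(g_{\theta^{(t)}})\|_{L^1(\nu_{\mathcal{X},n})}^2 - \varepsilon_t,
\end{align}
with $\varepsilon_t$ quantifying the linearization residual.

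Smoothness of $\sigma$ implies that $L_n$ is $(K_1^2+K_2)m^{-2\beta}$-Lipschitz smooth in $\theta$, so the hypothesis $w \leq 4m^{2\beta-1}/(K_1^2+K_2)$ yields the standard descent inequality $L_n(\theta^{(t+1)}) \leq L_n(\theta^{(t)}) - \tfrac{w}{2}\|\nabla_\theta L_n(\theta^{(t)})\|_2^2$. Substituting the lower bound from the previous step converts each iteration into a decrease of order $w\gamma^2 m^{1-2\beta}\|\nabla_g L_n(g_{\theta^{(t)}})\|_{L^1}^2$ modulo the residual $\varepsilon_t$. Telescoping from $t=0$ to $T-1$, using $L_n(\theta^{(0)})=\log 2$ from (A3) and $L_n \geq 0$, and dividing by $T$, produces
\begin{align}
\frac{1}{T}\sum_{t=0}^{T-1}\|\nabla_g L_n(g_{\theta^{(t)}})\|_{L^1(\nu_{\mathcal{X},n})}^2 \lesssim \frac{1}{\gamma^2 T}\!\left(\frac{m^{2\beta-1}}{w} + K\right),
\end{align}
where $K = K_1^4 + 2K_1^2K_2 + K_1^4K_2^2$ emerges from bounding $\sum_t \varepsilon_t$ through the activation's second-order terms.

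The main obstacle is controlling the parameter drift $\|\theta_r^{(t)}-\theta_r^{(0)}\|$ uniformly in $t$, since the NTK linearization is only valid while the drift stays small. One argues inductively: assuming the drift is small up to step $t$, the per-step movement is bounded by $\|\nabla_\theta L_n\|_2 \lesssim K_2/m^\beta$ (using $L_n \leq \log 2$ and (A1)); summing over $T$ steps, the total per-neuron drift is $\mathcal{O}(wTK_2/m^\beta)$, and requiring this to remain below a threshold of order $\gamma/(K_2\sqrt m)$ recovers the constraint $T \leq \lfloor m\gamma^2/(32wK_2^2\log 2)\rfloor$. Closing this induction while keeping all constants sharp enough to match the stated bound---and, in particular, pinning down the precise form of $K$---is the principal technical challenge.
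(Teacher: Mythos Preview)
The paper does not prove this theorem; it is quoted verbatim as Theorem~2 of \citet{nitanda2019gradient} and used as a black box in the subsequent corollaries (see Appendix~\ref{sec_appendix_formal}, where the statement is introduced without proof and immediately invoked in Corollary~\ref{cor_smoothce}). There is therefore no in-paper argument to compare against.

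That said, your sketch is faithful to the standard NTK-regime argument and to the structure of the original proof in \citet{nitanda2019gradient}: probe the parameter gradient along the direction $(a_r v(\theta_r^{(0)}))_r$ to extract the $L^1$ functional gradient via the margin assumption~(A4); use Hoeffding over the $m/2$ independent initializations to get the $m \geq 16K_1^2\gamma^{-2}\log(2n/\delta)$ requirement; apply the $(K_1^2+K_2)m^{-2\beta}$-smoothness of $L_n$ for the descent lemma; and close by an induction controlling the per-neuron drift, which is precisely what forces the cap on $T$. One point to be careful with is that the smoothness constant of $L_n$ in $\theta$ scales like $m^{1-2\beta}$ rather than $m^{-2\beta}$ once you account for the sum over $m$ neurons, so the condition $w \leq 4m^{2\beta-1}/(K_1^2+K_2)$ matches that and your descent step should reflect it; similarly, the per-neuron gradient bound feeding the drift control uses $|\partial_g\ell|\leq 1$ rather than $L_n\leq \log 2$. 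These are bookkeeping issues, not structural ones---the overall route is correct.
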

The result above corresponds to Eq.~\eqref{grad_monotonic}. This bound provides a guarantee on the training smooth CE. Proposition 8 in \citet{nitanda2019gradient} yields estimates for the Rademacher complexity and covering numbers.
\begin{theorem}[Proposition 8 in \citet{nitanda2019gradient}]\label{app_rademacher_nitanda}
Suppose Assumptions \textbf{(A1)} and \textbf{(A2)} hold.  
Let $\forall w > 0, \quad \forall m \in \mathbb{Z}_+, \quad \forall T \in \mathbb{Z}_+, \quad
\forall \delta \in (0, 1), \quad \text{and} \quad \forall S \text{ of size } n$. Then, there exists a universal constant \( C > 0 \) such that with probability at least \( 1 - \delta \) with respect to the initialization of \( \Theta^{(0)} \), the empirical Rademacher complexity satisfies:
\begin{align}\label{app_rademacher_nitanda_result1}
\hat{\mathfrak{R}}_S(\calf)
&\leq C m^{1/2 - \beta} D_{w, m, T} (1 + K_1 + K_2) \notag \\
&\qquad \cdot \sqrt{\frac{d}{n} \log\left( n(1 + K_1 + K_2) \left( \log(m/\delta) + D_{w, m, T}^2 \right) \right)}.
\end{align}
where
\begin{align}
D_{w,m,T} &= \sqrt{w T}
\end{align}
Moreover, when \( \sigma \) is convex and \( \sigma(0) = 0 \), with probability at least \( 1 - \delta \) over a random initialization of \( \theta^{(0)} \), we have:
\begin{align}\label{app_rademacher_nitanda_result2}
\hat{\mathfrak{R}}_S(\calf)
&\leq \frac{8K_1 m^{1/2 - \beta}}{\sqrt{n}}
\left( D_{w, m, T} + \sqrt{ \frac{\log(Am/\delta)}{b} } \right).
\end{align}
\end{theorem}
By substituting Theorems~\ref{thm_funcgrad_nitanda} and~\ref{app_rademacher_nitanda} into the uniform convergence bound in Theorem~\ref{thm_erm_smooth_gen}, we obtain the following guarantee on the smooth CE for two-layer neural networks.

\begin{corollary}\label{cor_smoothce}
    Suppose Assumption 1 and the settings of Theorem~\ref{thm_funcgrad_nitanda} hold. Then with probability $1-\delta$  over the random draw of $\str$ and initial parameter $\theta_0$, we have
    \begin{align}\label{eq_upperbound_NN}
&\min_{t\in\{0,\dots,T-1\}}\smce^{\sigma}(g_{\theta^{(t)}},\mu)\leq C_1\sqrt{\frac{1}{\gamma^2 T} \left( \frac{m^{2\beta - 1}}{w} + K \right)}  +\\
& \frac{C_2}{\sqrt{n}} +  C_3m^{1/2 - \beta} D_{w, m, T} (1 + K_1 + K_2) \cdot \notag \\
&\qquad \cdot \sqrt{\frac{d}{n} \log\left( n(1 + K_1 + K_2) \left( \log(2m/\delta) + D_{w, m, T}^2 \right) \right)})+ 6\sqrt{\frac{\log\frac{6}{\delta}}{n}}.
\end{align}
where $C_1,\dots,C_3$ are universal constants.

Moreover, when \( \sigma \) is convex and \( \sigma(0) = 0 \), with probability at least  $1-\delta$ over the random draw of $\str$ and initial parameter $\theta_0$, such that we have:
    \begin{align}\label{eq_upperbound_NN2}
&\min_{t\in\{0,\dots,T-1\}}\smce^{\sigma}(g_{\theta^{(t)}},\mu)\leq C_1\sqrt{\frac{1}{\gamma^2 T} \left( \frac{m^{2\beta - 1}}{w} + K \right)}  +\\
& \frac{C_2}{\sqrt{n}} +  \frac{C_4K_1 m^{1/2 - \beta}}{\sqrt{n}}
\left( D_{w, m, T} + \sqrt{ \frac{\log(2Am/\delta)}{b} } \right)+6\sqrt{\frac{\log\frac{6}{\delta}}{n}}.
\end{align}
where $C_4$ is a universal constant.
\end{corollary}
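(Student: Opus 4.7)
The plan is to compose three available ingredients: a functional-gradient upper bound on the training dual smooth CE (as in the derivation of Theorem~\ref{kernel_boost}), the global-convergence bound of Theorem~\ref{thm_funcgrad_nitanda}, and a version of the uniform convergence result Theorem~\ref{thm_erm_smooth_gen} adapted to $\smce^{\sigma}$. The three events on which these bounds hold are then merged with a union bound.

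First I would reduce the training dual smooth CE to the functional gradient. By Eq.~\eqref{cross_en}, taking $|h|\leq 1$ and using H\"older yields $\smce^{\sigma}(g_{\theta^{(t)}},\str)\leq \|\nabla_g L_n(g_{\theta^{(t)}})\|_{L^1(\nu_{\mathcal{X},n})}$, exactly as in Eq.~\eqref{l1_kernel_grad}. Jensen's inequality then gives $\min_t \smce^{\sigma}(g_{\theta^{(t)}},\str)^2\leq \frac{1}{T}\sum_{t=0}^{T-1}\|\nabla_g L_n(g_{\theta^{(t)}})\|_{L^1(\nu_{\mathcal{X},n})}^2$, and Theorem~\ref{thm_funcgrad_nitanda} upper bounds the right-hand side by $\frac{16\log 2}{\gamma^2 T}(m^{2\beta-1}/w+K)$ with probability at least $1-\delta/3$ over initialization. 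Taking square roots reproduces the first term of Eq.~\eqref{eq_upperbound_NN}.

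Second I would port Theorem~\ref{thm_erm_smooth_gen} from $\smce$ to $\smce^{\sigma}$. The remark following Theorem~\ref{thm_main_gen} already notes that the dual smooth CE admits the same convex linear-program reformulation, merely replacing the Lipschitz constraint $|\omega_i-\omega_j|\leq|v_i-v_j|$ by $|\omega_i-\omega_j|\leq\tfrac{1}{4}|g(X_i)-g(X_j)|$; the bounded-difference, exponential-moment, and chaining steps in Appendix~\ref{proof_main_gen} then carry over verbatim, yielding the same bound with $\smce^{\sigma}$ on both sides. I would apply this to the function class $\mathcal{F}$ of NN logits reachable by $T$ gradient-descent steps from $\theta^{(0)}$---which is precisely the class on which Theorem~\ref{app_rademacher_nitanda} is evaluated---and substitute the Rademacher estimate from that theorem. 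A union bound over the three failure events (functional gradient, uniform convergence, Rademacher estimate), each allotted a $\delta/3$ share and rebalanced to $\delta/5$ so that the three confidence contributions collect into $\tfrac{3+\sqrt{2}}{2}\sqrt{\log(5/\delta)/n}$, yields Eq.~\eqref{eq_upperbound_NN}. The convex-$\sigma$ variant Eq.~\eqref{eq_upperbound_NN2} follows identically by substituting the sharper Rademacher bound stated in the second half of Theorem~\ref{app_rademacher_nitanda}.

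The main obstacle is that Theorem~\ref{app_rademacher_nitanda} controls the empirical Rademacher complexity $\hat{\mathfrak{R}}_S(\mathcal{F})$, whereas Corollary~\ref{col_from_rad_cov} requires its expectation $\mathfrak{R}_{\cald,2n}(\mathcal{F})$. I would resolve this by noting that $\hat{\mathfrak{R}}_S$ has bounded differences of order $O(1/n)$ in each sample, so a single application of McDiarmid's inequality converts an in-probability empirical bound into an in-probability bound on the expected complexity at the price of an extra $O(\sqrt{\log(1/\delta)/n})$ additive term, which is absorbed into the universal constants $C_2, C_3, C_4$ and the confidence term. A secondary technical point is that the Lipschitz constraint in the dual linear program implicitly uses a range bound on $g_{\theta^{(t)}}$; this is automatic, since the step-size restriction in Theorem~\ref{thm_funcgrad_nitanda} forces $\|\theta^{(t)}-\theta^{(0)}\|_2=O(\sqrt{wT})$ and $\sigma$ is Lipschitz, so $\|g_{\theta^{(t)}}\|_\infty$ remains bounded uniformly along the trajectory.
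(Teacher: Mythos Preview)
The proposal is correct and follows essentially the same approach as the paper: the paper's proof also converts the empirical Rademacher bound of Theorem~\ref{app_rademacher_nitanda} to the expected one via the high-probability inequality $\mathfrak{R}_{\cald,2n}(\calf)\leq\hat{\mathfrak{R}}_S(\calf)+\sqrt{\log(2/\delta)/(4n)}$ and then substitutes into Theorem~\ref{thm_erm_smooth_gen} together with the functional-gradient bound of Theorem~\ref{thm_funcgrad_nitanda}. The only discrepancy is bookkeeping: the paper allocates $2\delta/5$ to the McDiarmid step and $3\delta/5$ to Theorem~\ref{thm_erm_smooth_gen} (whose confidence term is $(1+1/\sqrt{2})\sqrt{\log(3/\delta)/n}$), and it is precisely this split---not your ``$\delta/5$ each''---that produces the constant $\tfrac{3+\sqrt{2}}{2}$.
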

\begin{proof}
    We use the following upper bound from \citet{Mohri}: with probability $1-\delta/2$, we have
    \begin{align}\label{rademacher_high_prob}
        \mathfrak{R}_{\cald, n}(\calf) \leq \hat{\mathfrak{R}}_S(\calf) +\sqrt{\frac{\log\frac{2}{\delta}}{2n}}.
    \end{align}
Then from Theorem~\ref{thm_erm_smooth_gen} with probability $1-\delta$, we have
\begin{align}
\sup_{f \in \calf} |\smce(f, \cald) - \smce(f, \str)| 
&\leq \frac{C_2}{\sqrt{n}} + 4 \hat{\mathfrak{R}}_S(\calf) + 6\sqrt{\frac{\log\frac{3}{\delta}}{n}}.
\end{align}
We then substitute  Eq.~\eqref{app_rademacher_nitanda_result1} and \eqref{app_rademacher_nitanda_result2} and taking the union bound, we obtain the result.
\end{proof}

For completeness, we include the result on the classification error, adapted from Theorem 4 in \citet{nitanda18a}:
\begin{theorem}[Theorem 4 in \citet{nitanda18a}]\label{thm_misclassificationrate}
    Suppose Assumption 1 and the settings of Theorem~\ref{thm_funcgrad_nitanda} hold. Fix $\forall \epsilon>0$. Then, with probability at least \( 1 - 3\delta \) over a random initialization and random choice of training dataset \( \str \), we have
\begin{align}
&\min_{t \in \{0, \dots, T-1\}} \mathbb{P}_{(X,Y) \sim \nu}\left[ Y g_{\theta^{(t)}}(X) \leq 0 \right]
\leq C_5(1 + \exp(\epsilon)) C_{w,m,T} + 3\sqrt{\frac{\log(2/\delta)}{2n}} \notag \\
&\quad +  C_6 m^{1/2 - \beta} D_{w, m, T} (1 + K_1 + K_2) \cdot \sqrt{\frac{d}{n} \log\left( n(1 + K_1 + K_2) \left( \log(m/\delta) + D_{w, m, T}^2 \right) \right)},
\end{align}
where
\begin{align}
C_{w,m,T} &= \gamma^{-1} T^{-1/2} \left( m^{\beta-1/2} w^{-1/2} + \sqrt{K} \right)
\end{align}
and $C_5$ and $C_6$ are universal constants.

Moreover, when \( \sigma \) is convex and \( \sigma(0) = 0 \), we can avoid the dependence on the dimension \( d \).  
With probability at least \( 1 - 3\delta \) over a random initialization and random choice of training dataset \( \str \),
\begin{align}
\min_{t \in \{0, \dots, T-1\}} \mathbb{P}_{(X,Y) \sim \nu}\left[ Y g_{\theta^{(t)}}(X) \leq 0 \right]
&\leq C_5(1 + \exp(\epsilon)) C_{w,m,T} + 3\sqrt{\frac{\log(2/\delta)}{2n}} \notag \\
&\quad + C_7 K_1 m^{1/2 - \beta} \frac{1}{\epsilon \sqrt{n}} \left( D_{w,m,T} + \sqrt{ \frac{\log(Am/\delta)}{b} } \right).
\end{align}
where $C_7$ is a universal constant.
\end{theorem}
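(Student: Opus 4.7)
The plan is to combine three ingredients that have already been developed earlier in the paper: (i) a standard margin-based generalization bound for the $0$--$1$ loss, (ii) the functional-gradient convergence estimate of Theorem~\ref{thm_funcgrad_nitanda}, and (iii) the Rademacher complexity bound for the two-layer network class of Theorem~\ref{app_rademacher_nitanda}. The structure of the argument mirrors the kernel-boosting misclassification proof in Appendix~\ref{app_kernel_boost_proofs}.

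First, I would invoke the classical margin-based generalization inequality: for any fixed margin parameter $\rho>0$, with probability at least $1-\delta$ over the draw of $\str$ and uniformly over $g\in\calf$,
\begin{align}
\mathbb{P}_{(X,Y)\sim\nu}[Yg(X)\leq 0] \leq \mathbb{P}_{(X,Y)\sim\nu_n}[Yg(X)\leq \rho] + \frac{2}{\rho}\mathfrak{R}_{\cald,n}(\calf) + \sqrt{\frac{\log(2/\delta)}{2n}}.
\end{align}
I will eventually set $\rho=\epsilon$. Since the bound is uniform in $g\in\calf$, it remains valid when instantiated at the random data-dependent iterate $g_{\theta^{(t^\ast)}}$ that attains the minimum over $t\in\{0,\dots,T-1\}$.

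Second, I would control the empirical $\epsilon$-margin error using the $L^1$-norm of the functional gradient of the logistic loss. Because $|\partial_g\ell(g(x),y)|=1/(1+e^{yg(x)})\geq 1/(1+e^{\epsilon})$ whenever $yg(x)\leq \epsilon$, the indicator satisfies $\mathbbm{1}[yg(x)\leq \epsilon]\leq (1+e^{\epsilon})\,|\partial_g\ell(g(x),y)|$. Averaging over the training sample and using $\min_{t}x_t\leq\sqrt{(1/T)\sum_t x_t^2}$ along with Theorem~\ref{thm_funcgrad_nitanda}, I obtain, with probability at least $1-\delta$ over the initialization,
\begin{align}
\min_{t\in\{0,\dots,T-1\}}\mathbb{P}_{\nu_n}\!\left[Yg_{\theta^{(t)}}(X)\leq \epsilon\right]
\leq (1+e^{\epsilon})\,\sqrt{\frac{16\log 2}{\gamma^{2}T}\!\left(\frac{m^{2\beta-1}}{w}+K\right)}
= C_5(1+e^{\epsilon})\,C_{w,m,T},
\end{align}
absorbing $\sqrt{16\log 2}$ into the universal constant $C_5$.

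Third, I would substitute the Rademacher complexity estimate of Theorem~\ref{app_rademacher_nitanda}, which holds with probability at least $1-\delta$ over the initialization, into the $(2/\epsilon)\mathfrak{R}_{\cald,n}(\calf)$ term. Under the additional convexity assumption $\sigma(0)=0$, the dimension-free bound of the same theorem is used instead, giving the second form stated in the theorem. Applying a union bound over the three independent failure events---the margin generalization inequality (probability $\delta$ over $\str$), the global functional-gradient convergence (probability $\delta$ over $\theta^{(0)}$), and the Rademacher bound (probability $\delta$ over $\theta^{(0)}$)---yields the $1-3\delta$ high-probability guarantee.

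The main obstacle I anticipate is purely bookkeeping: the two high-probability statements about random initialization in Theorems~\ref{thm_funcgrad_nitanda} and~\ref{app_rademacher_nitanda} must be applied to the \emph{same} realization of $\theta^{(0)}$, so one must verify that a single union bound over initialization gives $1-2\delta$, and that the resulting Rademacher bound controls $\mathfrak{R}_{\cald,n}(\calf)$ restricted to the trajectory $\{g_{\theta^{(t)}}\}_{t=0}^{T-1}$ which remains inside the norm ball characterized by $D_{w,m,T}=\sqrt{wT}$; this is precisely what Theorem~\ref{app_rademacher_nitanda} is designed to deliver, so the argument goes through with the constants displayed in the statement.
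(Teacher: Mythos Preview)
Your proposal is correct and follows exactly the approach the paper uses for the analogous kernel-boosting misclassification bound in Appendix~\ref{app_kernel_boost_proofs}: margin-based generalization, the pointwise inequality $\mathbbm{1}[yg(x)\leq\rho]\leq(1+e^{\rho})|\partial_g\ell(g(x),y)|$, and substitution of the Rademacher estimate. The paper itself does not give an independent proof of this statement---it is imported from \citet{nitanda18a}---so your reconstruction is precisely the intended argument.
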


\subsection{Formal statement for Corollary~\ref{col_twoNN_complexity}}
We restate Corollary~\ref{col_twoNN_complexity} with formal assumptions 
\begin{corollary}\label{col_twoNN_complexity_formal}
Suppose Assumption 1 and the settings of Theorem~\ref{thm_funcgrad_nitanda} hold.  If for any $\epsilon > 0$, the hyperparameters satisfy one of the following:\\
(i) $\beta \in [0, 1)$, $m=\Omega(\gamma^{\frac{-2}{1 - \beta}} \epsilon^{\frac{-1}{1 - \beta}})$, $T = \Omega(\gamma^{-2} \epsilon^{-2})$, $w = \Theta(\gamma^{-2} \epsilon^{-2} T^{-1} m^{2\beta - 1})$, $n = \tilde{\Omega}(\gamma^{-2} \epsilon^{-4})$,\\
(ii) $\beta = 0$, $m = \Theta\left(\gamma^{-2} \epsilon^{-3/2} \log(1/\epsilon)\right)$, $T = \Theta\left(\gamma^{-2} \epsilon^{-1} \log^2(1/\epsilon)\right)$, $w = \Theta(m^{-1})$, $n = \tilde{\Omega}(\epsilon^{-2})$,
then with probability at least $1 - \delta$, gradient descent  with the stepsize $w$ finds a parameter $\theta^{(t)}$ satisfying $\smce(g_{\theta^{(t)}},\str)\leq\epsilon$ and $\mathbb{P}_{(X, Y) \sim \nu} \left[ Yg_{\theta^{(t)}}(X) \leq 0 \right] \leq \epsilon$
within $T$ iterations.
\end{corollary}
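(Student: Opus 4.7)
The plan is to combine three ingredients that are already in place: (a) the functional-gradient bound of Theorem~\ref{thm_funcgrad_nitanda}, which, through the inequality $\smce^{\sigma}(g,\str)\leq \|\nabla_g\len(g(X),Y)\|_{L^1(\nu_{\mathcal{X},n})}$ (the empirical counterpart of Eq.~\eqref{l1_kernel_grad}) together with Jensen, controls $\min_t \smce^{\sigma}(g_{\theta^{(t)}},\str)$ as in Eq.~\eqref{eq_formal}; (b) the uniform convergence bound of Theorem~\ref{thm_erm_smooth_gen} already instantiated for two-layer networks as Corollary~\ref{cor_smoothce}, built on the Rademacher estimate of Theorem~\ref{app_rademacher_nitanda}; and (c) the misclassification bound of Theorem~\ref{thm_misclassificationrate}. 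The proof reduces to substituting each declared hyperparameter scaling into the bounds of Corollary~\ref{cor_smoothce} and Theorem~\ref{thm_misclassificationrate}, checking that every additive term is of order $\epsilon$, and taking a union bound over the three high-probability events (random initialization for the functional-gradient monotonicity, random initialization for the Rademacher estimate, and random draw of $\str$ for the generalization gap) with $\delta$ appropriately reweighted, exactly as in the proof of Corollary~\ref{cor_smoothce}.

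For case (i), the optimization term in Eq.~\eqref{eq_upperbound_NN} is $C_1\gamma^{-1}T^{-1/2}\sqrt{m^{2\beta-1}/w+K}$. Substituting $w=\Theta(\gamma^{-2}\epsilon^{-2}T^{-1}m^{2\beta-1})$ turns $m^{2\beta-1}/w$ into $\Theta(\gamma^{2}\epsilon^{2}T)$, which dominates $K$ once $T=\Omega(\gamma^{-2}\epsilon^{-2})$, so the whole term is $\mathcal{O}(\epsilon)$. The Rademacher term scales, up to polylogarithms, as $m^{1/2-\beta}\sqrt{wT}/\sqrt{n}$; the declared $w$ gives $\sqrt{wT}=\gamma^{-1}\epsilon^{-1}m^{\beta-1/2}$, so the product becomes $\gamma^{-1}\epsilon^{-1}/\sqrt{n}$, which is $\mathcal{O}(\epsilon)$ whenever $n=\tilde\Omega(\gamma^{-2}\epsilon^{-4})$. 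The width condition $m=\Omega(\gamma^{-2/(1-\beta)}\epsilon^{-1/(1-\beta)})$ is precisely what is needed so that the Theorem~\ref{thm_funcgrad_nitanda} constraints $w\leq\min\{m^{-\beta},4m^{2\beta-1}/(K_1^2+K_2)\}$ and $T\leq \lfloor m\gamma^{2}/(32wK_2^2\log 2)\rfloor$ remain satisfiable for the chosen $(w,T)$. The classification-error bound of Theorem~\ref{thm_misclassificationrate} follows the same algebra: with the margin parameter there fixed to a universal constant (so that $1+\exp(\cdot)$ is bounded), $C_{w,m,T}$ has the same form as the optimization term and inherits the $\mathcal{O}(\epsilon)$ scaling, while the Rademacher piece is identical to the one above.

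For case (ii), one invokes the sharper convex-activation bound of Eq.~\eqref{eq_upperbound_NN2} (and its counterpart in Theorem~\ref{thm_misclassificationrate}). With $\beta=0$ and $w=\Theta(m^{-1})$, the ratio $m^{2\beta-1}/w$ collapses to $1$, so the optimization term is $\mathcal{O}(\gamma^{-1}T^{-1/2})$ and the Rademacher term simplifies to $K_1 m^{1/2}(D_{w,m,T}+\sqrt{\log(Am/\delta)/b})/\sqrt{n}=\mathcal{O}(\sqrt{T/n}+\sqrt{m\log m/n})$, since $D_{w,m,T}=\sqrt{T/m}$. The declared widths and iteration count are chosen so that the second summand, $\sqrt{m\log m/n}$, is driven down by the logarithmic-in-$m$ sample size $n=\tilde\Omega(\epsilon^{-2})$, while $m=\Theta(\gamma^{-2}\epsilon^{-3/2}\log(1/\epsilon))$ ensures the width condition $m\geq (16K_1^{2}/\gamma^{2})\log(2n/\delta)$ of Theorem~\ref{thm_funcgrad_nitanda} and the admissible iteration budget $T\leq\lfloor m\gamma^{2}/(32wK_2^{2}\log 2)\rfloor=\Theta(m^{2}\gamma^{2})$ is consistent with the stated $T=\Theta(\gamma^{-2}\epsilon^{-1}\log^{2}(1/\epsilon))$. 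The improvement over case (i) in the iteration complexity is bought by the larger width and by the removal of the $\sqrt{d}$ factor available for convex, zero-at-origin activations.

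The hard part is not conceptual but book-keeping: one has to carry $K$, $K_1$, $K_2$, $A$, $b$, and several polylogarithmic factors through both bounds, verify that the $\min_{t}$ (rather than average) statement in Eq.~\eqref{eq_formal} still implies the same event controlling the classification error in Theorem~\ref{thm_misclassificationrate} at the \emph{same} iterate $t$ (which follows because both bounds are derived from the same averaged functional-gradient inequality of Theorem~\ref{thm_funcgrad_nitanda}, so the minimizing index may be chosen to realize both bounds simultaneously), and check that all constraints $w\leq w_{\max}(m,\beta)$ and $T\leq T_{\max}(m,w)$ from Theorem~\ref{thm_funcgrad_nitanda} remain compatible with the generalization requirements on $n$. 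Once these consistency checks are performed and the failure probabilities are split (for instance $\delta\mapsto \delta/5$ across the five high-probability events entering Corollary~\ref{cor_smoothce} and Theorem~\ref{thm_misclassificationrate}), the corollary follows.
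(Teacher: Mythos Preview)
Your proposal is correct and follows essentially the same approach as the paper: substitute the declared hyperparameter scalings into the smooth CE bound (Corollary~\ref{cor_smoothce}) and the misclassification bound (Theorem~\ref{thm_misclassificationrate}), observe that the two bounds share the same optimization term $C_{w,m,T}$ and the same Rademacher/complexity term, and conclude that any scaling making the misclassification bound $\mathcal{O}(\epsilon)$ automatically makes the smooth CE bound $\mathcal{O}(\epsilon)$. The paper's own proof is in fact much terser than yours---it simply cites Corollary~3 of \citet{nitanda18a} for the misclassification guarantee and then observes that the complexity terms in Theorem~\ref{thm_misclassificationrate} and Corollary~\ref{cor_smoothce} are of the same order, so the same hyperparameter prescriptions carry over verbatim; your explicit term-by-term verification and your remark that a common minimizing iterate exists (because both empirical quantities are bounded by the same $\|\nabla_g\len\|_{L^1(\nu_{\mathcal X,n})}$) make explicit what the paper leaves implicit.
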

\begin{proof}
The guarantee for the misclassification rate is adapted from Corollary 3 in \citet{nitanda18a}. As for the smooth CE, since the complexity terms in the misclassification bound (Theorem~\ref{thm_misclassificationrate}) and the smooth CE bound (Corollary~\ref{cor_smoothce}) are of the same order, we also obtain a corresponding guarantee for the smooth CE.
\end{proof}

\section{Relationships of the proper scoring rule and functional gradient}\label{app_functional_gradient_relation_shipp}
In the main paper, we discussed the relationships involving the functional gradient only over the training dataset. Here, we provide the corresponding results for the population smooth CE.

We begin by connecting the post-processing gap with the functional gradient. First, we recall the definition of the functional gradient:

\begin{definition}
Let $\mathcal{H}$ be a Hilbert space and $h$ be a function on $\mathcal{H}$. 
For $\xi \in \mathcal{H}$, we say that $h$ is \textit{Fréchet differentiable} at $\xi$ in $\mathcal{H}$ if there exists an element $\nabla_{\xi} h(\xi) \in \mathcal{H}$ such that
\begin{align}
h(\zeta) = h(\xi) + \langle \nabla_{\xi} h(\xi), \zeta - \xi \rangle_{\mathcal{H}} + o(\|\xi - \zeta\|_{\mathcal{H}}).
\end{align}
Moreover, for simplicity, we refer to $\nabla_{\xi} h(\xi)$ as the \textit{functional gradient}. 
\end{definition}
See \citet{luenberger1997optimization} for more details.

Next, we focus on the case of the squared loss. To simplify the notation, we express $r(f(x))=f(x)+h(f(x))$ where $h$ is the post-processing function in the definition of the smooth CE. Then we set $h \to \lsq$, $\zeta \to f(x)$, and $\xi \to r(f(x))$ in the definition of the functional gradient. This results in the following relation.
\begin{align}\label{eq_square_func_grad}
    &\lsq(f(x),y)\\ 
    &= \lsq(r(f(x)),y) - \nabla_f \lsq(f(x),y)\cdot (r(f(x)) - f(x)) - \frac{1}{2}\|r(f(x)) - f(x)\|^2,
\end{align}
where $\nabla_f \lsq(f(x),y)$ denotes the partial derivative of $\lsq(f, y)$ with respect to $f$. Thus, we observe that $\nabla_f \lsq(f(x),y)$ corresponds to the functional gradient. Therefore,
\begin{align}
    \smce(f,\cald)^2 \leq \pgap(f,\cald) &= \sup_h \Ex\left[-\nabla_f \lsq(f(X),Y)\cdot h(f(X)) - \frac{1}{2}\|h(f(X))\|^2\right] \\
    & \leq \sup_h \Ex\left[-\nabla_f \lsq(f(X),Y)\cdot h(f(X))\right],
\end{align}
where the supremum is taken over all 1-Lipschitz functions.

Next, for the cross-entropy loss, recall that $\nabla_s^2 \lpsi(s, y) = p(1 - p)$, where $p = \pred_\psi(s) = e^s / (1 + e^s)$. This implies that the cross-entropy loss is $1/4$-Lipschitz with respect to $s$. Therefore, by Taylor's theorem, there exists $\tilde p \in (0, 1/4]$ such that
\begin{align}\label{eq_cross_func_grad}
    \lpsi(g(x),y) = \lpsi(r(g(x)),y) - \nabla_g \lpsi(g(x),y)\cdot (r(g(x)) - g(x)) - \frac{1}{2}\tilde p(1 - \tilde p)\|r(g(x)) - g(x)\|^2.
\end{align}
Thus, we obtain
\begin{align}
    2\smce^{(\sigma)}(g,\cald)^2 &\leq \pgap^{(\psi,1/4)}(g,\cald) \\
    &= \sup_h \Ex\left[-\nabla_g \lpsi(g(X),Y)\cdot h(g(X)) - \frac{1}{2}\tilde p(1 - \tilde p)\|h(g(X))\|^2\right] \\
    & \leq \sup_h \Ex\left[-\nabla_g \lpsi(g(X),Y)\cdot h(g(X))\right],
\end{align}
where the supremum is taken over all $1/4$-Lipschitz functions.

From Eqs.~\eqref{eq_square_func_grad} and \eqref{eq_cross_func_grad}, we obtain upper bounds on the training smooth CE in terms of the functional gradient evaluated on the training dataset:
\begin{align}
    \smce(f,\str)^2 \leq \sup_\eta \frac{1}{n} \sum_{i=1}^n \left[-\nabla_f \lsq(f(X_i), Y_i)\cdot \eta(f(X_i))\right],
\end{align}
and
\begin{align}\label{eq_training_smce_corss}
    2\smce^{(\psi,1/4)}(g,\str)^2 \leq \sup_\eta \frac{1}{n} \sum_{i=1}^n \left[-\nabla_g \lpsi(g(X_i), Y_i)\cdot \eta(g(X_i))\right].
\end{align}

Finally, we remark on the case of general proper losses. As discussed in Appendix~\ref{sec_proper_losses}, the post-processing gap can be defined more generally. If the loss is Fréchet differentiable, then similar relationships to those derived above for the cross-entropy loss can be obtained.

\section{Experiment}\label{sec_numerical}
In this section, we numerically validate our theoretical findings regarding smooth CE for GBTs and two-layer neural networks presented in Section~\ref{sec_grad_examples}.

We evaluate the behavior of several metrics on both training and test datasets, including cross-entropy loss, accuracy, functional gradient norm (denoted as "Func Grad" in the experiments), and binning ECE, MMCE, and smooth CE. These evaluations are conducted by varying the number of iterations $T$ and the training dataset size $n$. Each experiment is repeated 10 times with different random seeds, and we report the mean and standard deviation.

For the binning ECE, we use 10 equally spaced bins. For MMCE, we use the Laplacian kernel with a bandwidth set to 1. Definitions and additional details of the calibration metrics are provided in Appendix~\ref{app_calibration}.

\subsection{Gradient Boosting Trees}\label{sec_boosting_tree}
We conduct numerical experiments on the $L_1$ Gradient Boosting Tree with cross-entropy loss, as described in Algorithm~\ref{alg:gradient_boosting}. The main objective is to investigate how the training and test smooth CE behave on both toy and real datasets, as predicted by Theorem~\ref{thm_grad_boosting_str} and Corollary~\ref{col_grad_boosting_test}.

The toy dataset consists of two classes with equal probability, i.e.,
\[
P(Y = 0) = P(Y = 1) = \frac{1}{2}.
\]
Given the class label \( Y \in \{0, 1\} \), the conditional distribution of \( X \in \mathbb{R}^2 \) is:
\begin{align}
X \mid Y = 0 \sim N \left(
\begin{bmatrix} -1.3 \\ -1 \end{bmatrix},
\begin{bmatrix} (1.2 \cdot 1.3)^2 & 0 \\
0 & 1.2^2 \end{bmatrix}
\right),\\
X \mid Y = 1 \sim N\left(
\begin{bmatrix} 1 \\ 1.3 \end{bmatrix},
\begin{bmatrix} 1.2^2 & 0 \\
0 & (1.2 \cdot 1.3)^2 \end{bmatrix}
\right).\label{toydata_separable}
\end{align}
These distributions have overlapping supports for $Y=1$ and $Y=0$.

First, we set $m=3$, which satisfies the condition $m \geq d$, and present the results in Figure~\ref{fig:comparison_boosting}. In Figure~\ref{fig:boosting_with_T}, we fix the training dataset size ($n=200$) and increase the number of boosting iterations $T$. The left panel shows the behavior of the metrics on the training dataset, the middle panel shows the same metrics on the test dataset, and the right panel reports the generalization gaps for both log loss and smooth CE. From the left panel, we observe that both the cross-entropy loss and smooth CE decrease as $T$ increases, consistent with Theorem~\ref{thm_grad_boosting_str}. In contrast, the middle panel shows that neither metric necessarily decreases on the test dataset as $T$ increases. This is consistent with the well-known overfitting behavior of boosting, and demonstrates that smooth CE exhibits overfitting trends similar to those of the cross-entropy loss, as predicted by Corollary~\ref{col_grad_boosting_test}. The right panel confirms this overfitting phenomenon for both cross-entropy loss and smooth CE.

Next, we investigate how these metrics behave as the training sample size $n$ increases. Motivated by Corollary~\ref{col_grad_boosting_test}, we scale the number of boosting iterations $T$ proportionally to $\sqrt{n}$. The results are shown in Figure~\ref{fig:boosting_with_N}. We observe that the test calibration metrics decrease monotonically as $n$ increases, indicating improved generalization. These results show that GBTs, when trained with sufficient samples, can achieve both high accuracy and low smooth CE.

\begin{figure}[htbp]
    \centering
    \subfigure[Increasing the number of iterations $T$]{
        \includegraphics[width=1.0\linewidth]{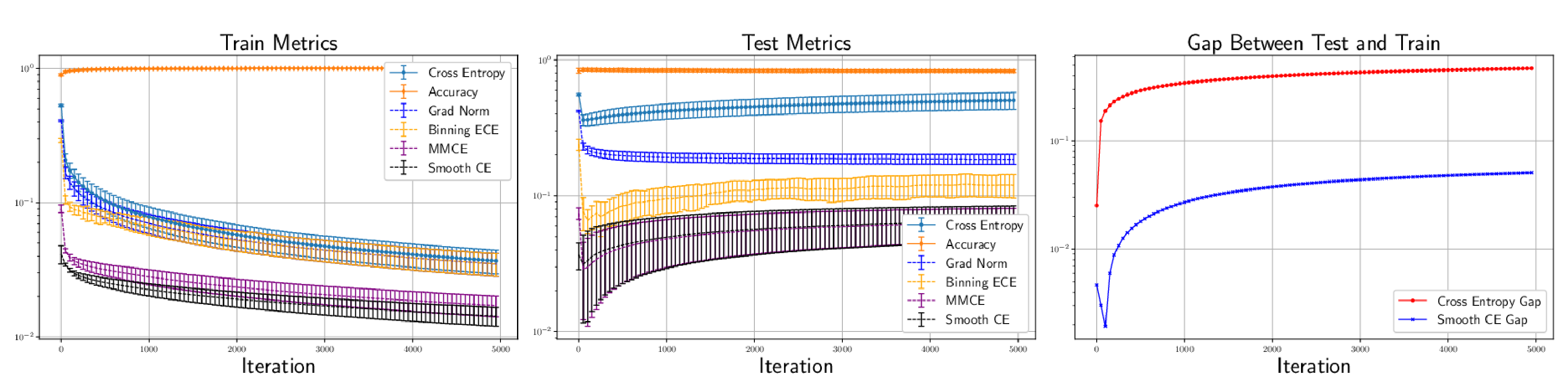}
        \label{fig:boosting_with_T}
    }\\
    \subfigure[Increasing the number of training samples $n$]{
        \includegraphics[width=1.0\linewidth]{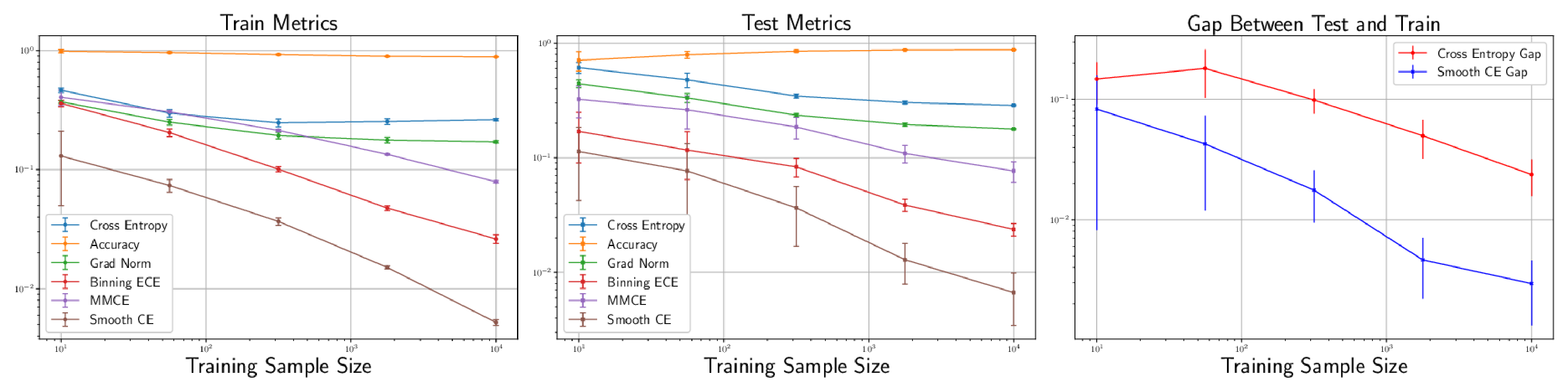}
        \label{fig:boosting_with_N}
    }
    \caption{GBT experiments on the toy dataset defined in Eq.~\eqref{toydata_separable} with $m \geq d$}
    \label{fig:comparison_boosting}
\end{figure}

Next, we set $m=1$, which does not satisfy the condition $m\geq d$. The results are shown in Figure~\ref{fig:comparison_boostingm1}. Despite the violation of the assumption, we can see that the behavior of the calibration metrics seems similar to that of $m\geq d$. This indicates that there is a possibility that we might relax the assumptions of our theoretical analysis.

\begin{figure}[htbp]
    \centering
    \subfigure[Increasing the number of iterations $T$]{
        \includegraphics[width=1.0\linewidth]{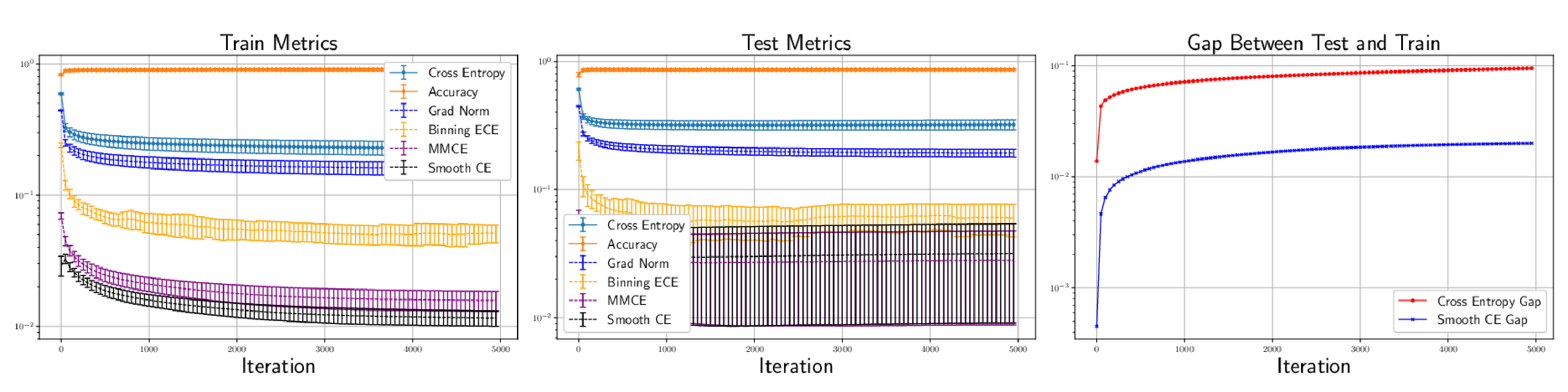}
        \label{fig:boosting_with_Tm1}
    }\\
    \subfigure[Increasing the number of training samples $n$]{
        \includegraphics[width=1.0\linewidth]{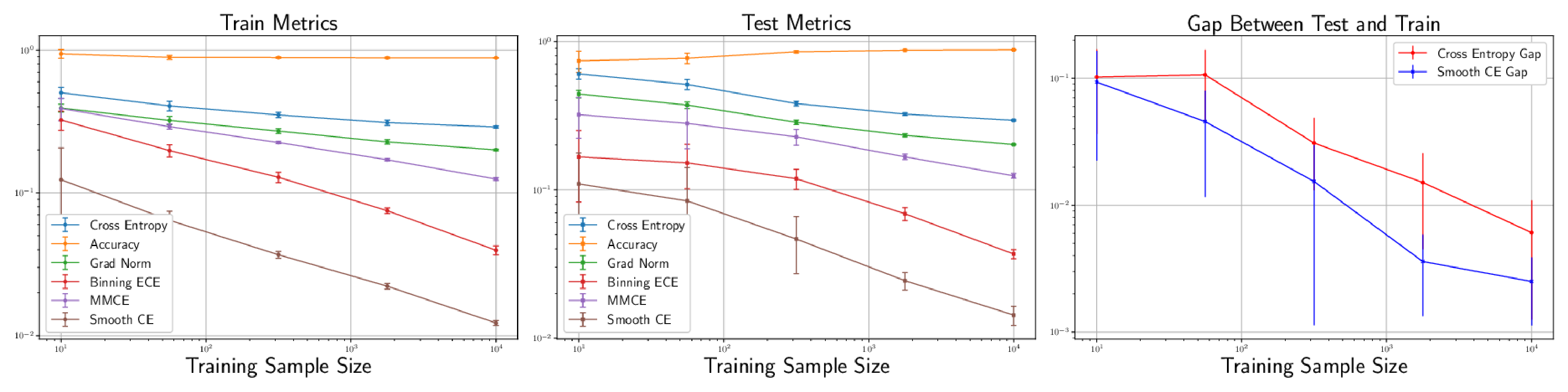}
        \label{fig:boosting_with_ Nm1}
    }
    \caption{GBT experiments on the toy dataset defined in Eq.~\eqref{toydata_separable} with $m < d$
}
    \label{fig:comparison_boostingm1}
\end{figure}

Next, using the UCI Breast Cancer dataset, we observed how the metrics behave on the training and test datasets by increasing the number of iterations $T$ while keeping the training dataset size fixed. The results are shown in Figure~\ref{fig:GBTUCI}. Since $d = 30$ in this dataset, we considered two settings:  
(i) $m = 30$, which satisfies the assumption $m \geq d$, and  
(ii) $m = 3$, which violates the assumption.

We found that the results closely resemble those of the toy dataset in Figure~\ref{fig:comparison_boosting}. Even when the assumption is violated, the training smooth CE decreases monotonically as the number of iterations increases. Combined with the results in Figure~\ref{fig:comparison_boostingm1}, this observation motivates us to consider relaxing the depth assumption in our theoretical analysis. We leave this for future work.

\begin{figure}[htbp]
    \centering
    \subfigure[Increasing the number of iterations $T$ ($m = 30$, $m \geq d$)] {
        \includegraphics[width=1.0\linewidth]{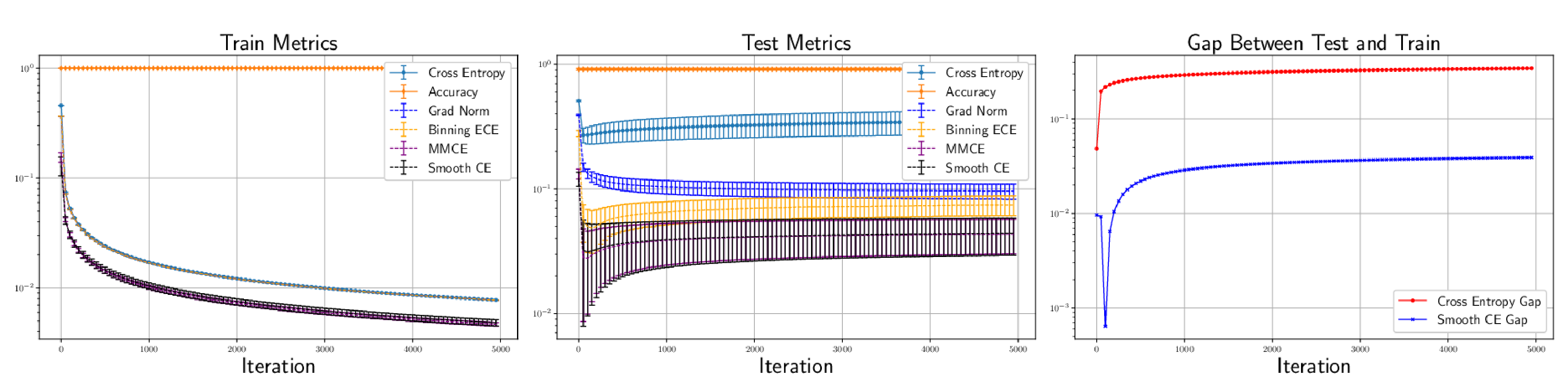}
        \label{fig:boosting_UCIm30}
    }\\
    \subfigure[Increasing the number of iterations $T$ ($m = 3$, $m < d$)]{
        \includegraphics[width=1.0\linewidth]{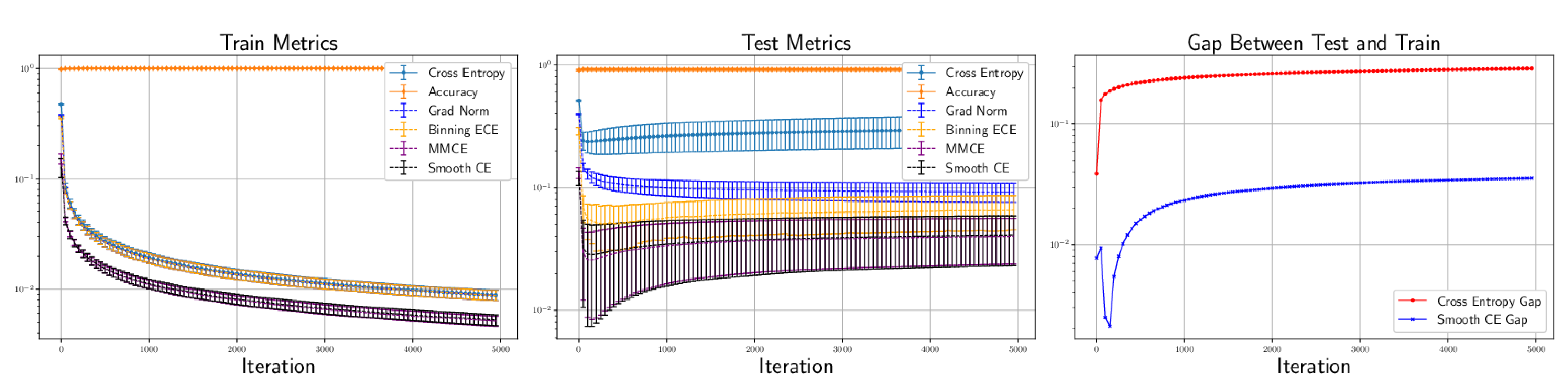}
        \label{fig:boosting_UCIm3}
    }
    \caption{GBT experiments for UCI breast cancer dataset $d=30$}
    \label{fig:GBTUCI}
\end{figure}

Finally, we also remark that the binning ECE, smooth CE, and MMCE exhibit similar behavior across all experimental settings. This observation is consistent with the discussion in Appendix~\ref{app_calibration}.

\subsection{Numerical experiments over two-layer neural network}\label{sec_experiments_nn}
We conducted similar experiments using a two-layer neural network corresponding to Section~\ref{sec_nn}. We prepared the toy dataset and designed the two-layer neural network so that Assumption 1 in Appendix~\ref{sec_appendix_formal} is satisfied. We used the sigmoid activation function, initialized the parameters independently from the standard Gaussian distribution, and set the number of hidden units to be even to satisfy Assumption (A3). The coefficients $a_r$ were also set according to the specification in Assumption (A3). Unless otherwise stated, we used 300 hidden units. For optimization, we used full-batch gradient descent (GD) with a fixed step size of $w=0.01$.

First, we performed experiments on the same toy dataset used for GBTs and evaluated the training and test metrics by increasing the number of GD iterations while fixing the training dataset size ($n=200$). Figure~\ref{fig:margin} reports the same set of metrics as in the GBT experiments (Figure~\ref{fig:comparison_boosting}). We observed that the smooth CE decreases monotonically, consistent with the upper bound behavior predicted by Eq.~\eqref{grad_monotonic}. However, from the middle and right panels of Figure~\ref{fig:margin}, we observe that the test smooth CE does not decrease as the number of training iterations increases. We also confirmed that the generalization gap of the cross-entropy loss increases as $T$ increases. According to Eq.~\eqref{eq_upperbound_NN}, the complexity term grows with the number of iterations, which may explain this phenomenon. On the other hand, we did not observe a clear increase in the generalization gap of the smooth CE as $T$ increased.

Next, we varied the training dataset size while setting the number of iterations to $T = \calo(\sqrt{n})$, and evaluated various metrics. The results are shown in Figure~\ref{fig:smooth_ECE}. From the middle and right panels, we observe that both the test smooth CE and the generalization gap decrease monotonically as $n$ increases. This is consistent with the theoretical discussion provided in Appendix~\ref{sec_appendix_formal}.

\begin{figure}[htbp]
    \centering
    \subfigure[Increasing the number of iterations $T$]{
        \includegraphics[width=1.0\linewidth]{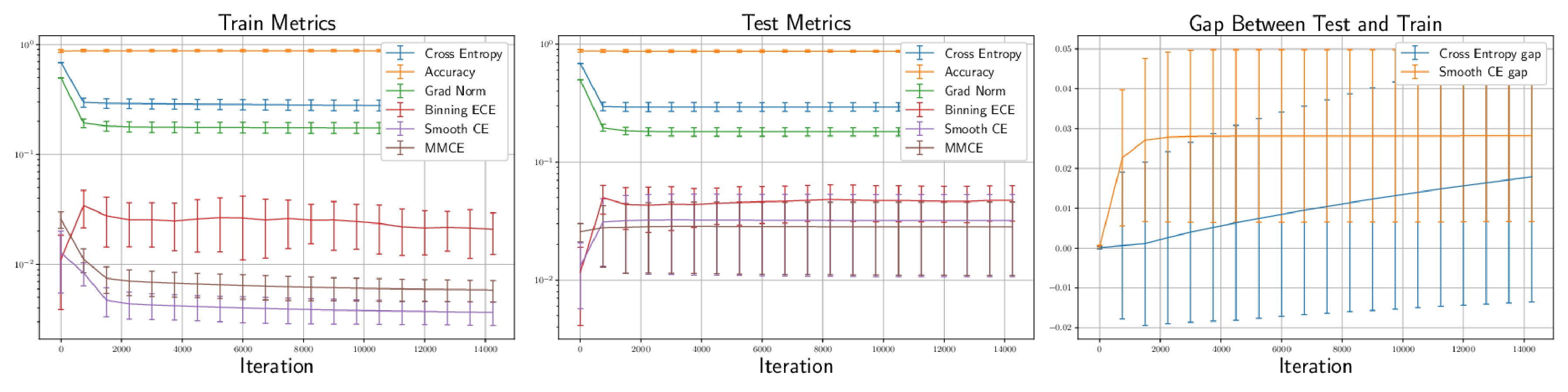}
        \label{fig:margin}
    }\\
    \subfigure[Increasing the number of training samples $n$]{
        \includegraphics[width=1.0\linewidth]{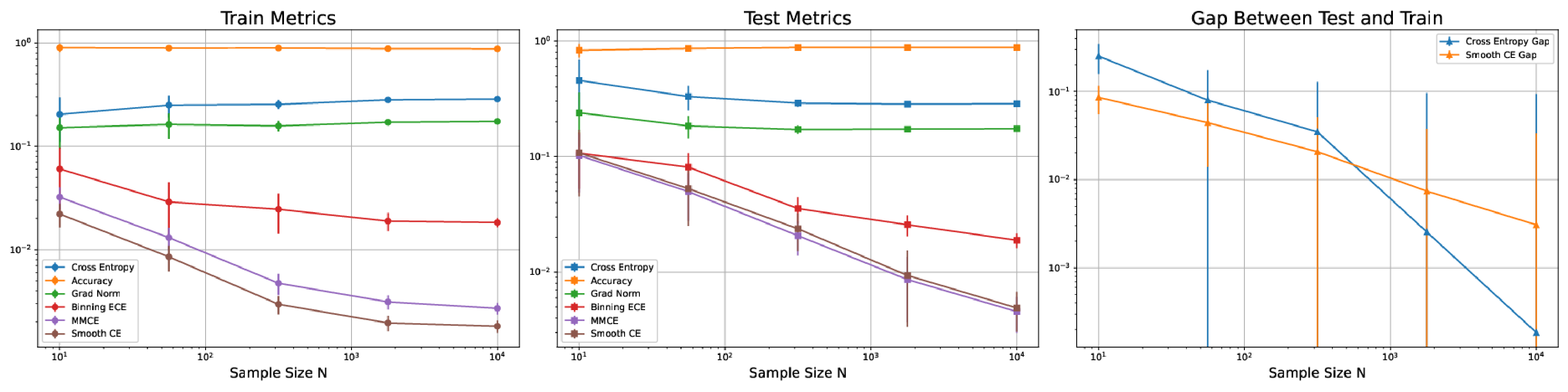}
        \label{fig:smooth_ECE}
    }
    \caption{Two-layer neural network using toydata defined in Eq.~\eqref{toydata_separable}}
    \label{fig:comparison}
\end{figure}

The experiments in Figure~\ref{fig:comparison} are based on a toy dataset where the separability condition is easily satisfied. Next, we consider a toy dataset in which the separability condition is more difficult to satisfy, constructed as follows. Assume that $n$ is even and define \( n_0 = n_1 = \frac{n}{2} \).  
Let \( \{Z_i\}_{i=1}^{n_0} \sim \mathcal{N}(0, \sigma^2 I_2) \) with \( \sigma = 0.05 \) be shared random base vectors. Define two classes:
\begin{align}\label{toy_eq_unseparable}
X_i^{(0)} &= Z_i + 
\begin{bmatrix} 0.1 \\ 0.1 \end{bmatrix} + \varepsilon^{(0)}_i,
\quad \varepsilon^{(0)}_i \sim \mathcal{N}(0, \tau^2 I_2), \\
X_i^{(1)} &= -Z_i + 
\begin{bmatrix} -0.1 \\ -0.1 \end{bmatrix} + \varepsilon^{(1)}_i,
\quad \varepsilon^{(1)}_i \sim \mathcal{N}(0, \tau^2 I_2),
\end{align}
with \( \tau = 0.01 \). Each $X_i^{(0)}$ is labeled $Y = 0$, and each $X_i^{(1)}$ is labeled $Y = 1$.
Due to the symmetric structure of the data, this setting makes the margin assumption more difficult to satisfy than in the previous toy dataset. The results are shown in Figure~\ref{fig:comparison_unsep}, where we increase the training dataset size and evaluate the relevant statistics on both training and test datasets. Even under this more challenging setting, we observe that the test smooth CE decreases monotonically.

\begin{figure}[htbp]
    \centering
        \includegraphics[width=1.0\linewidth]{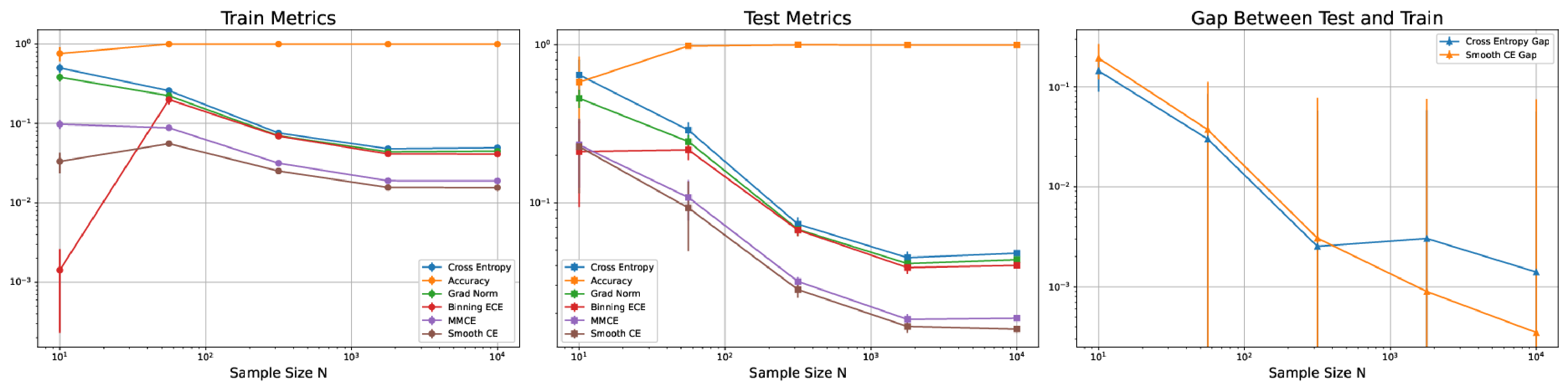}
    \caption{Two-layer neural network using toy dataset defined in Eq.~\eqref{toy_eq_unseparable}}
    \label{fig:comparison_unsep}
\end{figure}

Next, we used the UCI Breast Cancer dataset and evaluated various statistics on both the training and test datasets, following the same procedure as in Figure~\ref{fig:margin}. Here, we varied the number of hidden units from a relatively small size (10) to a larger value ($100$). The results are shown in Figure~\ref{fig:uci_unit_comparison}. 

Consistent with the results in Figure~\ref{fig:margin}, we observed that the training statistics decrease monotonically in both small and large hidden unit settings. However, the test statistics did not show any corresponding improvement.

\begin{figure}[htbp]
    \centering
    \subfigure[Increasing the number of iterations $T$ when the hidden unit size is 10]{
        \includegraphics[width=1.0\linewidth]{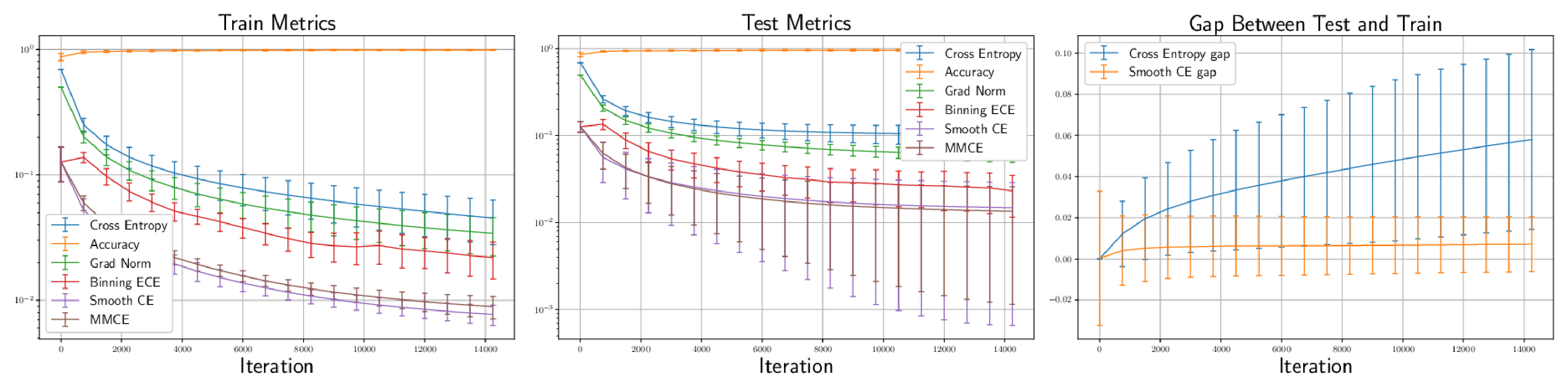}
        \label{fig:ucinn_unit_10}
    }\\
    \subfigure[Increasing the number of iterations $T$ when the hidden unit size is 100]{
        \includegraphics[width=1.0\linewidth]{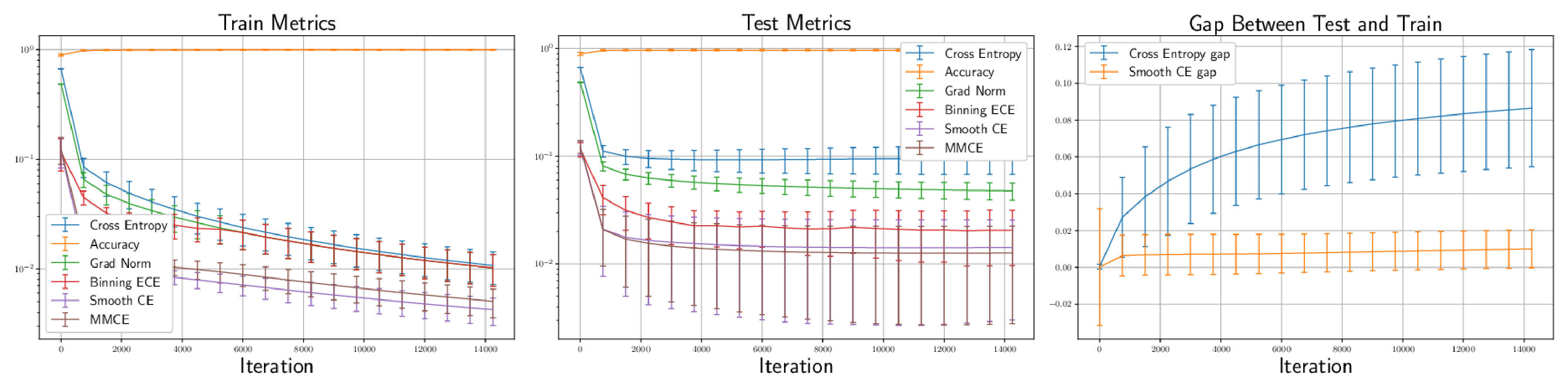}
        \label{fig:ucinn_unit_100}
    }
    \caption{UCI breast cancer dataset $d=30$}
    \label{fig:uci_unit_comparison}
\end{figure}

\end{document}